\newtheorem{theorem}{Theorem}
\newtheorem{lemma}{Lemma}
\newtheorem{proposition}{Proposition}
\newtheorem{definition}{Definition}
\newtheorem{corollary}{Corollary}
\newtheorem{assumption}{Assumption}
\newtheorem{example}{Example}
\newcommand{\COMM}[2]{{
\begin{CJK}{UTF8}{ipxm}
\ifthenelse{\equal{#1}{AN}}{\color{red}}{
\ifthenelse{\equal{#1}{DW}}{\color{blue}}{
\ifthenelse{\equal{#1}{TS}}{\color{green}}}}
[#1: #2]
\end{CJK}
}}
\newcommand{\defeq}{\overset{\mathrm{def}}{=}}
\def\pd<#1>{\left\langle #1 \right\rangle}
\def\floor[#1]{\left\lfloor #1 \right\rfloor}
\def\ceil[#1]{\left\lceil #1 \right\rceil}
\newcommand{\rd}{\mathrm{d}}
\newcommand{\bone}{\mathbbm{1}}
\newcommand{\bE}{\mathbb{E}}
\newcommand{\bP}{\mathbb{P}}
\newcommand{\bR}{\mathbb{R}}
\newcommand{\cC}{\mathcal{C}}
\newcommand{\cD}{\mathcal{D}}
\newcommand{\cF}{\mathcal{F}}
\newcommand{\cL}{\mathcal{L}}
\newcommand{\cN}{\mathcal{N}}
\newcommand{\cP}{\mathcal{P}}
\newcommand{\cX}{\mathcal{X}}
\newcommand{\cY}{\mathcal{Y}}
\newcommand{\cZ}{\mathcal{Z}}
\newcommand{\KL}{\mathrm{KL}}
\newcommand{\vx}{\mathbf{x}}
\newcommand{\rademacher}{\hat{\Re}}
\title{Particle Dual Averaging: Optimization of Mean Field\\ 
\!\!\!Neural Network with Global Convergence Rate Analysis\!\!\!}
\author{Atsushi Nitanda$^{1\dag}$, Denny Wu$^{2\ddag}$, Taiji Suzuki$^{3\star}$
\vspace{2mm}\\
\normalsize{\textit{$^1$Kyushu Institute of Technology and RIKEN Center for Advanced Intelligence Project}} \\
\normalsize{\textit{$^2$University of Toronto and Vector Institute for Artificial Intelligence}} \\
\normalsize{\textit{$^3$The University of Tokyo and RIKEN Center for Advanced Intelligence Project}} \\
\small{Email: $^\dag$nitanda@ai.kyutech.ac.jp, $^\ddag$dennywu@cs.toronto.edu, $^\star$taiji@mist.i.u-tokyo.ac.jp}} 
\date{}
\begin{document}

\maketitle

\begin{abstract}
We propose the {\it particle dual averaging} (PDA) method, which generalizes the dual averaging method in convex optimization to the optimization over probability distributions with quantitative runtime guarantee. The algorithm consists of an inner loop and outer loop: the inner loop utilizes the Langevin algorithm to approximately solve for a stationary distribution, which is then optimized in the outer loop. The method can thus be interpreted as an extension of the Langevin algorithm to naturally handle \textit{nonlinear} functional on the probability space. An important application of the proposed method is the optimization of neural network in the \textit{mean field} regime, which is theoretically attractive due to the presence of nonlinear feature learning, but quantitative convergence rate can be challenging to obtain. By adapting finite-dimensional convex optimization theory into the space of measures, we analyze PDA in regularized empirical / expected risk minimization, and establish \textit{quantitative} global convergence in learning two-layer mean field neural networks under more general settings. Our theoretical results are supported by numerical simulations on neural networks with reasonable size. 
\end{abstract}

\section{Introduction}
Gradient-based optimization can achieve vanishing training error on neural networks, despite the apparent non-convex landscape. Among various works that explains the global convergence, one common ingredient is to utilize overparameterization to translate the training dynamics into function spaces, and then exploit the convexity of the loss function with respect to the function. Such endeavors usually consider models in one of the two categories: the \textit{mean field} regime or the \textit{kernel} regime. 

On one hand, analysis in the kernel (lazy) regime
connects gradient descent on wide neural network to kernel regression with respect to the neural tangent kernel \citep{jacot2018neural}, which leads to global convergence at linear rate \citep{du2018gradient,allen2019convergence,zou2020gradient}. However, key to the analysis is the \textit{linearization} of the training dynamics, which requires appropriate scaling of the model such that distance traveled by the parameters vanishes \citep{chizat2018note}. Such regime thus fails to explain the \textit{feature learning} of neural networks \citep{yang2020feature}, which is believed to be an important advantage of deep learning; indeed, it has been shown that deep learning can outperform kernel models due to this adaptivity \citep{suzuki2018adaptivity,ghorbani2019limitations}. 

In contrast, the mean field regime describes the gradient descent dynamics as Wasserstein gradient flow in the probability space \citep{nitanda2017stochastic,mei2018mean,chizat2018global}, which captures the potentially \textit{nonlinear} evolution of parameters travelling beyond the kernel regime. While the mean field limit is appealing due to the presence of ``feature learning'', its characterization is more challenging and quantitative analysis is largely lacking. Recent works established convergence rate in continuous time under modified dynamics \citep{rotskoff2019global}, strong assumptions on the target function \citep{javanmard2019analysis}, or regularized objective \citep{hu2019mean}, but such result can be fragile in the discrete-time or finite-particle setting --- in fact, the discretization error often scales exponentially with the time horizon or dimensionality, which limits the applicability of the theory.
Hence, an important research problem that we aim to address is

\vspace{-1mm}
\begin{center}
{\it Can we develop optimization algorithms for neural networks in the mean field regime with more accurate quantitative guarantees the kernel regime enjoys?}
\end{center}
\vspace{-1mm}

We address this question by introducing the {\it particle dual averaging} (PDA) method, which globally optimizes an entropic regularized nonlinear functional. For two-layer mean field network which is an important application, we establish polynomial runtime guarantee for the \textit{discrete-time} algorithm; to our knowledge this is the first quantitative global convergence result under similar settings.
 
\subsection{Contributions}
We propose the PDA algorithm, which draws inspiration from the dual averaging method originally developed for finite-dimensional convex optimization \citep{nesterov2005smooth,nesterov2009primal,xiao2009dual}.
We iteratively optimize a probability distribution in the form of a Boltzmann distribution, samples from which can be obtained from the Langevin algorithm (see Figure~\ref{fig:PDA_illustration}). 
The resulting algorithm has comparable per-iteration cost as gradient descent and can be efficiently implemented. 

For optimizing two-layer neural network in the mean-field regime, we establish quantitative global convergence rate of PDA in minimizing an KL-regularized objective: the algorithm requires $\tilde{O}(\epsilon^{-3})$ steps and $\tilde{O}(\epsilon^{-2})$ particles to reach an $\epsilon$-accurate solution, where $\tilde{O}$ hides logarithmic factors. 
Importantly, our analysis does not couple the learning dynamics with certain continuous time limit, but directly handles the discrete update. This leads to a simpler analysis that covers more general settings. We also derive the generalization bound on the solution obtained by the algorithm.  
From the viewpoint of the optimization, PDA is an extension of Langevin algorithm to handle entropic-regularized nonlinear functionals on the probability space. 
Hence we believe our proposed method can also be applied to other distribution optimization problems beyond the training of neural networks.  

\begin{figure}[t]  
\vspace{-4mm} 
\centering
\begin{minipage}[t]{0.6\linewidth} 
{\includegraphics[width=0.94\linewidth]{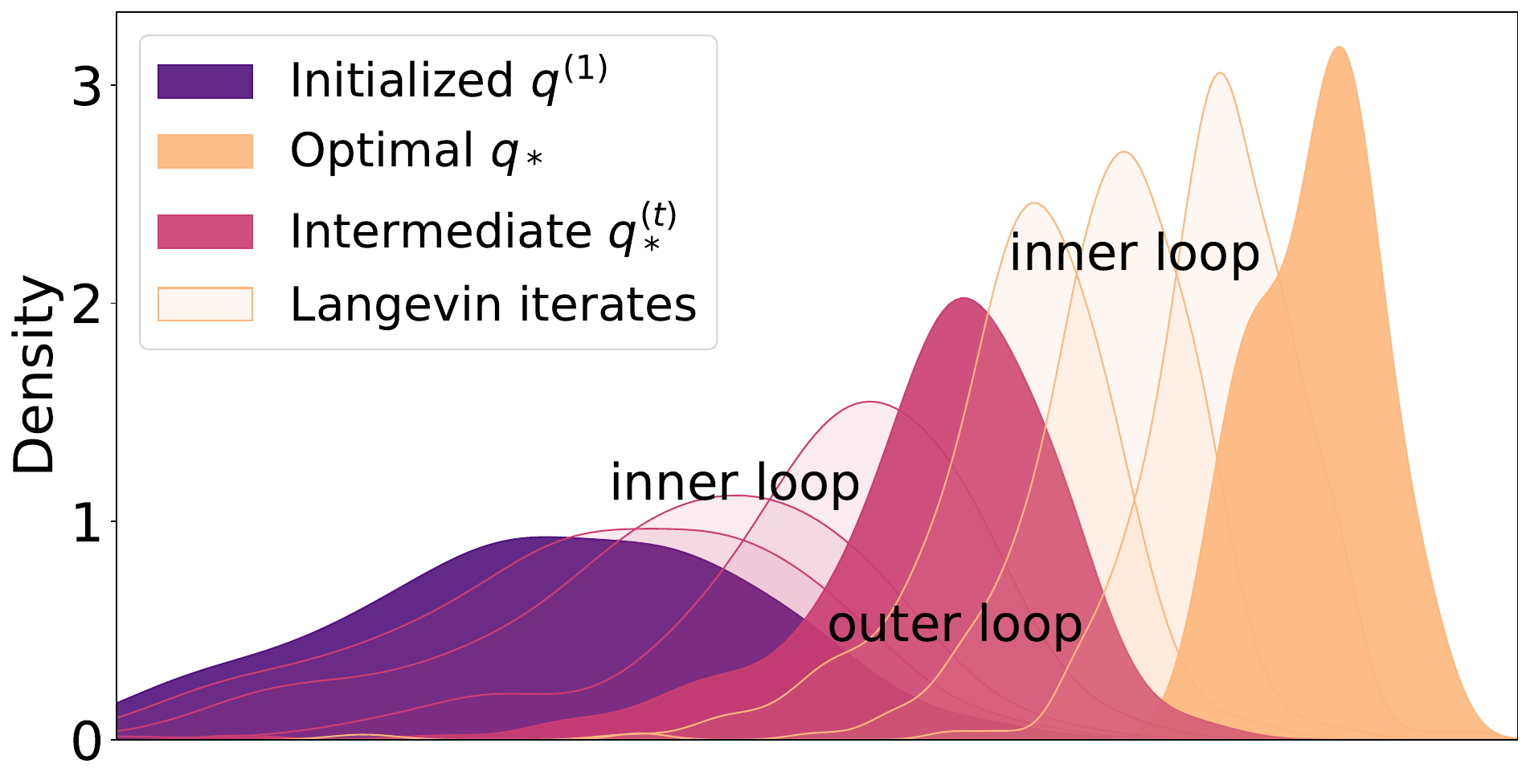}} 
\end{minipage}
\begin{minipage}[b]{0.38\linewidth} 
\captionof{figure}{\small 1D visualization of parameter distribution of mean field two-layer neural network (tanh) optimized by PDA. The \textit{inner loop} uses the Langevin algorithm to solve an approximate stationary distribution $q_*^{(t)}$, which is then optimized in the \textit{outer loop} towards the true target $q_*$.} 
\label{fig:PDA_illustration}
\end{minipage}
\vspace{-3mm}
\end{figure} 

\subsection{Related Literature}
\paragraph{Mean field limit of two-layer NNs.}
The key observation for the mean field analysis is that when the number of neurons becomes large, the evolution of parameters is well-described by a nonlinear partial differential equation (PDE), which can be viewed as solving an infinite-dimensional \textit{convex} problem \citep{bengio2006convex,bach2017breaking}. Global convergence can be derived by studying the limiting PDE \citep{mei2018mean,chizat2018global,rotskoff2018trainability,sirignano2020mean}, yet quantitative convergence rate generally requires additional assumptions.    

\citet{javanmard2019analysis} analyzed a particular RBF network and established linear convergence (up to certain error\footnote{Note that such error yields sublinear rate with respect to arbitrarily small accuracy $\epsilon$.\vspace{-2mm}}) for strongly concave target functions.  \citet{rotskoff2019global} provided a sublinear rate in continuous time for a modified gradient flow.
In the regularized setting, \citet{chizat2019sparse} obtained local linear convergence under certain non-degeneracy assumption on the objective. 
\citet{wei2019regularization} also proved polynomial rate for a perturbed dynamics under weak $\ell_2$ regularization.

Our setting is most related to \citet{hu2019mean}, who studied the minimization of a nonlinear functional with KL regularization on the probability space, and showed linear convergence (in continuous time) of a particle dynamics named {\it mean field Langevin dynamics} when the regularization is sufficiently strong. 
\citet{chen2020generalized} also considered optimizing a KL-regularized objective in the infinite-width and continuous-time limit, and derived NTK-like convergence guarantee under certain parameter scaling.  
Compared to these prior works, we directly handle the discrete time update in the mean-field regime, and our analysis covers a wider range of regularization parameters and loss functions.  

\vspace{-1.5mm}
\paragraph{Langevin algorithm.}
Langevin dynamics can be viewed as optimization in the space of probability measures \citep{jordan199618,jordan1998variational}; this perspective has been explored in \citet{wibisono2018sampling,durmus2019analysis}. 
It is known that the continuous-time Langevin diffusion converges exponentially fast to target distributions satisfying certain growth conditions \citep{roberts1996exponential,mattingly2002ergodicity}. 
The discretized \textit{Langevin algorithm} has a sublinear convergence rate that depends on the numerical scheme \citep{li2019stochastic} and has been studied under various metrics \citep{dalalyan2014theoretical,durmus2017nonasymptotic,cheng2017convergence}. 

The Langevin algorithm can also optimize certain non-convex objectives \citep{raginsky2017non,xu2018global,erdogdu2018global}, in which one finite-dimensional ``particle'' can attain approximate global convergence due to concentration of Boltzmann distribution around the true minimizer. 
However, such result often depends on the spectral gap that grows exponentially in dimensionality, which renders the analysis ineffective for neural net optimization in the high-dimensional \textit{parameter space}.     

Very recently, convergence of Hamiltonian Monte Carlo in learning certain mean field models has been analyzed in \citet{bou2020convergence,bou2021mixing}. 
Compared to these concurrent results, our formulation covers a more general class of potentials, and in the context of two-layer neural network, we provide optimization guarantees for a wider range of loss functions. 

\vspace{-1mm}
\subsection{Notations}
\vspace{-1mm}
Let $\bR_{+}$ denote the set of non-negative real numbers and $\|\cdot\|_2$ the Euclidean norm.
Given a density function $q: \bR^p \rightarrow \bR_{+}$, we denote the expectation with respect to $q(\theta) \rd \theta$ by $\bE_q[\cdot]$.
For a function $f: \bR^p \rightarrow \bR$, we define $\bE_q[f] = \int f(\theta) q(\theta)\rd\theta$ when $f$ is integrable.
$\KL$ is the Kullback-Leibler divergence: $\KL(q\|q^{\prime}) \defeq \int q(\theta) \log \left( \frac{q(\theta)}{q^{\prime}(\theta)}\right)\rd\theta$.
Let $\cP_2$ denote the set of positive densities $q$ on $\bR^p$ such that the second-order moment $\bE_q[\|\theta\|_2^2] < \infty$ and entropy $-\infty < -\bE_q[\log(q)] < +\infty$ are well defined.
$\cN(0,I_p)$ is the Gaussian distribution on $\bR^p$ with mean $0$ and covariance matrix $I_p$.

\section{Problem Setting}\label{sec:problem}
We consider the problem of risk minimization with neural networks in the mean field regime. 
For simplicity, we focus on supervised learning. 
We here formalize the problem setting and models. 
Let $\cX \subset \bR^d$ and $\cY \subset \bR$ be the input and output spaces, respectively.
For given input data $x \in \cX$, we predict a corresponding output $y=h(x) \in \cY$ through a hypothesis function $h: \cX \rightarrow \cY$. 

\subsection{Neural Network and Mean Field Limit}\label{subsec:nn_mf}
We adopt a neural network in the mean field regime as a hypothesis function.
Let $\Omega = \bR^p$ be a parameter space and $h_\theta: \cX \rightarrow \cY$ $(\theta \in \Omega)$ be a bounded function which will be a component of a neural network.
We sometimes denote $h(\theta,x) = h_\theta(x)$.
Let $q(\theta)\rd\theta$ be a probability distribution on the parameter space $\Omega$ and $\Theta =\{\theta_r\}_{r=1}^M$ be the set of parameters $\theta_r$ sampled from $q(\theta)\rd\theta$.
A hypothesis is defined as an ensemble of $h_{\theta_r}$ as follows:
\vspace{-0.5mm}
\begin{equation}\label{eq:nn} 
h_\Theta(x) \defeq \frac{1}{M}\sum_{r=1}^M h_{\theta_r}(x). 
\end{equation}
\vspace{-3mm}

A typical example in the literature of the above formulation is a two-layer neural network.
\begin{example}[Two-layer Network]\label{eg:2nn}
Let $a_r \in \bR$ and $b_r \in \bR^d$ $(r \in \{1,2,\ldots,M\})$ be parameters for output and input layers, respectively.
We set $\theta_r = (a_r, b_r)$ and $\Theta = \{\theta_r\}_{r=1}^M$.
Denote $h_{\theta_r}(x) \defeq \sigma_2( a_r \sigma_1(b_r^\top x))$ $(x \in \cX)$, 
where $\sigma_1$ and $\sigma_2$ are smooth activation functions.
Then the hypothesis $h_{\Theta}$ is a two-layer neural network composed of neurons $h_{\theta_r}$: $h_{\Theta}(x) = \frac{1}{M} \sum_{r=1}^M \sigma_2( a_r \sigma_1 (b_r^\top x)).$
\end{example}
\paragraph{Remark.} The purpose of $\sigma_2$ in the last layer is to ensure the boundedness of output (e.g., see Assumption 2 in \cite{mei2018mean}); this nonlinearity can also be removed if parameters of output layer are fixed. 
In addition, although we mainly focus on the optimization of two-layer neural network, our proposed method can also be applied to ensemble $h_\Theta$ of deep neural networks $h_{\theta_r}$.  

Suppose the parameters $\theta_r$ follow a probability distribution $q(\theta) \rd\theta$, then $h_\Theta$ can be viewed as a finite-particle discretization of the following expectation, 
\begin{equation}
\label{eq:mfl}
h_q(x) = \bE_{q}[h_{\theta}(x)].
\end{equation}
which we refer to as the \textit{mean field limit} of the neural network $h_\Theta$.
As previously discussed, when $h_\Theta$ is overparameterized,
optimizing $h_\Theta$ becomes ``close'' to directly optimizing the probability distribution on the parameter space $\Omega$, for which convergence to the optimal solution may be established under appropriate conditions \citep{nitanda2017stochastic,mei2018mean,chizat2018global}.  
Hence, the study of optimization of $h_q$ with respect to the probability distribution $q(\theta)\rd \theta$ may shed light on important properties of overparameterized neural networks.   

\subsection{Regularized Empirical Risk Minimization}\label{subsec:rerm}

We briefly outline our setting for regularized expected / empirical risk minimization.
The prediction error of a hypothesis is measured by the loss function $\ell(z,y)$ $(z,y \in \cY)$, such as the squared loss $\ell(z,y) = 0.5(z-y)^2$ for regression, or the logistic loss $\ell(z,y) = \log(1+\exp(-yz))$ for binary classification.
Let $\cD$ be a data distribution over $\cX \times \cY$. 
For expected risk minimization, the distribution $\cD$ is set to the true data distribution; whereas for empirical risk minimization, we take $\cD$ to be the empirical distribution defined by training data $\{(x_i,y_i)\}_{i=1}^n$ $(x_i \in \cX, y_i \in \cY)$ independently sampled from the data distribution. 
We aim to minimize the expected / empirical risk together with a regularization term, which controls the model complexity and also stabilizes the optimization.  
The regularized objective can be written as follows: for $\lambda_1, \lambda_2 > 0$,
\begin{equation}\label{eq:rerm}
\min_{q \in \cP_2} \left\{ \cL(q) \defeq \bE_{(X,Y)\sim \cD}[ \ell(h_q(X),Y)] + R_{\lambda_1,\lambda_2}(q) \right\},
\end{equation}
where $R_{\lambda_1, \lambda_2}$ is a regularization term composed of the weighted sum of the second-order moment and negative entropy with regularization parameters $\lambda_1$, $\lambda_2$:
\begin{equation}\label{eq:regularization}
R_{\lambda_1,\lambda_2}(q) \defeq \lambda_1 \bE_q[\|\theta\|_2^2] + \lambda_2 \bE_q[\log(q)].
\end{equation}
Note that this regularization is the KL divergence of $q$ from a Gaussian distribution. In our setting, such regularization ensures that the Gibbs distributions $q_*^{(t)}$ specified in Section~\ref{sec:particle-dual-averaging} are well defined. 

While our primary focus is the optimization of the objective \eqref{eq:rerm}, we can also derive a generalization error bound for the empirical risk minimizer of order of $O(n^{-1/2})$ for both the regression and binary classification settings, following \citet{chen2020generalized}. 
We defer the details to Appendix \ref{app:generalization_bounds}.

\subsection{The Langevin Algorithm} \label{subsec:langevin}
Before presenting our proposed method, we briefly review the Langevin algorithm.
For a given smooth potential function $f: \Omega \rightarrow \bR$, the Langevin algorithm performs the following update: given the initial $\theta^{(1)} \sim q^{(1)}(\theta)\rd\theta$, step size $\eta>0$, and Gaussian noise $\zeta^{(k)} \sim \cN(0,I_p)$,
\begin{equation}
\theta^{(k+1)} \leftarrow \theta^{(k)} - \eta \nabla_\theta f(\theta^{(k)}) + \sqrt{2\eta}\zeta^{(k)}. \label{eq:ula}    
\end{equation} 
Under appropriate conditions on $f$, it is known that $\theta^{(t)}$ converges to a stationary distribution proportional to $\exp(-f(\cdot))$ in terms of KL divergence at a linear rate (e.g., \cite{vempala2019rapid}) up to $O(\eta)$-error,  
where we hide additional factors in the big-$O$ notation.

Alternatively, note that when the normalization constant $\int\exp(-f(\theta))\rd\theta$ exists, the Boltzmann distribution in proportion to $\exp( -f(\cdot) )$ is the solution of the following optimization problem,
\begin{equation}\label{eq:linear-functional}
 \min_{q: \mathrm{density}} \left\{ \bE_q[f] + \bE_q[\log(q)] \right\}. 
\end{equation}
Hence we may interpret the Langevin algorithm as approximately solving an entropic regularized linear functional (i.e., free energy functional) on the probability space.
This connection between sampling and optimization (see \citet{dalalyan2017further,wibisono2018sampling,durmus2019analysis}) enables us to employ the Langevin algorithm to obtain (samples from) the closed-form Boltzmann distribution which is the minimizer of  \eqref{eq:linear-functional}; for example, many Bayesian inference problems fall into this category.  

However, the objective \eqref{eq:rerm} that we aim to optimize is beyond the scope of Langevin algorithm -- due to the \textit{nonlinearity} of loss $\ell(z,y)$ with respect to $z$, the stationary distribution cannot be described as a closed-form solution of \eqref{eq:linear-functional}. 
To overcome this limitation, we develop the particle dual averaging (PDA) algorithm which efficiently solves \eqref{eq:rerm} with quantitative runtime guarantees.

\section{Proposed Method}\label{sec:particle-dual-averaging}
We now propose the \textit{particle dual averaging} method to approximately solve the problem (\ref{eq:rerm}) by optimizing a two-layer neural network in the mean field regime; we also introduce the mean field limit of the proposed method to explain the algorithmic intuition and develop the convergence analysis.  

\subsection{Particle Dual Averaging}
Our proposed particle dual averaging method (Algorithm \ref{alg:pda}) is an optimization algorithm on the space of probability measures. The algorithm
consists of an inner loop and outer loop; we run Langevin algorithm in inner loop to approximate a Gibbs distribution, which is optimized in the outer loop so that it converges to the optimal distribution $q_*$. This outer loop update is designed to extend the classical dual averaging scheme \citep{nesterov2005smooth,nesterov2009primal,xiao2009dual} to infinite dimensional optimization problems (described in Section \ref{subsec:mf-pda}). Below we provide a more detailed explanation.  
\begin{itemize}[topsep=0mm,leftmargin=*,itemsep=0.5mm] 
    \item In the outer loop, the last iterate $\tilde{\Theta}^{(t)}$ of the previous inner loop is given. We compute $\partial_z \ell( h_{\tilde{\Theta}^{(t)}}(x_{t}),y_{t})$, which is a component of the Gibbs potential\footnote{In Algorithm \ref{alg:pda}, the terms $\partial_z \ell( h_{\tilde{\Theta}^{(s)}}(x_{s}),y_{s})$  appear in inner loop; but note that these terms only need to be computed in outer loop because they are independent to the inner loop iterates.}, and initialize a set of particles $\Theta^{(1)}$ at $\tilde{\Theta}^{(t)}$. 
    In Appendix~\ref{sec:proof} we introduce a different ``restarting'' scheme for the initialization.
    \item In the inner loop, we run the Langevin algorithm (noisy gradient descent) starting from $\Theta^{(1)}$, where the gradient at the $k$-th inner step is given by $\nabla_\theta \overline{g}^{(t)}(\theta_r^{(k)})$, which is a sum of weighted average of $\partial_z\ell( h_{\tilde{\Theta}^{(s)}}(x_{s}),y_{s}) \partial_\theta h(\theta_r^{(k)},x_{s})$ and the gradient of $\ell_2$-regularization (see Algorithm \ref{alg:pda}). 
\end{itemize}
\vspace{-1mm}

\begin{algorithm}[htb!]
  \caption{Particle Dual Averaging (PDA)}
  \label{alg:pda}
\begin{algorithmic}
  \STATE {\bfseries Input:}
  data distribution $\cD$,
  initial density $q^{(1)}$,
  number of outer-iterations $T$,
  learning rates $\{\eta_t\}_{t=1}^T$,
  number of inner-iterations $\{T_t\}_{t=1}^T$
\vspace{1mm}
  \STATE Randomly draw i.i.d.~initial parameters $\tilde{\theta}_r^{(1)} \sim q^{(1)}(\theta)\rd\theta$ $(r\in\{1,2,\ldots,M\})$
  \STATE $\tilde{\Theta}^{(1)} \leftarrow \{\tilde{\theta}_r^{(1)}\}_{r=1}^M$
\vspace{1mm}
   \FOR{$t=1$ {\bfseries to} $T$}
   \STATE Randomly draw data $(x_t,y_t)$ from $\cD$ \\
   $\Theta^{(1)}=\{\theta_r^{(1)}\}_{r=1}^M \leftarrow \tilde{\Theta}^{(t)}$
\vspace{1mm}
     \FOR{$k=1$ {\bfseries to} $T_t$}
       \STATE Run inexact noisy gradient descent for $r\in\{1,2,\ldots,M\}$\\
       $\nabla_\theta \overline{g}^{(t)}(\theta_r^{(k)}) \leftarrow \frac{2}{\lambda_2(t+2)(t+1)}\sum_{s=1}^t s \partial_z\ell( h_{\tilde{\Theta}^{(s)}}(x_{s}),y_{s}) \partial_\theta h(\theta_r^{(k)},x_{s}) + \frac{2\lambda_1 t}{\lambda_2(t+2)} \theta_r^{(k)}$\\
       $\theta^{(k+1)}_r \leftarrow \theta^{(k)}_r - \eta_t \nabla_\theta \overline{g}^{(t)}(\theta^{(k)}_r) + \sqrt{2\eta_t}\zeta_r^{(k)}$~(i.i.d.~Gaussian noise~$\zeta_r^{(k)} \sim \cN(0,I_p)$)
     \ENDFOR
\vspace{1mm}
   \STATE $\tilde{\Theta}^{(t+1)} \leftarrow \Theta^{(T_t+1)} = \{\theta_r^{(T_t+1)}\}_{r=1}^M$
   \ENDFOR
\vspace{1mm}   
\STATE Randomly pick up $t \in \{2,3,\ldots,T+1\}$ following the probability $\bP[t]=\frac{2t}{T(T+3)}$ and return $h_{\tilde{\Theta}^{(t)}}$
\end{algorithmic}
\end{algorithm}
\vspace{-1mm}

Figure~\ref{fig:PDA_illustration} provides a pictorial illustration of Algorithm~\ref{alg:pda}. Note that this procedure is a slight modification of the normal gradient descent algorithm: the first term of $\nabla_\theta \overline{g}^{(t)}$ is similar to the gradient of the loss $\partial_{\theta_r} \ell( h_{\Theta^{(k)}}(x), y) \sim \partial_z \ell( h_{\Theta^{(k)}}(x),y) \partial_{\theta} h(\theta_r^{(k)},x)$ where $\Theta^{(k)} = \{\theta_r^{(k)}\}_{r=1}^M$.
Indeed, if we set the number of inner-iterations $T_t=1$ and replace the direction $\nabla_\theta \overline{g}^{(t)}(\theta_r^{(k)})$ with the gradient of the $L_2$-regularized loss, then PDA exactly reduces to the standard noisy gradient descent algorithm considered in \cite{mei2018mean}.
Algorithm~\ref{alg:pda} can be extended to the minibatch variant in the obvious manner; for efficient implementation in the empirical risk minimization setting see Appendix \ref{sec:implementation}. 

\subsection{Mean Field View of PDA}\label{subsec:mf-pda}
In this subsection we discuss the mean field limit of PDA and explain its algorithmic intuition. 
Note that the inner loop of Algorithm \ref{alg:pda} is the Langevin algorithm with $M$ particles, which optimizes the potential function given by the weighted sum:
\[ \overline{g}^{(t)}(\theta) = \frac{2}{\lambda_2(t+2)(t+1)}\sum_{s=1}^t s \left(\partial_z\ell( h_{\tilde{\Theta}^{(s)}}(x_{s}),y_{s}) h(\theta,x_{s}) + \lambda_1\|\theta\|_2^2 \right). \]
Due to the rapid convergence of Langevin algorithm outlined in Subsection \ref{subsec:langevin}, the particles $\theta_r^{(k+1)}$ $(r \in \{1,\ldots,M\})$ can be regarded as (approximate) samples from the Boltzmann distribution: $\exp\left( - \overline{g}^{(t)}\right)$.
Hence, the inner loop of PDA returns an $M$-particle approximation of some stationary distribution, which is then modified in the outer loop. 
Importantly, the update on the stationary distribution is designed so that the algorithm converges to the optimal solution of the problem \eqref{eq:rerm}.

We now introduce the \textit{mean field limit} of PDA, i.e., taking the number of particles $M\to\infty$ and directly optimizing the problem (\ref{eq:rerm}) over $q$. 
We refer to this mean field limit simply as the dual averaging (DA) algorithm.
The dual averaging method was originally developed for the convex optimization in finite-dimensional spaces \citep{nesterov2005smooth,nesterov2009primal,xiao2009dual}, and here we adapt it to optimization on the probability space.
The detail of the DA algorithm is described in Algorithm \ref{alg:da}.

\begin{algorithm}[ht]
  \caption{Dual Averaging (DA)}
  \label{alg:da}
\begin{algorithmic}
  \STATE {\bfseries Input:}
  data distribution $\cD$ and
  initial density $q^{(1)}$
\vspace{1mm}
   \FOR{$t=1$ {\bfseries to} $T$}
   \STATE Randomly draw a data $(x_t,y_t)$ from $\cD$ \\
   $g^{(t)} \leftarrow \partial_z\ell( h_{q^{(t)}}(x_t),y_{t}) h(\cdot,x_{t}) + \lambda_1\|\cdot\|_2^2$ \\
   Obtain an approximation $q^{(t+1)}$ of the density function $q^{(t+1)}_* \propto \exp\left( - \frac{\sum_{s=1}^t 2s g^{(s)}}{\lambda_2 (t+2)(t+1)}\right)$
   \ENDFOR
\vspace{1mm}   
\STATE Randomly pick up $t \in \{2,3,\ldots,T+1\}$ following the probability $\bP[t]=\frac{2t}{T(T+3)}$ and return $h_{q^{(t)}}$
\end{algorithmic}
\end{algorithm}
 
Algorithm~\ref{alg:da} iteratively updates the density function $q^{(t+1)}_* \in \cP_2$ which is a solution to the objective:
\begin{equation}
\min_{q \in \cP_2} \left\{ \bE_q\Big[ \sum_{s=1}^t sg^{(s)} \Big] + \frac{\lambda_2}{2}(t+2)(t+1)\bE_q[\log(q)]  \right\}, \label{eq:subprob}    
\end{equation} 
where the function $g^{(t)} = \partial_z\ell( h_{q^{(t)}}(x_{t}),y_{t}) h(\cdot,x_{t}) + \lambda_1\|\cdot\|_2^2$ is the functional derivative of $\ell(h_q(x_{i_i}),y_{t}) + \lambda_1 \bE_{q}[\|\theta\|_2^2]$ with respect to $q$ at $q^{(t)}$.
In other words, the objective (\ref{eq:subprob}) is the sum of weighted average of linear approximations of loss function and the entropic regularization in the space of probability distributions.
In this sense, the DA method can be seen as an extension of the Langevin algorithm to handle entropic regularized nonlinear functionals on the probability space by iteratively \textit{linearizing} the objective. 

To sum up, we may interpret the DA method as approximating the optimal distribution $q_*$ by iteratively optimizing $q_*^{(t)}$, which takes the form of a Boltzmann distribution.
In the inner loop of the PDA algorithm, we obtain $M$ (approximate) samples from $q_*^{(t)}$ via the Langevin algorithm. In other words, PDA can be viewed as a finite-particle approximation of DA -- indeed, the stationary distributions obtained in PDA converges to $q_*^{(t+1)}$ by taking $M \rightarrow \infty$. In the following section, we present the convergence rate of the DA method, and also take into account the iteration complexity of the Langevin algorithm; we defer the finite-particle approximation error analysis to Appendix~\ref{sec:discretization}.

\section{Convergence Analysis}\label{sec:convergence-analysis}
We now provide quantitative global convergence guarantee for our proposed method in discrete time. We first derive the outer loop complexity, assuming approximate optimality of the inner loop iterates, which we then verify in the inner loop analysis. The total complexity is then simply obtained by combining the outer- and inner-loop runtime. 

\subsection{Outer Loop Complexity}
\label{subsec:outer-loop}
We first analyze the convergence rate of the dual averaging (DA) method (Algorithm \ref{alg:da}).
Our analysis will be made under the following assumptions.
\begin{assumption} \
\begin{description}[topsep=0mm,itemsep=0mm] \label{assump:convergence}
\item{{\bf(A1)}} $\cY \subset [-1,1]$. $\ell(z,y)$ is a smooth convex function w.r.t.~$z$ and $|\partial_z \ell(z,y)| \leq 2$ for $y,z \in \cY$.
\item{{\bf(A2)}} $|h(\theta,x)| \leq 1$ and $h(\theta,x)$ is smooth with respect to $\theta$ for $x \in \cX$.
\item{{\bf(A3)}} $\KL(q^{(t+1)}\| q^{(t+1)}_*) \leq 1 / t^2$.
\end{description}
\end{assumption}
\vspace{-2.5mm}

\paragraph{Remark.} {\bf (A2)} is satisfied by smooth activation functions such as sigmoid and tanh. Many loss functions including the squared loss and logistic loss satisfy {\bf (A1)} under the boundedness assumptions $\cY \subset [-1,1]$ and $|h_\theta(x)| \leq 1$. 
Note that constants in {\bf (A1)} and {\bf (A2)} are defined for simplicity and can be relaxed to any value. {\bf (A3)} specifies the precision of approximate solutions of sub-problems (\ref{eq:subprob}) to guarantee the global convergence of Algorithm \ref{alg:da}, which we verify in our inner loop analysis. 

We first introduce the following quantity for $q \in \cP_2$,
\[ e(q) \defeq \bE_q[\log(q)] - \frac{4}{\lambda_2} - \frac{p}{2}\left( \exp\left( \frac{4}{\lambda_2} \right) + \log\left(\frac{3\pi\lambda_2}{\lambda_1}\right)\right). \]
Observe that the expression consists of the negative entropy minus its lower bound for $q^{(t)}_*$ under Assumption {\bf(A1)}, {\bf(A2)}; in other words $e(q^{(t)}_*) \geq 0$. We have the following convergence rate of DA\footnote{In Appendix~\ref{sec:proof} we introduce a more general version of Theorem~\ref{thm:convergence} that allows for inexact $h_{q^{(t)}}(x)$, which simplifies the analysis of finite-particle discretization presented in Appendix~\ref{sec:discretization}.}. 
\begin{theorem}[Convergence of DA]\label{thm:convergence}
Under Assumptions {\bf(A1)}, {\bf(A2)}, and {\bf(A3)}, for arbitrary $q_* \in \cP_2$, iterates of the DA method (Algorithm \ref{alg:da}) satisfies
\begin{align*}
&\hspace{-7mm}\frac{2}{T(T+3)} \sum_{t=2}^{T+1}t \left( \bE[\cL(q^{(t)})] - \cL(q_*) \right) \\
&\leq 
O\Bigl( \frac{1}{T^2}\left(1+\lambda_1 \bE_{q_*}\left[ \|\theta\|_2^2\right] \right)
+ \frac{ \lambda_2  e(q_*) }{T} + \frac{ \lambda_2 }{T}(1 + \exp(8/\lambda_2))p^2 \log^2(T+2) \Bigr),
\end{align*}
where the expectation $\bE[\cL(q^{(t)})]$ is taken with respect to the history of examples.
\end{theorem}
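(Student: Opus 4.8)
The plan is to mimic the classical regret analysis of dual averaging, transplanted into the space $\cP_2$ equipped with the negative-entropy mirror map. The central observation is that the subproblem \eqref{eq:subprob} defines, at step $t$, a cumulative ``potential'' $\Psi_t(q) \defeq \bE_q\big[\sum_{s=1}^t s g^{(s)}\big] + \tfrac{\lambda_2}{2}(t+2)(t+1)\bE_q[\log q]$, whose unique minimizer over $\cP_2$ is the Boltzmann density $q^{(t+1)}_*$. The strong convexity here comes entirely from the entropy term: $q\mapsto \bE_q[\log q]$ is $1$-strongly convex with respect to the total-variation (or, more usefully, the $L^1$) metric via Pinsker, and more to the point we will use the exact ``three-point'' identity for the KL-Bregman divergence. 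Concretely, for any $q$, $\Psi_t(q) = \Psi_t(q^{(t+1)}_*) + \tfrac{\lambda_2}{2}(t+2)(t+1)\,\KL(q\|q^{(t+1)}_*)$, because the linear part and the log-partition constant cancel. This exact quadratic-growth identity is the engine of the whole argument, and it is the distributional analogue of the Euclidean dual-averaging prox inequality.

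First I would set $\alpha_t \defeq \tfrac{\lambda_2}{2}(t+2)(t+1)$ and run the standard ``prox-inequality telescoping''. Comparing the minimizer of $\Psi_t$ at consecutive steps and using that $g^{(t)}$ is the linear functional $\partial_z\ell(h_{q^{(t)}}(x_t),y_t)h(\cdot,x_t)+\lambda_1\|\cdot\|_2^2$ (the functional derivative at $q^{(t)}$ of the per-sample regularized loss), one gets a one-step bound of the shape
\[
\alpha_t\,\KL(q^{(t+1)}_*\|q^{(t)}_*) \;\le\; t\big(\bE_{q^{(t)}_*}[g^{(t)}] - \bE_{q^{(t)}}[g^{(t)}]\big) + (\text{telescoping }\Psi\text{-terms}) + (\text{entropy slack}).
\]
Summing over $t$, the $\Psi$-terms telescope; the negative-entropy terms at the endpoints produce exactly the quantity $e(q_*)$ once we insert the lower bound on $-\bE_{q^{(t)}_*}[\log q^{(t)}_*]$ that is advertised in the paragraph preceding the theorem (this lower bound is where $\cY\subset[-1,1]$, $|h|\le 1$, $|\partial_z\ell|\le 2$, and the Gaussian normalization $\log(3\pi\lambda_2/\lambda_1)$ enter — the potential $g^{(t)}$ is bounded in terms of these, so $q^{(t)}_*$ cannot be too concentrated). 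Next, the linear-functional differences are converted into loss decrements by convexity of $\ell$ in $z$: $\bE_{q_*}[g^{(t)}] - \bE_{q^{(t)}}[g^{(t)}] \le \big(\text{loss}+\lambda_1\text{-moment at }q_*\big) - \big(\text{same at }q^{(t)}\big)$, and after multiplying by the weights $t$ and dividing by $\sum_{t}t = \tfrac12 T(T+3)$ we recover $\tfrac{2}{T(T+3)}\sum_t t(\cL(q^{(t)}) - \cL(q_*))$ on the left, modulo the KL-regularizer piece $R_{\lambda_1,\lambda_2}$, whose entropy part is precisely what the $e(q_*)$ bookkeeping absorbs.

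Two technical inputs then finish it. (i) Replacing the \emph{idealized} iterate $q^{(t+1)}_*$ by the \emph{actual} iterate $q^{(t+1)}$: assumption \textbf{(A3)}, $\KL(q^{(t+1)}\|q^{(t+1)}_*)\le 1/t^2$, together with a one-sided Lipschitz/Pinsker-type control of $|\bE_{q^{(t+1)}}[g^{(t+1)}] - \bE_{q^{(t+1)}_*}[g^{(t+1)}]|$ and of the entropy difference (both bounded because $g$ is bounded and the densities are close in KL hence in $L^1$), injects errors that are summable — $\sum_t t\cdot(1/t) = O(T)$ after the weighting, contributing an $O(\lambda_2/T\cdot(\cdots))$ term; this is also where the $p^2\log^2(T+2)$ and $\exp(8/\lambda_2)$ factors arise, from bounding $\bE_{q^{(t)}_*}[\|\theta\|_2^2]$ and $\bE_{q^{(t)}_*}[\log q^{(t)}_*]$ via sub-Gaussianity of the Boltzmann density whose potential dominates $\lambda_1\|\theta\|_2^2$. (ii) The randomized output rule $\bP[t] = 2t/(T(T+3))$ for $t\in\{2,\dots,T+1\}$ is exactly the weighting that turns the weighted regret sum into $\bE[\cL(q^{(t)})] - \cL(q_*)$, so taking expectations over the sampled data history $(x_1,y_1),\dots$ and over the chosen index gives the claimed bound. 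The main obstacle I anticipate is item (i): carefully propagating the \textbf{(A3)} inexactness through the telescoping without losing the sharp $1/t^2$ summability, and simultaneously keeping the entropy-lower-bound constants consistent so that they collapse cleanly into $e(q_*)$ rather than into an unbounded running sum. Everything else is the standard dual-averaging calculation, re-read one level up on the ladder of spaces.
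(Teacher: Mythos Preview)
Your high-level scaffold is right and matches the paper: define a cumulative potential whose minimizer is $q^{(t+1)}_*$, exploit the exact KL three-point identity for strong convexity, compare consecutive minimizers, telescope, use convexity of $\ell$ to pass from $g^{(t)}$-increments to loss increments, and take expectation over the data. The paper executes exactly this, writing $V_t(q)=-\bE_q[\sum_{s\le t}sg^{(s)}]-\lambda_2 e(q)\sum_{s\le t+1}s$ and recursing on $V_t^*=V_t(q^{(t+1)}_*)$.

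There is, however, a concrete gap you are stepping over twice, and it is the single real technical difficulty of the proof. You write that the inexactness term $|\bE_{q^{(t+1)}}[g^{(t+1)}]-\bE_{q^{(t+1)}_*}[g^{(t+1)}]|$ can be handled by ``Lipschitz/Pinsker-type control \ldots because $g$ is bounded.'' But $g^{(t)}$ is \emph{not} bounded: it contains the term $\lambda_1\|\theta\|_2^2$. So Pinsker alone ($\|q-q'\|_{L^1}\le\sqrt{2\,\KL}$) is useless for the second-moment piece, and likewise the standard dual-averaging step where the cross term $t\int(q^{(t)}_*-q^{(t+1)}_*)(\theta)\,g^{(t)}(\theta)\,\rd\theta$ is traded against the strong-convexity $\tfrac{\alpha_t}{2}\|q^{(t)}_*-q^{(t+1)}_*\|_{L^1}^2$ via Young's inequality \emph{fails} because $\|g^{(t)}\|_\infty=\infty$. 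This is not a detail that resolves itself; it is the main obstacle.

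The paper handles the two occurrences differently. For the cross term in the recursion on $V_t^*$, it truncates at radius $R_t=\Theta\!\big(p\,\tfrac{\lambda_2}{\lambda_1}\log(1+t)\big)$: inside the ball you do get $|g^{(t)}|\le 2+\lambda_1 R_t$, so Young applies with the usual $t^2/\alpha_t$ price, while outside the ball you use the sub-Gaussian tail of $q^{(t)}_*$ (it is a bounded perturbation of a Gaussian of variance $\asymp\lambda_2/\lambda_1$) to get a contribution decaying like $(1+t)^{-1}$. The $p^2\log^2(T+2)$ and the $\exp(8/\lambda_2)$ in the theorem come precisely from this truncation step, \emph{not} from the {\bf(A3)} inexactness as you conjectured. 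For the inexactness propagation (replacing $q^{(t)}_*$ by $q^{(t)}$), Pinsker covers the bounded $h$-part of $g^{(t)}$, but for the $\lambda_1\|\theta\|_2^2$ part and the entropy difference the paper goes via Talagrand's inequality (available because $q^{(t)}_*$ satisfies LSI by the Holley--Stroock perturbation lemma) to control $\int\|\theta\|_2^2(q-q_*)$ by $W_2^2(q,q_*)\le\tfrac{1}{\alpha}\KL(q\|q_*)$; this yields $t\cdot O(t^{-1})=O(1)$ per step after weighting and is the content of the paper's continuity-of-entropy proposition. Once you insert these two ingredients, the rest of your outline goes through essentially verbatim.
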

Theorem \ref{thm:convergence} demonstrates the convergence rate of Algorithm \ref{alg:da} to the optimal value of the regularized objective~\eqref{eq:rerm} in expectation.
Note that $\frac{2}{T(T+3)} \sum_{t=2}^{T+1}t \bE[\cL(q^{(t)})]$ is the expectation of $\bE[\cL(q^{(t)})]$ according to the probability $\cP[t]=\frac{2t}{T(T+3)}$ $(t \in \{2,\ldots,T+1\})$
as specified in Algorithm \ref{alg:da}. 
If we take $p, \lambda_1, \lambda_2$ as constants and use $\tilde{O}$ to hide the logarithmic terms,
we can deduce that after $\tilde{O}(\epsilon^{-1})$ iterations, an $\epsilon$-accurate solution of the optimal distribution: $\cL(q) \leq \inf_{q\in \cP_2}\cL(q) + \epsilon$ is achieved in expectation. 
Importantly, this convergence rate applies to \textit{any} choice of regularization parameters, in contrast to the strong regularization required in \cite{hu2019mean,jabir2019mean}. 

On the other hand, due to the exponential dependence on $\lambda_2^{-1}$, our convergence rate is not informative under weak regularization $\lambda_2 \to 0$. Such dependence follows from the classical LSI perturbation lemma~\citep{holley1987logarithmic}, which is likely unavoidable for Langevin-based methods in the most general setting \citep{menz2014poincare}, unless additional assumptions are imposed (e.g., a student-teacher setup); we intend to further investigate these conditions in future work.  

\subsection{Inner Loop Complexity}
\label{subsec:inner-loop}
In order to derive the total complexity (i.e., taking both the outer loop and inner loop into account) towards a required accuracy, we also need to estimate the iteration complexity of Langevin algorithm.
We utilize the following convergence result under the log-Sobolev inequality (Definition~\ref{def:log-sobolev}):

\begin{theorem}[\cite{vempala2019rapid}]\label{thm:vempala-wibisono}
Consider a probability density $q(\theta) \propto \exp(-f(\theta))$ satisfying the log-Sobolev inequality with constant $\alpha$, and assume $f$ is smooth and $\nabla f$ is $L$-Lipschitz, i.e., $\|\nabla_\theta f(\theta) - \nabla_\theta f(\theta')\|_2 \leq L\|\theta - \theta'\|_2$.
If we run the Langevin algorithm (\ref{eq:ula}) with learning rate $0 < \eta \leq \frac{\alpha}{4L^2}$ and let $q^{(k)}(\theta)\rd\theta$ be a probability distribution that $\theta^{(k)}$ follows,
then we have,  
\begin{equation*}
\KL(q^{(k)} \| q) \leq \exp( -\alpha \eta k)\KL(q^{(1)} \| q) + 8\alpha^{-1}\eta p L^2.
\end{equation*}
\end{theorem}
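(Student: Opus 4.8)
The plan is to run the standard interpolation argument for the Langevin algorithm in $\KL$ divergence, which reduces the claim to a one-step contraction estimate plus a discretization-error bound, then chains the steps via Gr\"onwall. Fix an iteration index $k$, let $\theta_0$ have law $q^{(k)}$, and introduce the interpolating diffusion $\rd\theta_t = -\nabla_\theta f(\theta_0)\,\rd t + \sqrt{2}\,\rd B_t$, $t\in[0,\eta]$, with the drift \emph{frozen} at the initial point; then $\theta_\eta$ has law $q^{(k+1)}$. Writing $\rho_t$ for the law of $\theta_t$, the marginal solves the Fokker--Planck equation $\partial_t\rho_t = \nabla\cdot\bigl(\rho_t(\nabla\log\rho_t + \bE[\nabla_\theta f(\theta_0)\mid\theta_t=\cdot])\bigr)$, i.e.\ the frozen drift gets replaced by its conditional expectation given the current location.

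The heart of the argument is to differentiate the Lyapunov functional $t\mapsto \KL(\rho_t\|q)$ along this flow. Using the Fokker--Planck equation, integration by parts, and $\nabla\log q = -\nabla_\theta f$, one obtains
\[
\frac{\rd}{\rd t}\KL(\rho_t\|q) = -\bE_{\rho_t}\Bigl[\bigl\|\nabla\log\tfrac{\rho_t}{q}\bigr\|_2^2\Bigr] - \bE\Bigl[\bigl\langle\nabla_\theta f(\theta_0)-\nabla_\theta f(\theta_t),\ \nabla\log\tfrac{\rho_t}{q}(\theta_t)\bigr\rangle\Bigr],
\]
where the last expectation is over the joint law of $(\theta_0,\theta_t)$. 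Controlling the cross term with the elementary inequality $\langle a,b\rangle \geq -\|a\|_2^2 - \tfrac14\|b\|_2^2$ absorbs a quarter of the relative Fisher information, leaving the discretization error $\bE\|\nabla_\theta f(\theta_0)-\nabla_\theta f(\theta_t)\|_2^2$; applying the log-Sobolev inequality to the Fisher-information term, $\bE_{\rho_t}[\|\nabla\log(\rho_t/q)\|_2^2]\geq 2\alpha\,\KL(\rho_t\|q)$, then yields the scalar differential inequality
\[
\frac{\rd}{\rd t}\KL(\rho_t\|q) \;\leq\; -\tfrac{3\alpha}{2}\,\KL(\rho_t\|q) + \bE\bigl\|\nabla_\theta f(\theta_0)-\nabla_\theta f(\theta_t)\bigr\|_2^2 .
\]

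It remains to control the error term. By $L$-Lipschitzness of $\nabla_\theta f$ and $\theta_t-\theta_0 = -t\nabla_\theta f(\theta_0)+\sqrt{2}B_t$ with $B_t$ independent of $\theta_0$, we get $\bE\|\nabla_\theta f(\theta_0)-\nabla_\theta f(\theta_t)\|_2^2 \leq L^2\bE\|\theta_t-\theta_0\|_2^2 = L^2\bigl(t^2\,\bE_{\rho_0}\|\nabla_\theta f\|_2^2 + 2pt\bigr)$, so one needs an a priori bound on $\bE_{\rho_k}\|\nabla_\theta f\|_2^2$ that is uniform in $k$. Such a bound follows from $L$-smoothness of $f$ and the LSI (integration by parts gives $\bE_q\|\nabla_\theta f\|_2^2 = \bE_q[\Delta f] \leq pL$, and this transfers to nearby densities), and this is exactly where the step-size restriction $\eta\leq \alpha/(4L^2)$ enters — it keeps the error $O(\eta pL^2)$ uniformly over $t\in[0,\eta]$ and over iterations. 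Granting this, integrating the differential inequality over $[0,\eta]$ gives a per-step bound $\KL(\rho_\eta\|q)\leq e^{-3\alpha\eta/2}\KL(\rho_0\|q)+4\eta^2 pL^2 \leq e^{-\alpha\eta}\KL(\rho_0\|q)+4\eta^2 pL^2$; iterating over $k$ steps and summing the geometric series $\sum_{j\ge 0}e^{-\alpha\eta j}\leq 2/(\alpha\eta)$ (valid since $\alpha\eta\leq 1$) produces $\KL(q^{(k)}\|q)\leq e^{-\alpha\eta k}\KL(q^{(1)}\|q)+8\alpha^{-1}\eta pL^2$.

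I expect the main obstacle to be the uniform-in-$k$ control of $\bE_{\rho_k}\|\nabla_\theta f\|_2^2$: a crude bound is essentially circular, since it wants the current iterate to already be close to $q$, which is what we are trying to prove; resolving it cleanly requires either an induction that feeds the contraction back in or a self-contained second-moment estimate that does not reference the current iterate. A secondary technical point is making the interpolation rigorous — existence and sufficient regularity of $\rho_t$ to justify the integration by parts and the conditional-expectation form of the drift — but this is routine given smoothness of $f$.
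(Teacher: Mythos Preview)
The paper does not prove this theorem; it is quoted verbatim from \cite{vempala2019rapid} and used as a black box in the inner-loop analysis (Corollary~\ref{cor:inner-complexity}). So there is no ``paper's own proof'' to compare against. Your proposal is essentially the argument of the original reference: the interpolated diffusion with frozen drift, the Fokker--Planck identity for $\tfrac{\rd}{\rd t}\KL(\rho_t\|q)$, Young's inequality on the cross term, the log-Sobolev inequality on the Fisher information, and finally the discretization bound via $L$-Lipschitzness of $\nabla f$.

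Your diagnosis of the main obstacle --- the uniform-in-$k$ control of $\bE_{\rho_k}\|\nabla_\theta f\|_2^2$ --- is exactly right, and your worry about circularity is the correct place to be careful. In \cite{vempala2019rapid} this is resolved not by feeding back the $\KL$ contraction, but by a separate self-contained lemma: one shows directly that the map $\rho_0\mapsto\rho_\eta$ contracts $\bE_\rho\|\nabla_\theta f\|_2^2$ toward its stationary value $\bE_q\|\nabla_\theta f\|_2^2\leq pL$ (the latter by integration by parts and $\nabla^2 f\preceq LI_p$), so an induction on the \emph{gradient moment} --- not on the $\KL$ divergence --- gives $\bE_{\rho_k}\|\nabla_\theta f\|_2^2 = O(pL)$ uniformly, provided the step-size restriction $\eta\leq \alpha/(4L^2)$ holds and the initialization is reasonable. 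Once this lemma is in hand, the $t^2$ term in your error bound is dominated by the $2pt$ term under the step-size condition, and the constants in the final statement fall out. You should make this induction explicit rather than leave it as ``granting this''; otherwise the sketch is sound and faithful to the source.
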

Theorem \ref{thm:vempala-wibisono} implies that a $\delta$-accurate solution in KL divergence can be obtained by the Langevin algorithm with $\eta \leq \frac{\alpha}{4L^2} \min\left\{1,\frac{\delta}{4p}\right\}$ and $\frac{1}{\alpha\eta}\log\frac{2\KL(q^{(1)}\| q)}{\delta}$-iterations.

Since the optimal solution of a sub-problem in DA (Algorithm \ref{alg:da}) takes the forms of $q^{(t+1)}_* \propto \exp\left( -\frac{\sum_{s=1}^t 2sg^{(s)}}{\lambda_2 (t+2)(t+1)}\right)$, we can verify the LSI and determine the constant for $q_*^{(t+1)}(\theta)\rd\theta$ based on the LSI perturbation lemma from \citet{holley1987logarithmic} (see Lemma~\ref{lem:sobolev_perturbation} and Example~\ref{ex:quadratic_sobolev} in Appendix~\ref{app:log-sobolev}). 
Consequently, we can apply Theorem \ref{thm:vempala-wibisono} to $q_*^{(t+1)}$ for the inner loop complexity when $\nabla_\theta \log q_*^{(t+1)}$ is Lipschitz continuous, which motivates us to introduce the following assumption.  

\begin{assumption} \ 
\begin{description}[topsep=0mm,itemsep=0mm]  \label{assump:smoothness}
\item{{\bf(A4)}\,} $\partial_\theta h(\cdot,x)$ is $1$-Lipschitz continuous: $\|\partial_\theta h(\theta,x)- \partial_\theta h(\theta',x)\|_2 \leq \|\theta - \theta'\|_2$, $\forall x \in \cX$, $\theta,\theta' \in \Omega$.\!\!
\end{description} 
\end{assumption}
\vspace{-2mm}
\paragraph{Remark.} 
{\bf(A4)} is parallel to \citep[Assumption A3]{mei2018mean}, and is satisfied by two-layer neural network in Example \ref{eg:2nn} when the output or input layer is fixed and the input space $\cX$ is compact.
We remark that this assumption can be relaxed to H\"older continuity of $\partial_\theta h(\cdot,x)$ via the recent result of \citet{erdogdu2020convergence}, which allows us to extend Theorem~\ref{thm:convergence} to general $L_p$-norm regularizer for $p>1$. 
For now we work with {\bf(A4)} for simplicity of the presentation and proof.

Set $\delta_{t+1}$ to be the desired accuracy of an approximate solution $q^{(t+1)}$ specified in {\bf (A3)}: 
$\delta_{t+1} = 1/(t+1)^2$, we have the following guarantee for the inner loop.  
\begin{corollary}[Inner Loop Complexity] \label{cor:inner-complexity}
Under {\bf (A1)}, {\bf (A2)}, and {\bf (A4)}, if we run the Langevin algorithm with step size $\eta_t =  O\left(\frac{\lambda_1 \lambda_2 \delta_{t+1}}{p(1+\lambda_1)^2\exp(8/\lambda_2)}\right)$ on (\ref{eq:subprob}), 
then an approximate solution satisfying $\KL(q^{(t+1)}\| q^{(t+1)}_*) \leq \delta_{t+1}$ can be obtained within $O\left(\frac{\lambda_2\exp(8/\lambda_2)}{\lambda_1\eta_t}\log\frac{2\KL(q^{(t)}\| q^{(t+1)}_*)}{\delta_{t+1}} \right)$-iterations.
Moreover, $\KL(q^{(t)}\|q^{(t+1)}_*)$ $(t\in\{1,2,\ldots,T+1\})$ are uniformly bounded with respect to $t$ as long as $q^{(1)}$ is a Gaussian distribution and {\bf(A3)} is satisfied.
\end{corollary}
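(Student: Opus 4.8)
\textbf{Proof proposal for Corollary~\ref{cor:inner-complexity}.}
The plan is to instantiate Theorem~\ref{thm:vempala-wibisono} with the specific potential $f = \overline{g}^{(t)} = \frac{2}{\lambda_2(t+2)(t+1)}\sum_{s=1}^t s\left(\partial_z\ell(h_{\tilde\Theta^{(s)}}(x_s),y_s)h(\cdot,x_s)+\lambda_1\|\cdot\|_2^2\right)$, whose Boltzmann distribution is exactly $q_*^{(t+1)}$. This requires two ingredients: (i) a log-Sobolev constant $\alpha$ for $q_*^{(t+1)}$, and (ii) a Lipschitz constant $L$ for $\nabla f$. For (ii), the quadratic part contributes $\frac{2\lambda_1 t}{\lambda_2(t+2)}\le\frac{2\lambda_1}{\lambda_2}$ to the Hessian; the loss terms contribute, via {\bf(A1)} ($|\partial_z\ell|\le 2$) and {\bf(A4)} ($\partial_\theta h(\cdot,x)$ is $1$-Lipschitz), a convex combination of $1$-Lipschitz gradients scaled by $\frac{2}{\lambda_2}\cdot 2 = \frac{4}{\lambda_2}$ (since $\frac{2}{(t+2)(t+1)}\sum_{s=1}^t s\le 1$). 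Hence $L = O\big(\frac{1+\lambda_1}{\lambda_2}\big)$. For (i), the key step is to write $f$ as a bounded perturbation of the quadratic potential $\frac{2\lambda_1 t}{\lambda_2(t+2)}\|\theta\|_2^2$: by {\bf(A1)}, {\bf(A2)} the perturbation $\frac{2}{\lambda_2(t+2)(t+1)}\sum_{s=1}^t s\,\partial_z\ell(\cdot)h(\cdot,x_s)$ is uniformly bounded in sup-norm by $\frac{4}{\lambda_2}$, so Holley--Stroock (Lemma~\ref{lem:sobolev_perturbation}, Example~\ref{ex:quadratic_sobolev}) gives that $q_*^{(t+1)}$ satisfies LSI with $\alpha = \Omega\big(\frac{\lambda_1}{\lambda_2}\exp(-8/\lambda_2)\big)$, where the $\exp(-8/\lambda_2)$ comes from $\exp(-2\,\mathrm{osc})$ with oscillation $\le 8/\lambda_2$.

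Plugging these into the statement following Theorem~\ref{thm:vempala-wibisono}: a $\delta_{t+1}$-accurate solution in KL is obtained with step size $\eta_t\le\frac{\alpha}{4L^2}\min\{1,\frac{\delta_{t+1}}{4p}\}$ and $\frac{1}{\alpha\eta_t}\log\frac{2\KL(q^{(t)}\|q_*^{(t+1)})}{\delta_{t+1}}$ iterations. Substituting $\alpha^{-1}=O\big(\frac{\lambda_2\exp(8/\lambda_2)}{\lambda_1}\big)$ and $L^2 = O\big(\frac{(1+\lambda_1)^2}{\lambda_2^2}\big)$ gives $\frac{\alpha}{4L^2} = \Omega\big(\frac{\lambda_1\lambda_2}{p(1+\lambda_1)^2\exp(8/\lambda_2)}\big)$ up to the $\min$ with $\delta_{t+1}/(4p)$, matching the stated $\eta_t = O\big(\frac{\lambda_1\lambda_2\delta_{t+1}}{p(1+\lambda_1)^2\exp(8/\lambda_2)}\big)$; and the iteration count becomes $O\big(\frac{\lambda_2\exp(8/\lambda_2)}{\lambda_1\eta_t}\log\frac{2\KL(q^{(t)}\|q_*^{(t+1)})}{\delta_{t+1}}\big)$ as claimed. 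One must check that the warm start here is $q^{(t)}$ (the previous outer-loop approximation, which initializes the current inner loop), so the relevant KL is indeed $\KL(q^{(t)}\|q_*^{(t+1)})$.

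For the final ``moreover'' claim, the plan is to bound $\KL(q^{(t)}\|q_*^{(t+1)})$ uniformly in $t$. I would use $\KL(q^{(t)}\|q_*^{(t+1)}) \le 2\KL(q^{(t)}\|q_*^{(t)}) + 2\,[\text{something measuring the drift } q_*^{(t)}\to q_*^{(t+1)}]$ — or more directly, decompose via a reference measure: writing both $q_*^{(t)}$ and $q_*^{(t+1)}$ relative to the same Gaussian base (recall $R_{\lambda_1,\lambda_2}$ is a KL from a Gaussian), and using that under {\bf(A1)}, {\bf(A2)} the log-densities $\log q_*^{(t)}$ and $\log q_*^{(t+1)}$ differ from the Gaussian log-density by a sup-norm-bounded term ($\le 4/\lambda_2$), one gets $\KL(q^{(t)}\|q_*^{(t+1)}) \le \KL(q^{(t)}\|q_*^{(t)}) + O(1/\lambda_2)$ plus a moment term. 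By {\bf(A3)}, $\KL(q^{(t)}\|q_*^{(t)})\le 1/(t-1)^2\le 1$; the second moment of $q^{(t)}$ is controlled because $q^{(1)}$ is Gaussian and each $q_*^{(s)}$ has sub-Gaussian tails with variance proxy $O(\lambda_2/\lambda_1)$ (bounded perturbation of a fixed Gaussian), and the KL-closeness {\bf(A3)} transfers this moment bound from $q_*^{(t)}$ to $q^{(t)}$ up to constants. Assembling these gives a bound independent of $t$.

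\textbf{Main obstacle.} The routine parts are the Lipschitz and oscillation estimates; the delicate step is the uniform-in-$t$ control of $\KL(q^{(t)}\|q_*^{(t+1)})$ in the ``moreover'' part, because it requires simultaneously tracking (a) the approximation quality from {\bf(A3)}, (b) the genuine movement of the target $q_*^{(t)}\to q_*^{(t+1)}$ across one outer step, and (c) second-moment bounds on the iterates $q^{(t)}$ that are \emph{not} assumed but must be derived from the Gaussian initialization and the sub-Gaussianity of each Boltzmann target. Getting the moment bound to close without circularity — i.e.\ using only the already-established facts about $q_*^{(s)}$ and {\bf(A3)}, not about $q^{(t)}$ — is where care is needed.
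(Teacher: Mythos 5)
Your proposal follows essentially the same route as the paper: the LSI constant for $q_*^{(t+1)}$ is obtained exactly as you describe, via the Bakry--\'Emery criterion for the quadratic part plus the Holley--Stroock perturbation lemma with sup-norm bound $O(1/\lambda_2)$ on the loss term, the Lipschitz constant $L=O((1+\lambda_1)/\lambda_2)$ is computed directly, and both are plugged into Theorem~\ref{thm:vempala-wibisono}. For the ``moreover'' claim, the paper packages your reference-measure decomposition as Proposition~\ref{prop:boundedness-of-KL}, bounding $\KL(q^{(t)}\|q_*^{(t+1)})$ by $\KL(q^{(t)}\|q_*^{(t)})$ (controlled by {\bf(A3)}) plus terms depending only on the sup-norm and quadratic-coefficient bounds, with the moment transfer handled exactly as you anticipate via Talagrand's inequality (Theorem~\ref{thm:otto-villani}) and the entropy-continuity estimate of Proposition~\ref{prop:continuity_entropy}.
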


We comment that for the inner loop we utilized the \textit{overdamped} Langevin algorithm, since it is the most standard and commonly used sampling method for the objective~\eqref{eq:subprob}. Our analysis can easily incorporate other inner loop updates such as the underdamped Langevin algorithm \citep{cheng2018underdamped,eberle2019couplings} or the Metropolis-adjusted Langevin algorithm \citep{roberts1996exponential,dwivedi2018log}, which may improve the iteration complexity.  

\subsection{Total Complexity}
Combining Theorem \ref{thm:convergence} and Corollary \ref{cor:inner-complexity}, we can now derive the total complexity of our proposed algorithm.
For simplicity, we take $p, \lambda_1, \lambda_2$ as constants and hide logarithmic terms in $\tilde{O}$ and $\tilde{\Theta}$.
The following corollary establishes a $\tilde{O}(\epsilon^{-3})$ total iteration complexity to obtain an $\epsilon$-accurate solution in expectation because $T_t = \tilde{\Theta}(t^2) = \tilde{O}(\epsilon^{-2})$ for $t \leq T$.
\begin{corollary}[Total Complexity]\label{cor:total-complexity}
Let $\epsilon > 0$ be an arbitrary desired accuracy and $q^{(1)}$ be a Gaussian distribution.
Under assumptions {\bf (A1)}, {\bf (A2)}, {\bf (A3)}, and {\bf (A4)}, if we run Algorithm \ref{alg:da} for $T=\tilde{\Theta}(\epsilon^{-1})$ iterations on the outer loop, and the Langevin algorithm with step size $\eta_t =  \Theta\left(\frac{\lambda_1 \lambda_2 \delta_{t+1}}{p(1+\lambda_1)^2\exp(8/\lambda_2)}\right)$ for $T_t = \tilde{\Theta}(\eta_t^{-1})$ iterations on the inner loop, then an $\epsilon$-accurate solution: $\cL(q) \leq \inf_{q\in \cP_2}\cL(q) + \epsilon$ of the objective~\eqref{eq:rerm} is achieved in expectation. 
\end{corollary}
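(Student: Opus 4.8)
The plan is to chain Theorem~\ref{thm:convergence} (outer loop) with Corollary~\ref{cor:inner-complexity} (inner loop), the only genuine subtlety being that Theorem~\ref{thm:convergence} presupposes {\bf(A3)} whereas Corollary~\ref{cor:inner-complexity} produces {\bf(A3)} at outer iteration $t$ only after one knows that the warm-start divergence $\KL(q^{(t)}\|q^{(t+1)}_*)$ is controlled --- and that control is itself conditioned on {\bf(A3)} holding at earlier steps. I would break this apparent circularity by induction on the outer index $t$. In the base case, $q^{(1)}$ is Gaussian and $q^{(2)}_*$ is log-quadratic up to a bounded perturbation (by {\bf(A1)},{\bf(A2)}), so $\KL(q^{(1)}\|q^{(2)}_*)$ is an explicit finite constant. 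For the inductive step, assume $\KL(q^{(s+1)}\|q^{(s+1)}_*)\le\delta_{s+1}=1/(s+1)^2$ for all $s<t$; then the ``moreover'' part of Corollary~\ref{cor:inner-complexity} gives a bound on $\KL(q^{(t)}\|q^{(t+1)}_*)$ that is \emph{uniform} in $t$, and the first part of Corollary~\ref{cor:inner-complexity} then shows that running the Langevin algorithm on \eqref{eq:subprob} with the prescribed step size for $T_t=O\!\left(\frac{\lambda_2\exp(8/\lambda_2)}{\lambda_1\eta_t}\log\frac{2\KL(q^{(t)}\|q^{(t+1)}_*)}{\delta_{t+1}}\right)$ iterations produces $q^{(t+1)}$ with $\KL(q^{(t+1)}\|q^{(t+1)}_*)\le\delta_{t+1}$, which is exactly {\bf(A3)} at step $t$. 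This closes the induction, so {\bf(A3)} holds for every $t\le T$.

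With {\bf(A3)} now in force, I would apply Theorem~\ref{thm:convergence} with $q_*$ taken to be the minimizer of $\cL$ over $\cP_2$ (which lies in $\cP_2$ with finite second moment and entropy, so $\lambda_1\bE_{q_*}[\|\theta\|_2^2]$ and $e(q_*)$ are finite constants). Treating $p,\lambda_1,\lambda_2$ as constants, the right-hand side of Theorem~\ref{thm:convergence} is $O(1/T^2)+O\!\left(\log^2(T+2)/T\right)=\tilde{O}(1/T)$, so choosing $T=\tilde{\Theta}(\epsilon^{-1})$ makes it at most $\epsilon$. Since Algorithm~\ref{alg:da} returns $h_{q^{(t)}}$ with $t$ sampled according to $\bP[t]=\frac{2t}{T(T+3)}$, the left-hand side $\frac{2}{T(T+3)}\sum_{t=2}^{T+1}t\left(\bE[\cL(q^{(t)})]-\cL(q_*)\right)$ is exactly $\bE[\cL(q^{(\mathrm{out})})]-\inf_{q\in\cP_2}\cL(q)$, which yields the claimed $\epsilon$-accuracy in expectation. (For the particle algorithm PDA one additionally invokes the finite-particle discretization bound of Appendix~\ref{sec:discretization} together with the inexact-$h_{q^{(t)}}$ version of Theorem~\ref{thm:convergence}.)

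Finally I would tally the total iteration cost. With $p,\lambda_1,\lambda_2$ fixed, the prescribed step size satisfies $\eta_t=\Theta(\delta_{t+1})=\Theta((t+1)^{-2})$, and by the induction above the logarithmic factor $\log\frac{2\KL(q^{(t)}\|q^{(t+1)}_*)}{\delta_{t+1}}$ is $O(\log(t+1))$ because the warm-start divergence is uniformly bounded; hence $T_t=\tilde{\Theta}(\eta_t^{-1})=\tilde{\Theta}(t^2)$. Summing over the outer loop, $\sum_{t=1}^{T}T_t=\tilde{\Theta}(T^3)=\tilde{\Theta}(\epsilon^{-3})$, which is the advertised $\tilde{O}(\epsilon^{-3})$ overall complexity.

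I expect the main obstacle to be the inductive step discharging {\bf(A3)}, and within it the claim that the warm-start divergence $\KL(q^{(t)}\|q^{(t+1)}_*)$ stays bounded uniformly in $t$ rather than accumulating over outer iterations. This requires quantifying how slowly consecutive subproblem solutions $q^{(t)}_*$ and $q^{(t+1)}_*$ drift as the averaging weights $s/((t+2)(t+1))$ evolve, and then combining that drift estimate with the log-Sobolev constant furnished by the \citet{holley1987logarithmic} perturbation lemma and with the contraction of the previous inner loop; propagating the $\exp(8/\lambda_2)$-type constants consistently through this chain is the delicate bookkeeping. Everything else is a direct substitution of the chosen $T$ and $\{T_t\}$ into Theorem~\ref{thm:convergence} and Corollary~\ref{cor:inner-complexity}.
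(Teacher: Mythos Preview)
Your proposal is correct and follows the same approach as the paper: the paper simply states that the corollary follows by combining Theorem~\ref{thm:convergence} and Corollary~\ref{cor:inner-complexity}, noting that $T_t=\tilde{\Theta}(t^2)=\tilde{O}(\epsilon^{-2})$ for $t\le T$, which sums to $\tilde{O}(\epsilon^{-3})$. Your explicit induction to discharge {\bf(A3)} is more careful than the paper's presentation but is exactly what the ``moreover'' clause of Corollary~\ref{cor:inner-complexity} (proved via Proposition~\ref{prop:boundedness-of-KL}) is designed to supply.
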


\paragraph{Quantitative convergence guarantee.} 
To translate the above convergence rate result to the finite-particle PDA (Algorithm~\ref{alg:pda}), we also characterize the finite-particle discretization error in Appendix~\ref{sec:discretization}. 
For the particle complexity analysis, we consider two versions of particle update: 
($i$) the \textit{warm-start} scheme described in Algorithm~\ref{alg:pda}, in which  $\Theta^{(1)}$ is initialized at the last iterate $\tilde{\Theta}^{(t)}$ of the previous inner loop, and ($ii$) the \textit{resampling} scheme, in which $\Theta^{(1)}$ is initialized from the initial distribution $q^{(1)}(\theta)\rd\theta$ (see Appendix~\ref{sec:proof} for details). 
Remarkably, for the resampling scheme, we provide convergence rate guarantee in time- and space-discretized settings that is \textit{polynomial in both the iterations and particle size}; specifically, the particle complexity of $\tilde{O}(\epsilon^{-2})$, together with the total iteration complexity of $\tilde{O}(\epsilon^{-3})$, suffices to obtain an $\epsilon$-accurate solution to the objective~\eqref{eq:rerm} (see Appendix~\ref{sec:proof} and \ref{sec:discretization} for precise statement).

\section{Experiments}\label{sec:experiments}

\subsection{Experiment Setup}

We employ our proposed algorithm in both synthetic student-teacher settings (see Figure~\ref{fig:experiment}(a)(b)) and real-world dataset (see Figure~\ref{fig:experiment}(c)). 
For the student-teacher setup, the labels are generated as $y_i = f_*(x_i) + \varepsilon_i$, where $f_*$ is the teacher model (target function), and $\varepsilon$ is zero-mean i.i.d.~label noise. 
For the student model $f$, we follow \citet[Section 2.1]{mei2018mean} 
and parameterize a two-layer neural network with fixed second layer as:   
\begin{align}
\label{eq:NN_experiment}
    f(x) = \frac{1}{M^\alpha} \sum_{r=1}^M \sigma (w_r^\top x + b_r),
\end{align}
which we train to minimize the objective~\eqref{eq:rerm} using PDA. Note that $\alpha=1$ corresponds to the mean field regime (which we are interested in), whereas setting $\alpha=1/2$ leads to the kernel (NTK) regime\footnote{We use the term \textit{kernel regime} only to indicate the parameter scaling $\alpha$; this does not necessarily imply that the NTK linearization is an accurate description of the trained model.}.

\vspace{-1mm}
\paragraph{Synthetic student-teacher setting.}
For Figure~\ref{fig:experiment}(a)(b) we design synthetic experiments for both regression and classification tasks, where the student model is a two-layer tanh network with $M=500$.  
For regression, we take the target function $f_*$ to be a multiple-index model with $m$ neurons: $f_*(x) = \frac{1}{\sqrt{m}}\sum_{i=1}^m\sigma_*(\langle w^*_i,x\rangle)$, and the input is drawn from a unit Gaussian $\mathcal{N}(0,I_p)$. 
For binary classification, we consider a simple two-dimensional dataset from \texttt{sklearn.datasets.make$\_$circles} \citep{scikit-learn}, in which the goal is to separate two groups of data on concentric circles (red and blue in Figure~\ref{fig:experiment}(b)). We include additional experimental results in Appendix~\ref{app:additional_experiment}.

\vspace{-1mm} 
\paragraph{PDA hyperparameters.}
We optimize the \textit{squared loss} for regression and the \textit{logistic loss} for binary classification. The model is trained by PDA with batch size 50. 
We scale the number of inner loop steps $T_t$ with $t$, and the step size $\eta_t$ with $1/\sqrt{t}$, where $t$ is the outer loop iteration; this heuristic is consistent with the required inner-loop accuracy in Theorem~\ref{thm:convergence} and Proposition~\ref{cor:total-complexity}.

\begin{figure}[htb!]
\centering
\hspace{-5mm}  
\begin{minipage}[t]{0.32\linewidth}
\centering
{\includegraphics[width=0.98\textwidth]{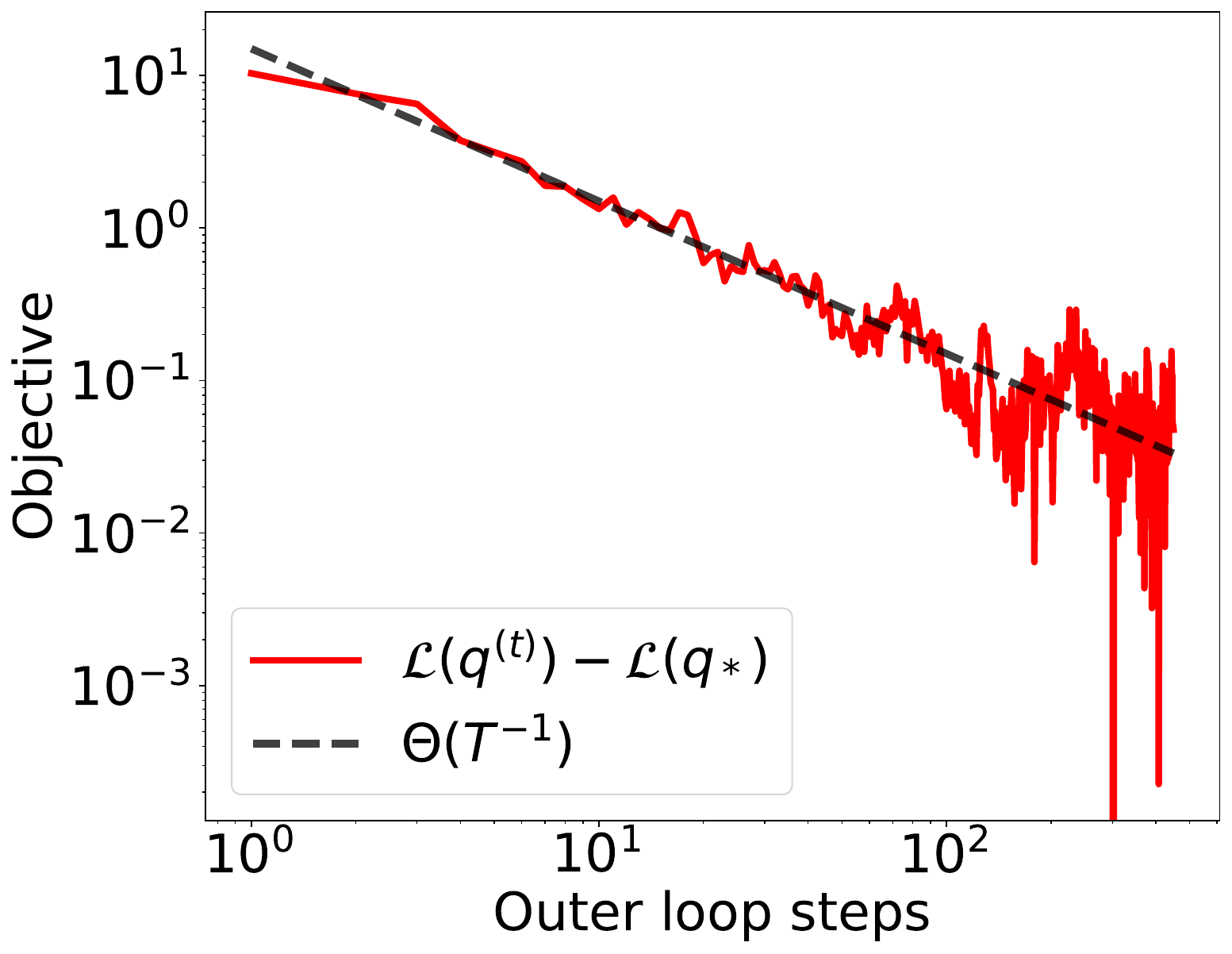}} \\ \vspace{-0.mm}
\small ~~(a) objective value \\~(regression). 
\end{minipage}
\begin{minipage}[t]{0.02\linewidth}
\quad 
\end{minipage}
\hspace{-5mm}  
\begin{minipage}[t]{0.32\linewidth}
\centering
{\vspace{-34mm} \includegraphics[width=0.92\textwidth]{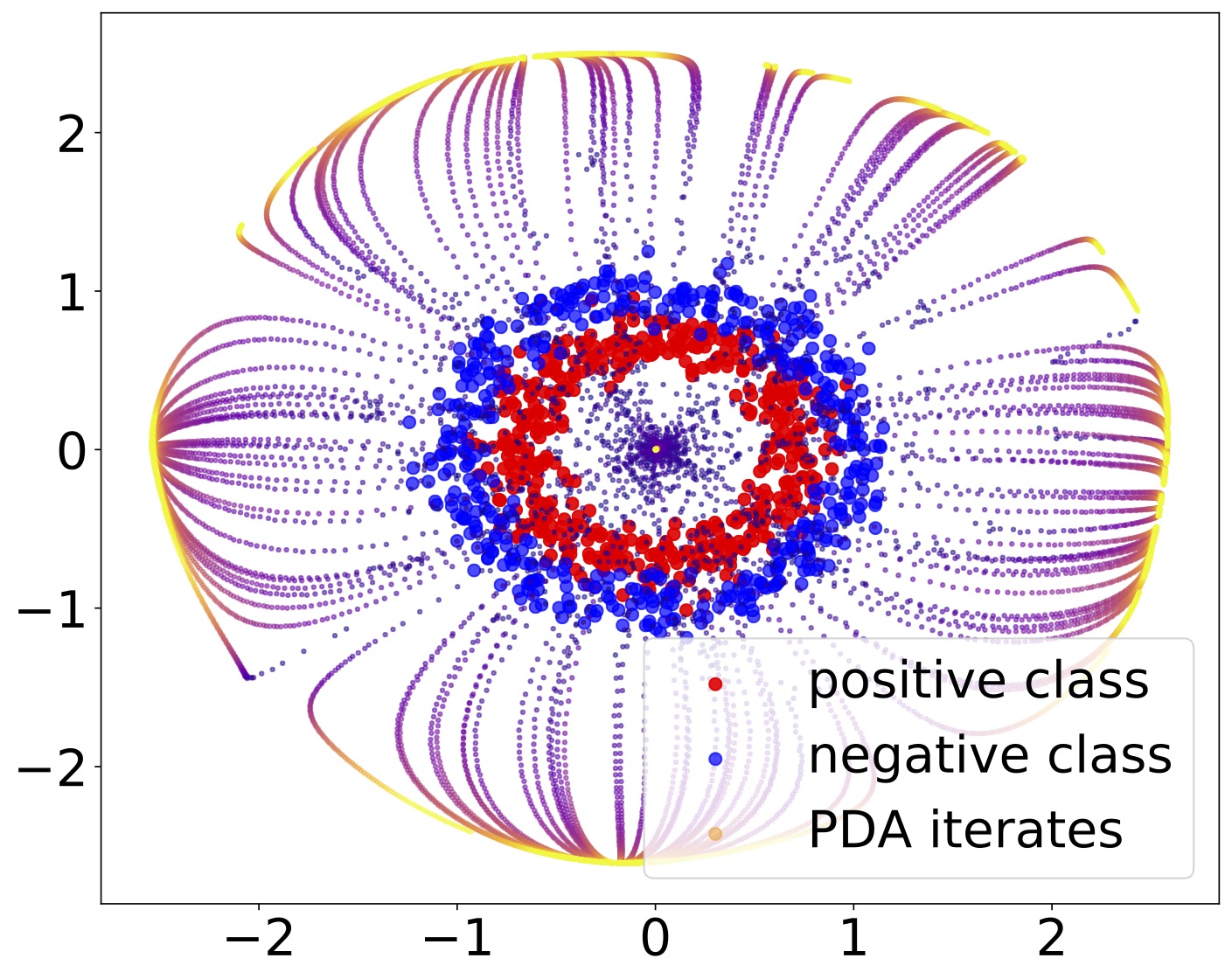}} \\ \vspace{1.1mm}
\small (b) parameter trajectory \\(classification). 
\end{minipage}
\begin{minipage}[t]{0.32\linewidth}
\centering 
{\includegraphics[width=1.1\textwidth]{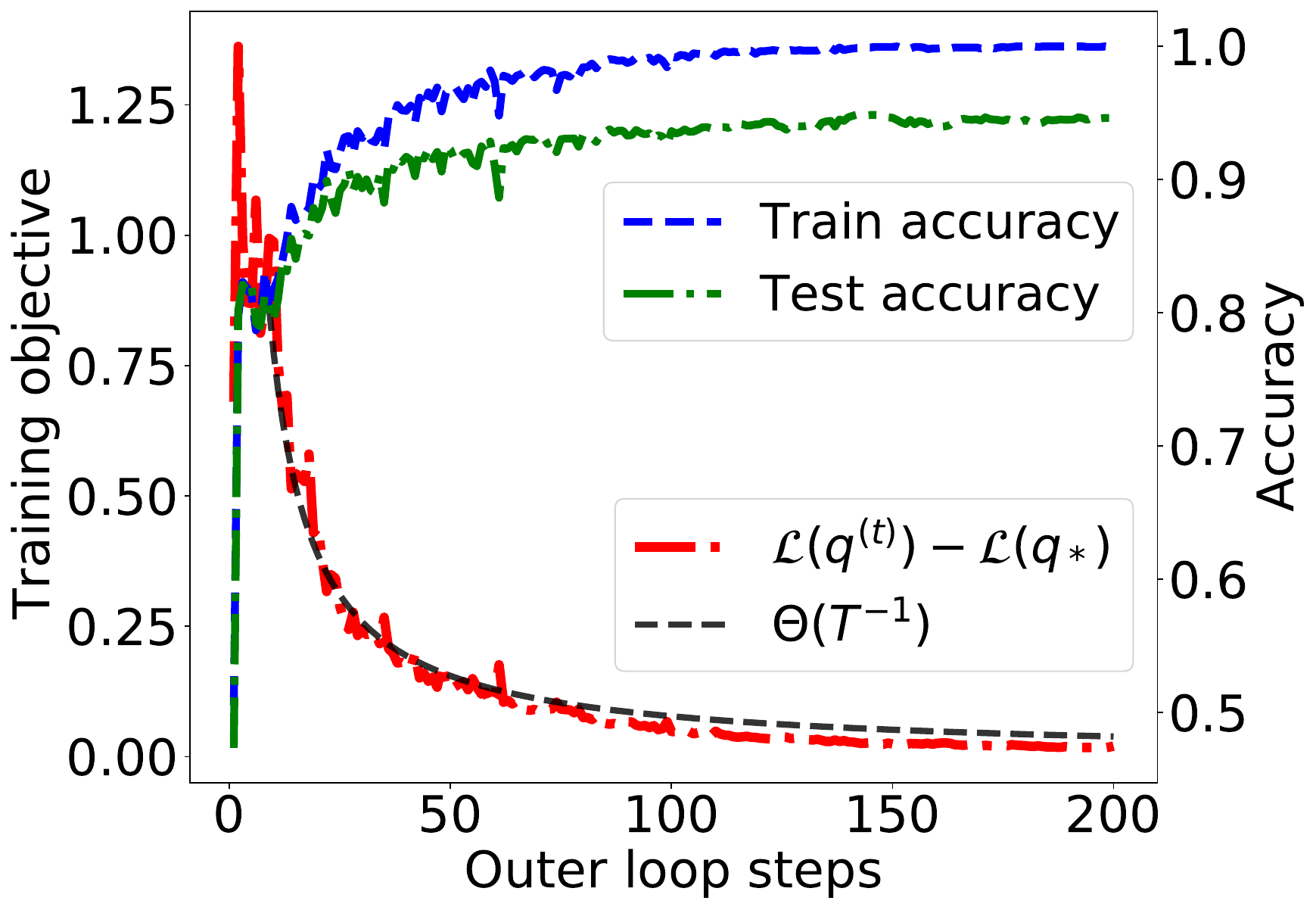}} \\ \vspace{-0.mm}
\small ~~(c) MNIST odd vs.~even \\~~(classification). 
\end{minipage} 
 \caption{\small (a) Iteration complexity of PDA: the $O(T^{-1})$ rate on the outer loop agrees with Theorem~\ref{thm:convergence}. (b) Parameter trajectory of PDA: darker color (purple) indicates earlier in training, and vice versa. (c) odd vs.~even classification on MNIST; we report the training loss (red) as well as the train and test accuracy (blue and green). }
\label{fig:experiment}
\end{figure} 

\subsection{Empirical Findings}

\paragraph{Convergence rate.} In Figure~\ref{fig:experiment}(a) we verify the $O(T^{-1})$ iteration complexity of the outer loop in Theorem~\ref{thm:convergence}. We apply PDA to optimize the expected risk (analogous to one-pass SGD) in the regression setting, in which the input dimensionality $p=1$ and the target function is a single-index model ($m=1$) with tanh activation. We employ the \textit{resampled} update (i.e., without warm-start; see Appendix~\ref{sec:proof}) with hyperparameters $\lambda_1=10^{-2}, \lambda_2=10^{-3}$. To compute the entropy in the objective \eqref{eq:rerm}, we adopt the $k$-nearest neighbors estimator \citep{kozachenko1987sample} with $k=10$.  
  
\vspace{-0.5mm} 
\paragraph{Presence of feature learning.}
In Figure~\ref{fig:experiment}(b) we visualize the evolution of neural network parameters optimized by PDA in a 2-dimensional classification problem. 
Due to structure of the input data (concentric rings), we expect that for a two-layer neural network to be a good separator, its parameters should also distribute on a circle. Indeed the converged solution of PDA (bright yellow) agrees with this intuition and demonstrates that PDA learns useful features beyond the kernel regime.  
  
\vspace{-0.5mm} 
\paragraph{Binary classification on MNIST.} In Figure~\ref{fig:experiment}(c) we report the training and test performance of PDA in separating odd vs.~even digits from the MNIST dataset. We subsample $n=2500$ training examples with binary labels, and learn a two-layer tanh network with width $M=2500$. We use the resampled update of PDA to optimize the cross entropy loss, with hyperparameters $\lambda_1=10^{-2},  \lambda_2=10^{-4}$. Observe that the algorithm achieves good generalization performance (green) and roughly maintains\footnote{Note that the estimated training objective (red) slightly deviates from the ideal $1/T$-rate; this may be due to inaccuracy in the entropy estimation, or non-convergence of the algorithm (i.e., overestimation of $\cL(q_*)$). } the $O(T^{-1})$ iteration complexity (red) in optimizing the training objective~\eqref{eq:rerm}.  
 

\section*{CONCLUSION}
We proposed the particle dual averaging (PDA) algorithm for optimizing two-layer neural networks in the mean field regime. Leveraging tools from finite-dimensional convex optimization developed in the original dual averaging method, we established \textit{quantitative} convergence rate of PDA for regularized empirical and expected risk minimization. We also provided particle complexity analysis and generalization bounds for both regression and classification problems. Our theoretical findings are aligned with experimental results on neural network optimization. Looking forward, we plan to investigate specific problem instances in which convergence rate can be obtained under vanishing regularization. It is also important to consider accelerated variants of PDA to further improve the convergence rate in the empirical risk minimization setting. Another interesting direction would be to explore other applications of PDA beyond two-layer neural networks, such as deep models \citep{araujo2019mean,nguyen2020rigorous,lu2020mean,pham2021global}, as well as other optimization problems for entropic regularized nonlinear functional.  

\bigskip
\section*{Acknowledgment}
The authors would like to thank Murat A.~Erdogdu and anonymous NeurIPS reviewers for their helpful feedback. 
AN was partially supported by JSPS Kakenhi (19K20337) and JST-PRESTO (JPMJPR1928).
DW was partially supported by NSERC and LG Electronics.
TS was partially supported by JSPS KAKENHI (18H03201), Japan Digital Design and JST CREST.

\bigskip

\bibliographystyle{apalike}
\bibliography{ref}

\clearpage
\onecolumn
\renewcommand{\thesection}{\Alph{section}}
\renewcommand{\thesubsection}{\Alph{section}. \arabic{subsection}}
\renewcommand{\thetheorem}{\Alph{theorem}}
\renewcommand{\thelemma}{\Alph{lemma}}
\renewcommand{\theproposition}{\Alph{proposition}}
\renewcommand{\thedefinition}{\Alph{definition}}
\renewcommand{\thecorollary}{\Alph{corollary}}
\renewcommand{\theassumption}{\Alph{assumption}}
\renewcommand{\theexample}{\Alph{example}}

\setcounter{section}{0}
\setcounter{subsection}{0}
\setcounter{theorem}{0}
\setcounter{lemma}{0}
\setcounter{proposition}{0}
\setcounter{definition}{0}
\setcounter{corollary}{0}
\setcounter{assumption}{0}

{
\renewcommand{\contentsname}{Table of Contents}
\tableofcontents
}
\newpage

\allowdisplaybreaks
\part*{\large MISSING PROOFS}
\section{Preliminaries}

\subsection{Entropic Regularized Linear Functional}
In this section, we explain the property of the optimal solution of the entropic regularized linear functional.
We here define the gradient of the negative entropy $\bE_q[\log(q)]$ with respect to $q$ over the probability space as $\nabla_q \bE_q[\log(q)] = \log(q)$.
Note that this gradient is well defined up to constants as a linear operator on the probability space: $q^\prime \mapsto \int (q^\prime-q)(\theta)\log(q(\theta))\rd\theta$.
The following lemma shows the strong convexity of the negative entropy.
\begin{lemma}\label{lem:st-conv}
Let $q, q^{\prime}$ be probability densities such that the entropy and Kullback-Leibler divergence $\KL(q^{\prime}\|q) = \int q^{\prime}(\theta)\log\left(\frac{q^{\prime}(\theta)}{q(\theta)}\right) \rd \theta$ are well defined.
Then, we have
\begin{align*}
&\bE_{q}[\log(q)] + \int (q^{\prime}-q)(\theta) \nabla_q\bE_{q}[\log(q)] \rd\theta + \KL(q^{\prime}\|q) = \bE_{q^{\prime}}[\log(q^{\prime})], \\
&\bE_{q}[\log(q)] + \int (q^{\prime}-q)(\theta) \nabla_q\bE_{q}[\log(q)] \rd\theta +\frac{1}{2}\| q^{\prime}-q\|_{L_1(\rd \theta)}^2
\leq \bE_{q^{\prime}}[\log(q^{\prime})].
\end{align*}
\end{lemma}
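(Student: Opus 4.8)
The statement is a standard strong-convexity bound for the negative-entropy functional $q \mapsto \bE_q[\log q]$ with respect to the $L_1(\rd\theta)$ norm, and the natural route is to reduce it to a pointwise inequality after identifying the left-hand side correction term with a Kullback-Leibler divergence. The plan is as follows. First I would expand the inner product using the definition $\nabla_q \bE_q[\log q] = \log q$: the term $\int (q' - q)(\theta)\log q(\theta)\,\rd\theta$ equals $\int q'\log q\,\rd\theta - \int q\log q\,\rd\theta$. Adding $\bE_q[\log q] = \int q\log q\,\rd\theta$ to this cancels the $\int q\log q$ contributions, so the desired inequality becomes
\[
\int q'(\theta)\log q(\theta)\,\rd\theta + \tfrac12 \|q' - q\|_{L_1(\rd\theta)}^2 \;\leq\; \int q'(\theta)\log q'(\theta)\,\rd\theta,
\]
which rearranges to $\tfrac12\|q'-q\|_{L_1(\rd\theta)}^2 \leq \int q'(\theta)\log\!\big(q'(\theta)/q(\theta)\big)\,\rd\theta = \KL(q'\|q)$. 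So the whole lemma is exactly Pinsker's inequality, and the remaining task is to prove $\|q'-q\|_{L_1}^2 \leq 2\,\KL(q'\|q)$.

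For the Pinsker step I would give the classical short argument. Define the pointwise function $\psi(u) = u\log u - u + 1$ for $u > 0$ (with $\psi(1)=0$), so that $\KL(q'\|q) = \int q(\theta)\,\psi\big(q'(\theta)/q(\theta)\big)\,\rd\theta$ since the $-q' + q$ terms integrate to zero. Then I would invoke the elementary pointwise inequality
\[
\psi(u) \;\geq\; \tfrac{1}{2}\Big(\tfrac{2}{3} + \tfrac{u}{3}\Big)(u-1)^2, \qquad u > 0,
\]
proved by a one-variable calculus check (the difference vanishes to second order at $u=1$ and has the right sign; this is the standard refinement used to derive Pinsker). Applying this with $u = q'(\theta)/q(\theta)$, multiplying by $q(\theta)$, and using Cauchy–Schwarz in the form
\[
\Big(\int |q' - q|\,\rd\theta\Big)^2 = \Big(\int \tfrac{|q'-q|}{\sqrt{(2q + q')/3}}\cdot \sqrt{\tfrac{2q+q'}{3}}\,\rd\theta\Big)^2 \leq \int \tfrac{(q'-q)^2}{(2q+q')/3}\,\rd\theta \cdot \int \tfrac{2q+q'}{3}\,\rd\theta,
\]
where the second factor equals $1$, yields $\|q'-q\|_{L_1}^2 \leq 2\,\KL(q'\|q)$ as required. (Alternatively one can cite Pinsker's inequality directly if the paper is willing to; but including the two-line proof keeps things self-contained.)

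The one point requiring a little care — and what I expect to be the only genuine obstacle — is integrability/finiteness bookkeeping: the manipulation that cancels $\int q\log q$ and the rewriting of the correction term as a KL divergence are only valid when all the integrals in play are absolutely convergent. The hypothesis already assumes the entropies of $q$ and $q'$ and $\KL(q'\|q)$ are well defined, so I would just remark that $\int q'\log q\,\rd\theta = \int q'\log q'\,\rd\theta - \KL(q'\|q)$ is then finite and the splitting is legitimate; no delicate limiting argument is needed. Everything else is the pointwise estimate plus Cauchy–Schwarz, so the proof is short once this reduction is made.
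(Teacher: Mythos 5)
Your reduction is exactly the paper's proof: the identity
\begin{equation*}
\bE_{q}[\log q] + \int (q'-q)(\theta)\log q(\theta)\,\rd\theta + \KL(q'\|q) = \bE_{q'}[\log q'],
\end{equation*}
followed by Pinsker's inequality $\tfrac12\|q'-q\|_{L_1(\rd\theta)}^2 \le \KL(q'\|q)$; the paper simply cites Pinsker at that point, and your integrability remark is fine. The only problem is in your optional self-contained proof of Pinsker: the displayed pointwise inequality $\psi(u) \ge \tfrac12\bigl(\tfrac23+\tfrac{u}{3}\bigr)(u-1)^2$ is written upside-down and is in fact \emph{false} for $u>1$ (at $u=2$ it reads $2\log 2 - 1 \approx 0.386 \ge 2/3$). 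The inequality your Cauchy--Schwarz step actually requires — and which is true — is $\psi(u) \ge \tfrac{3(u-1)^2}{2(u+2)}$, i.e.\ $(u-1)^2$ \emph{divided} by $(2+u)/3$ rather than multiplied by it; with that correction the first Cauchy--Schwarz factor is bounded by $2\,\KL(q'\|q)$ and the argument closes. Since you also offer citing Pinsker directly (which is what the paper does), this slip does not affect the validity of the overall route, but the pointwise bound should be fixed if you keep the self-contained version.
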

The first equality of this lemma can be shown by the direct computation of the entropy, and the second inequality can be obtained by Pinsker's inequality $\frac{1}{2}\| q^{\prime}-q\|_{L_1(\rd \theta)}^2 \leq \KL(q^{\prime}\|q)$.

Recall that $\cP_2$ is the set of positive densities on $\bR^p$ such that the second moment $\bE_q[\|\theta\|_2^2] < \infty$ and entropy $-\infty < -\bE_q[\log(q)] < +\infty$ are well defined.
We here consider the minimization problem of entropic regularized linear functional on $\cP_2$.
Let $\lambda_1, \lambda_2 > 0$ be positive real numbers and $H: \bR^p \rightarrow \bR$ be a bounded continuous function.
\begin{equation}\label{eq:entropy_and_linear_functional}
\min_{q \in \cP_2} \left\{ F(q) \defeq \bE_q[H(\theta)] + \lambda_1\bE_q[\|\theta\|_2^2]+ \lambda_2\bE_q[\log(q(\theta))] \right\}.
\end{equation}
Then, we can show $q \propto \exp\left( -\frac{H(\theta) + \lambda_1 \|\theta\|_2^2}{\lambda_2}\right)$ is an optimal solution of the problem (\ref{eq:entropy_and_linear_functional}) as follow.
Clearly, $q \in \cP_2$ and the assumption on $q$ in Lemma \ref{lem:st-conv} with $q^{\prime} \in \cP_2$ holds. 
Hence, for $\forall q^{\prime} \in \cP_2$, 
\begin{align}
F(q) 
&= \bE_q[H(\theta)] + \lambda_1\bE_q[\|\theta\|_2^2]+ \lambda_2\bE_q[\log(q(\theta))] \notag\\
&= \bE_{q^{\prime}}[H(\theta)] + \lambda_1\bE_{q^{\prime}}[\|\theta\|_2^2]+ \lambda_2\bE_{q^{\prime}}[\log(q^{\prime}(\theta))] \notag\\
&+ \int (q-q^{\prime})(\theta)\left( H(\theta) + \lambda_1 \|\theta\|_2^2\right) \rd\theta 
+ \lambda_2 \left( \bE_{q}[\log(q(\theta))] - \bE_{q^{\prime}}[\log(q^{\prime}(\theta))]  \right) \notag\\
&= F(q^{\prime}) 
+ \int (q-q^{\prime})(\theta)\left( H(\theta) + \lambda_1 \|\theta\|_2^2\right) \rd\theta  
+ \lambda_2 \left( \bE_{q}[\log(q(\theta))] - \bE_{q^{\prime}}[\log(q^{\prime}(\theta))]  \right) \notag\\
&\leq 
F(q^{\prime}) 
+ \int (q-q^{\prime})(\theta)\left( H(\theta) + \lambda_1 \|\theta\|_2^2\right) \rd\theta  
- \lambda_2 \left( \int (q^{\prime}-q)(\theta) \nabla_q\bE_{q}[\log(q)] \rd\theta +\frac{1}{2}\| q^{\prime}-q\|_{L_1(\rd \theta)}^2 \right) \notag\\
&=
F(q^{\prime}) 
+ \int (q-q^{\prime})(\theta)\left( H(\theta) + \lambda_1 \|\theta\|_2^2 + \lambda_2 \log(q(\theta))\right) \rd\theta - \frac{\lambda_2}{2}\| q^{\prime}-q\|_{L_1(\rd \theta)}^2 \notag\\
&=
F(q^{\prime})  - \frac{\lambda_2}{2}\| q^{\prime}-q\|_{L_1(\rd \theta)}^2. \label{eq:st-conv}
\end{align}
For the inequality we used Lemma \ref{lem:st-conv} and for the last equality we used $q \propto \exp\left( -\frac{H(\theta) + \lambda_1 \|\theta\|_2^2}{\lambda_2}\right)$.
Therefore, we conclude that $q$ is a minimizer of $F$ on $\cP_2$ and the strong convexity of $F$ holds at $q$ with respect to $L_1(\rd\theta)$-norm.
This crucial property is used in the proof of Theorem \ref{thm:convergence}.

\subsection{Log-Sobolev and Talagrand's Inequalities}
\label{app:log-sobolev}
The log-Sobolev inequality is useful in establishing the convergence rate of Langevin algorithm.
\begin{definition}[Log-Sobolev inequality]
\label{def:log-sobolev}
Let $\rd \mu = p(\theta)\rd \theta$ be a probability distribution with a positive smooth density $p>0$ on $\bR^p$.
We say that $\mu$ satisfies the log-Sobolev inequality with constant $\alpha>0$ if for any smooth function $f: \bR^p \rightarrow \bR$, 
\[ \bE_\mu[f^2 \log f^2] - \bE_{\mu}[f^2]\log\bE_\mu[f^2] \leq \frac{2}{\alpha}\bE_\mu[\|\nabla f\|_2^2]. \]
\end{definition}
This inequality is analogous to strong convexity in optimization: let $\rd \nu = q(\theta)\rd \mu$ be a probability distribution on $\bR^p$ such that $q$ is smooth and positive.
Then, if $\mu$ satisfies the log-Sobolev inequality with $\alpha$, it follows that 
\[ \KL(\nu||\mu) \leq \frac{1}{2\alpha} \bE_\nu[ \| \nabla_\theta \log q \|_2^2]. \]
The above relation is directly obtained by setting $f=\sqrt{q}$ in the definition of log-Sobolev inequality.
Note that the right hand side is nothing else but the squared norm of functional gradient of $\KL(\nu \| \mu)$ with respect to a transport map for $\nu$. 

It is well-known that strong log-concave densities satisfy the LSI with a dimension-free constant (up to the spectral norm of the covariance).
\begin{example}[\cite{bakry1985diffusions}]\label{ex:quadratic_sobolev}
Let $q \propto \exp(-f)$ be a probability density, where $f : \bR^p \rightarrow \bR$ is a smooth function. 
If there exists $c > 0$ such that $\nabla^2 f \succeq cI_p$, then $q(\theta)\rd\theta$ satisfies Log-Sobolev inequality with constant $c$.
\end{example}

In addition, the LSI is preserved under bounded perturbation, as originally shown in \cite{holley1987logarithmic}. We also provide a proof for completeness. 

\begin{lemma}[\cite{holley1987logarithmic}]
\label{lem:sobolev_perturbation}
Let $q(\theta)\rd\theta$ be a probability distribution on $\bR^p$ satisfying the log-Sobolev inequality with a constant $\alpha$.
For a bounded function $B: \bR^p \rightarrow \bR$, we define a probability distribution $q_B(\theta)\rd\theta$ as follows:
\[ q_B(\theta)\rd \theta = \frac{\exp(B(\theta))q(\theta)}{ \bE_q[\exp(B(\theta))]} \rd\theta. \]
Then, $q_B \rd\theta$ satisfies the log-Sobolev inequality with a constant $\alpha / \exp(4\|B\|_\infty)$.
\end{lemma}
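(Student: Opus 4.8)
\textbf{Proof proposal for Lemma~\ref{lem:sobolev_perturbation} (Holley--Stroock perturbation).}

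The plan is to use the variational characterization of the log-Sobolev constant and control the distortion introduced by the density ratio $\exp(B)/\bE_q[\exp(B)]$, which is pinched between $\exp(-2\|B\|_\infty)$ and $\exp(2\|B\|_\infty)$. First I would recall the standard reformulation of the LSI: writing $\mathrm{Ent}_\mu(g) \defeq \bE_\mu[g\log g] - \bE_\mu[g]\log\bE_\mu[g]$ for $g \geq 0$, the inequality in Definition~\ref{def:log-sobolev} with $g = f^2$ says $\mathrm{Ent}_q(g) \leq \tfrac{2}{\alpha}\bE_q[\|\nabla\sqrt{g}\|_2^2]$ for all suitable $g$; so it suffices to bound $\mathrm{Ent}_{q_B}(g)$ by $\tfrac{2}{\alpha}\exp(4\|B\|_\infty)\bE_{q_B}[\|\nabla\sqrt{g}\|_2^2]$.

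The key step is a two-sided comparison between entropy-type functionals under $q$ and $q_B$. For the Dirichlet-form (right-hand) side this is immediate: since $q_B = c\,e^{B} q$ with $c^{-1} = \bE_q[e^B] \in [e^{-\|B\|_\infty}, e^{\|B\|_\infty}]$, we have pointwise $e^{-2\|B\|_\infty} q \leq q_B \leq e^{2\|B\|_\infty} q$, hence $\bE_q[\|\nabla\sqrt g\|_2^2] \leq e^{2\|B\|_\infty}\bE_{q_B}[\|\nabla\sqrt g\|_2^2]$. For the entropy (left-hand) side I would use the well-known variational formula $\mathrm{Ent}_\mu(g) = \inf_{s>0}\bE_\mu[g\log g - g\log s - g + s]$, whose integrand $\phi_s(g) = g\log g - g\log s - g + s$ is nonnegative for every $s>0$. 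Applying this with $\mu = q_B$ and the \emph{same} $s$ that is optimal (or near-optimal) for $q$, nonnegativity of $\phi_s$ lets us pass from $q_B$ to $q$ at the cost of a single factor: $\mathrm{Ent}_{q_B}(g) \leq \bE_{q_B}[\phi_s(g)] \leq e^{2\|B\|_\infty}\bE_q[\phi_s(g)]$, and taking $s$ optimal for $q$ gives $\mathrm{Ent}_{q_B}(g) \leq e^{2\|B\|_\infty}\mathrm{Ent}_q(g)$. Chaining the two comparisons with the LSI for $q$,
\[
\mathrm{Ent}_{q_B}(g) \leq e^{2\|B\|_\infty}\mathrm{Ent}_q(g) \leq e^{2\|B\|_\infty}\cdot\frac{2}{\alpha}\bE_q[\|\nabla\sqrt g\|_2^2] \leq \frac{2}{\alpha}e^{4\|B\|_\infty}\bE_{q_B}[\|\nabla\sqrt g\|_2^2],
\]
which is exactly the LSI for $q_B$ with constant $\alpha e^{-4\|B\|_\infty}$.

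The main obstacle — really the only delicate point — is justifying the variational formula $\mathrm{Ent}_\mu(g) = \inf_{s>0}\bE_\mu[\phi_s(g)]$ and the pointwise nonnegativity of $\phi_s$; this follows from convexity of $u\mapsto u\log u$ (so $\phi_s(g) \geq 0$ by the tangent-line inequality at $u = s$, after rescaling) and the optimal choice $s = \bE_\mu[g]$ recovering $\mathrm{Ent}_\mu(g)$. One should also note the mild integrability bookkeeping (it suffices to prove the inequality for a dense class of nice $f$, e.g. bounded smooth $f$ bounded away from $0$, then extend by approximation), but since $B$ is bounded the measures $q$ and $q_B$ are mutually absolutely continuous with bounded density ratio, so no integrability issue is created by the perturbation itself.
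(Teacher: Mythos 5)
Your proposal is correct and follows essentially the same route as the paper: the paper's proof is precisely the Bregman-divergence form of your variational formula (the integrand $f^2\log f^2 - f^2\log a - f^2 + a$ is your $\phi_a(f^2)$), with the nonnegativity of that integrand used to transfer the entropy from $q_B$ to $q$ at cost $\exp(2\|B\|_\infty)$ and the bounded density ratio used again on the Dirichlet form for the second factor of $\exp(2\|B\|_\infty)$. No substantive difference.
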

\begin{proof}
Taking an expectation $\bE_{q_B}$ of the Bregman divergence defined by a convex function $x\log x$, for $\forall a >0$,
\begin{align*}
0 &\leq \bE_{q_B}\left[ f^2(\theta)\log (f^2(\theta)) - (a\log(a) + (\log(a) +1)( f^2(\theta)-a)) \right] \\
&=  \bE_{q_B}\left[ f^2(\theta)\log (f^2(\theta)) - (f^2(\theta) \log(a) + f^2(\theta) -a)\right].
\end{align*}
Since the minimum is attained at $a=\bE_{q_B}[f^2(\theta)]$,
\begin{align*}
0 &\leq \bE_{q_B}\left[ f^2(\theta)\log (f^2(\theta)) \right] - \bE_{q_B}[f^2(\theta)] \log \bE_{q_B}[f^2(\theta)] \\
&=  \inf_{a > 0} \bE_{q_B}\left[ f^2(\theta)\log (f^2(\theta)) - (f^2(\theta) \log(a) + f^2(\theta) -a)\right] \\
&\leq \exp(2\|B\|_\infty) \inf_{a > 0} \bE_{q}\left[ f^2(\theta)\log (f^2(\theta)) - (f^2(\theta) \log(a) + f^2(\theta) -a)\right] \\
&= \exp(2\|B\|_\infty) \left( \bE_{q}\left[ f^2(\theta)\log (f^2(\theta)) \right] - \bE_{q}[f^2(\theta)] \log \bE_{q}[f^2(\theta)] \right) \\
&\leq \frac{2\exp(2\|B\|_\infty) }{\alpha} \bE_q \left[ \| \nabla f \|_2^2 \right] \\
&= \frac{2\exp(2\|B\|_\infty) }{\alpha} \bE_{q_B} \left[ \frac{ \bE_{q}[\exp(B(\theta))]}{ \exp(B(\theta)) }\| \nabla f \|_2^2 
\right] \\
&\leq \frac{2\exp(4\|B\|_\infty) }{\alpha} \bE_{q_B} \left[ \| \nabla f \|_2^2 \right],
\end{align*}
where we used the non-negativity of the integrand for the second inequality.
\end{proof}

We next introduce Talagrand's inequality.
\begin{definition}[Talagrand's inequality]
We say that a probability distribution $q(\theta)\rd\theta$ satisfies Talagrand's inequality with a constant $\alpha > 0$ if for any probability distribution $q^{\prime}(\theta)\rd\theta$ ,
\[  \frac{\alpha}{2}W_2^2(q^{\prime},q) \leq \KL(q^{\prime} \| q), \]
where $W_2(q^{\prime},q)$ denotes the $2$-Wasserstein distance between $q(\theta)\rd\theta$ and $q^{\prime}(\theta)\rd\theta$.
\end{definition}

The next theorem gives a relationship between KL divergence and $2$-Wasserstein distance.
\begin{theorem}[\cite{otto2000generalization}]\label{thm:otto-villani}
If a probability distribution $q(\theta)\rd\theta$ satisfies the log-Sobolev inequality with constant $\alpha>0$, 
then $q(\theta)\rd\theta$ satisfies Talagrand's inequality with the same constant.
\end{theorem}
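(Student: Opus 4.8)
The plan is to establish the Otto--Villani implication via the Hamilton--Jacobi / Hopf--Lax argument of Bobkov--Gentil--Ledoux, which uses the log-Sobolev inequality exactly in the form of Definition~\ref{def:log-sobolev} together with two standard duality identities. Write $q\,\rd\theta$ for the fixed measure satisfying the LSI with constant $\alpha$, and for a bounded Lipschitz $f:\bR^p\to\bR$ and $t>0$ let
\[
Q_tf(x)\defeq\inf_{y\in\bR^p}\Big\{f(y)+\tfrac{1}{2t}\|x-y\|_2^2\Big\},\qquad Q_0f\defeq f,
\]
be the Hopf--Lax infimum-convolution semigroup. I will use three facts: (i) $u(t,x)=Q_tf(x)$ solves the Hamilton--Jacobi equation $\partial_t u=-\tfrac12\|\nabla_x u\|_2^2$ for $t>0$; (ii) Kantorovich duality for the quadratic cost, $\tfrac12 W_2^2(q',q)=\sup_f\big(\bE_{q'}[Q_1f]-\bE_q[f]\big)$, the supremum over bounded Lipschitz $f$; and (iii) the Gibbs variational principle $\KL(q'\|q)\geq\bE_{q'}[g]-\log\bE_q[e^g]$ for every bounded $g$.

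The argument reduces everything to the \emph{infimum-convolution inequality}
\begin{equation}\label{eq:infconv-proposal}
\bE_q\!\left[e^{\alpha Q_1f}\right]\leq e^{\alpha\bE_q[f]}\qquad\text{for every bounded Lipschitz }f .
\end{equation}
Granting \eqref{eq:infconv-proposal}, apply (iii) with $g=\alpha Q_1f$ to get $\KL(q'\|q)\geq\alpha\bE_{q'}[Q_1f]-\log\bE_q[e^{\alpha Q_1f}]\geq\alpha\big(\bE_{q'}[Q_1f]-\bE_q[f]\big)$; taking the supremum over $f$ and using (ii) yields $\KL(q'\|q)\geq\tfrac{\alpha}{2}W_2^2(q',q)$, i.e. Talagrand's inequality with constant $\alpha$, for every probability density $q'$.

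To prove \eqref{eq:infconv-proposal} I would introduce $\Lambda(t)\defeq\tfrac1t\log\bE_q\big[e^{t\,Q_{t/\alpha}f}\big]$ for $t\in(0,\alpha]$ and show $\Lambda'\leq0$. Setting $u(t,\cdot)=Q_{t/\alpha}f$, fact (i) and the chain rule give $\partial_t u=-\tfrac1{2\alpha}\|\nabla u\|_2^2$. Differentiating $\Lambda$ and multiplying by $t^2\bE_q[e^{tu}]$ produces $-\bE_q[e^{tu}]\log\bE_q[e^{tu}]+t\,\bE_q[u\,e^{tu}]-\tfrac{t^2}{2\alpha}\bE_q[\|\nabla u\|_2^2 e^{tu}]$; applying Definition~\ref{def:log-sobolev} to the test function $g=e^{tu/2}$ (so that $g^2\log g^2=t\,u\,e^{tu}$, $\bE_q[g^2]=\bE_q[e^{tu}]$, and $\|\nabla g\|_2^2=\tfrac{t^2}{4}\|\nabla u\|_2^2 e^{tu}$) bounds the first two terms above by $\tfrac{t^2}{2\alpha}\bE_q[\|\nabla u\|_2^2 e^{tu}]$, so the whole expression is $\leq0$ and hence $\Lambda'(t)\leq0$. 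A Taylor expansion of $\log\bE_q[e^{tu}]$ as $t\to0^+$, together with $|Q_sf-f|\leq\tfrac{s}{2}\mathrm{Lip}(f)^2$, gives $\lim_{t\to0^+}\Lambda(t)=\bE_q[f]$, whence $\Lambda(\alpha)\leq\bE_q[f]$, which is exactly \eqref{eq:infconv-proposal}.

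The genuinely delicate part is not this cancellation — which is the whole engine and is clean — but the analytic bookkeeping around it: that the Hopf--Lax formula yields a function to which the pointwise Hamilton--Jacobi identity may legitimately be applied, differentiation under the integral sign in $\Lambda$, the $t\to0^+$ limit, and passing from smooth/bounded/Lipschitz $f$ to the class of test functions needed in Kantorovich duality. These are handled by the standard inf-convolution regularization (replace $f$ by $Q_\varepsilon f$, which is globally Lipschitz and semiconcave, then let $\varepsilon\to0$) and a density argument; since Definition~\ref{def:log-sobolev} and Theorem~\ref{thm:otto-villani} are quoted from \cite{otto2000generalization}, I would record these steps and refer there for their routine verification. (An alternative would be the original heat-semigroup interpolation argument of \cite{otto2000generalization}, but the Hopf--Lax route avoids invoking Wasserstein gradient-flow machinery not otherwise set up here.)
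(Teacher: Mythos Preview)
The paper does not give its own proof of this theorem; it simply states the result and cites \cite{otto2000generalization}. Your argument is the standard Bobkov--Gentil--Ledoux proof via the Hopf--Lax semigroup and is correct: the monotonicity of $\Lambda$ follows exactly from the LSI applied to the test function $e^{tu/2}$, the limit $\Lambda(t)\to\bE_q[f]$ as $t\to0^+$ is as you describe, and the passage from the infimum-convolution inequality to Talagrand via Kantorovich duality and the Gibbs variational principle is clean.

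Since the paper has no proof to compare against, the relevant comparison is with the original \cite{otto2000generalization} argument, which you yourself mention: that one interpolates along the heat semigroup and uses displacement-convexity / Wasserstein gradient-flow structure, whereas your Hopf--Lax route replaces all of that by a one-line derivative computation on $\Lambda$ plugged straight into Definition~\ref{def:log-sobolev}. The price, as you correctly flag, is the analytic bookkeeping (a.e.\ validity of the Hamilton--Jacobi identity for $Q_tf$, differentiation under the integral, and the approximation step in Kantorovich duality), all of which are standard and appear in the Bobkov--Gentil--Ledoux reference. Given that the paper only needs the statement as a black box, either route is more than sufficient here.
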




\section{Proof of Main Results} \label{sec:proof}
\subsection{Extension of Algorithm} \label{subsec:extension}
In this section, we prove the main theorem that provides the convergence rate of the dual averaging method. 
We first introduce a slight extension of PDA (Algorithm~\ref{alg:pda}) which incorporates two different initializations at each outer loop step. We refer to the two versions as the \textit{warm-start} and the \textit{resampled} update, respectively. Note that Algorithm~\ref{alg:pda} in the main text only includes the warm-start update. In Appendix~\ref{sec:discretization} we provide particle complexity analysis for both updates. We remark that the benefit of resampling strategy is the simplicity of estimation of approximation error $| h_x^{(t)} - h_{q^{(t)}}(x_t)|$, because  $h_x^{(t)}$ is composed of i.i.d particles and a simple concentration inequality can be applied to estimate this error. 

\begin{algorithm}[ht]
  \caption{Particle Dual Averaging (\textit{general version})}
  \label{alg:pda2}
\begin{algorithmic}
  \STATE {\bfseries Input:}
  data distribution $\cD$,
  initial density $q^{(1)}$,
  number of outer-iterations $T$,
  learning rates $\{\eta_t\}_{t=1}^T$,
  number of inner-iterations $\{T_t\}_{t=1}^T$
\vspace{1mm}
  \STATE Randomly draw i.i.d.~initial parameters $\tilde{\theta}_r^{(1)} \sim q^{(1)}(\theta)\rd\theta$ $(r\in\{1,2,\ldots,M\})$
  \STATE $\tilde{\Theta}^{(1)} \leftarrow \{\tilde{\theta}_r^{(1)}\}_{r=1}^M$
\vspace{1mm}
   \FOR{$t=1$ {\bfseries to} $T$}
   \STATE Randomly draw a data $(x_t,y_t)$ from $\cD$ \\
   {\color{blue} \textbf{Either} $\Theta^{(1)}=\{\theta_r^{(1)}\}_{r=1}^M \leftarrow \tilde{\Theta}^{(t)}$ ~(\textit{warm-start})  \\\textbf{Or} randomly initialize $\Theta^{(1)}$ from $q^{(1)}(\theta)\rd\theta$ ~(\textit{resampling}) }
\vspace{1mm}
     \FOR{$k=1$ {\bfseries to} $T_t$}
       \STATE Run an inexact noisy gradient descent for $r\in\{1,2,\ldots,M\}$\\
       $\nabla_\theta \overline{g}^{(t)}(\theta_r^{(k)}) \leftarrow \frac{2}{\lambda_2(t+2)(t+1)}\sum_{s=1}^t s \partial_z\ell( h_{\tilde{\Theta}^{(s)}}(x_{s}),y_{s}) \partial_\theta h(\theta_r^{(k)},x_{s}) + \frac{2\lambda_1 t}{\lambda_2(t+2)} \theta_r^{(k)}$\\
       $\theta^{(k+1)}_r \leftarrow \theta^{(k)}_r - \eta_t \nabla_\theta \overline{g}^{(t)}(\theta^{(k)}_r) + \sqrt{2\eta_t}\zeta_r^{(k)}$~(i.i.d.~Gaussian noise~$\zeta_r^{(k)} \sim \cN(0,I_p)$)
     \ENDFOR
\vspace{1mm}
   \STATE $\tilde{\Theta}^{(t+1)} \leftarrow \Theta^{(T_t+1)} = \{\theta_r^{(T_t+1)}\}_{r=1}^M$
   \ENDFOR
\vspace{1mm}   
\STATE Randomly pick up $t \in \{2,3,\ldots,T+1\}$ following the probability $\bP[t]=\frac{2t}{T(T+3)}$ and return $h_{\tilde{\Theta}^{(t)}}$
\end{algorithmic}
\end{algorithm}

We also extend the mean field limit (Algorithm~\ref{alg:da}) to take into account the inexactness in computing $h_{q^{(t)}}(t)$. 
This relaxation is useful in convergence analysis of Algorithm \ref{alg:pda2} with resampling because it allows us to regard this method as an instance of the generalized DA method (Algorithm \ref{alg:da2}) by setting an inexact estimate $h_x^{(t)} = h_{\tilde{\Theta}^{(t)}}(x_t)$, instead of the exact value of $h_{q^{(t)}}(t)$, which is actually used to defined the potential for which Langevin algorithm run in Algorithm \ref{alg:pda2}. This means convergence analysis of Algorithm \ref{alg:da2} (Theorem \ref{thm:convergence2}) immediately provides a convergence guarantee for Algorithm \ref{alg:pda2} if the discretization error $| h_x^{(t)} - h_{q^{(t)}}(x_t)|$ can be estimated (as in the resampling scheme).

On the other hands, the convergence analysis of warm-start scheme requires the convergence of mean field limit due to certain technical difficulties, that is, we show the convergence of Algorithm \ref{alg:pda2} with warm-start by coupling the update with its mean field limit (Algorithm \ref{alg:da}) and taking into account the discretization error which stems from finite-particle approximation.

\begin{algorithm}[ht]
  \caption{Dual Averaging (\textit{general version})}
  \label{alg:da2}
\begin{algorithmic}
  \STATE {\bfseries Input:}
  data distribution $\cD$ and
  initial density $q^{(1)}$
\vspace{1mm}
   \FOR{$t=1$ {\bfseries to} $T$}
   \STATE Randomly draw a data $(x_t,y_t)$ from $\cD$ \\
   \STATE \textcolor{blue}{Compute an approximation $h^{(t)}_x$ of $h_{q^{(t)}}(x_{t})$} \\
   \textcolor{blue}{$g^{(t)} \leftarrow \partial_z\ell( h^{(t)}_x,y_{t}) h(\cdot,x_{t}) + \lambda_1\|\cdot\|_2^2$} \\
   Obtain an approximation $q^{(t+1)}$ of the density function $q^{(t+1)}_* \propto \exp\left( - \frac{\sum_{s=1}^t 2s g^{(s)}}{\lambda_2 (t+2)(t+1)}\right)$
   \ENDFOR
\vspace{1mm}   
\STATE Randomly pick up $t \in \{2,3,\ldots,T+1\}$ following the probability $\bP[t]=\frac{2t}{T(T+3)}$ and return $h_{q^{(t)}}$
\end{algorithmic}
\end{algorithm}

We now present generalized version of the outer loop convergence rate of DA. 
We highlight the tolerance factor $\epsilon$ in the generalized assumption \textbf{(A3')} in blue.
\begin{assumption} \textcolor{blue}{Let $\epsilon > 0$ be a given accuracy. }
\begin{description}[topsep=0mm,itemsep=0mm] \label{assump:convergence2}
\item{{\bf(A1')}} $\cY \subset [-1,1]$. $\ell(z,y)$ is a smooth convex function w.r.t.~$z$ and $|\partial_z \ell(z,y)| \leq 2$ for $y,z \in \cY$ \textcolor{blue}{and $\partial \ell(\cdot,y)$ is $1$-Lipschitz continuous for $y \in \cY$}. 
\item{{\bf(A2')}} $|h_\theta(x)| \leq 1$ and $h(\theta,x)$ is smooth w.r.t.~$\theta$ for $x \in \cX$.
\item{{\bf(A3')}} $\KL(q^{(t+1)}\| q^{(t+1)}_*) \leq 1 / t^2$ and \textcolor{blue}{$| h^{(t)}_{x} - h_{q^{(t)}}(x_{t}) | \leq \epsilon$} for $t\geq 1$.
\end{description}
\end{assumption} 

\vspace{-2.5mm}
\paragraph{Remark.}
The new condition of {\bf (A3')} allows for inexactness of computing $h_{q^{(t)}}(x_t)$. 
When showing solely the convergence of the Algorithm \ref{alg:da} which is the exact mean-field limit, the original assumptions {\bf(A1)}, {\bf(A2)}, and {\bf(A3)} are sufficient, in other words, we can take $\epsilon=0$ and Lipschitz continuity of $\partial_z \ell(\cdot,y)$ in {\bf(A1')} can be relaxed.

\begin{theorem}[Convergence of general DA]\label{thm:convergence2}
Under Assumptions {\bf(A1')}, {\bf(A2')}, and {\bf(A3')} \textcolor{blue}{with $\epsilon \geq 0$}, for arbitrary $q_* \in \cP_2$, iterates of the general DA method (Algorithm \ref{alg:da2}) satisfies
\begin{align*}
&\hspace{-7mm}\frac{2}{T(T+3)} \sum_{t=2}^{T+1}t \left( \bE[\cL(q^{(t)})] - \cL(q_*) \right) \\
&\leq 
\textcolor{blue}{2\epsilon+}
O\Bigl( \frac{1}{T^2}\left(1+\lambda_1 \bE_{q_*}\left[ \|\theta\|_2^2\right] \right)
+ \frac{ \lambda_2  e(q_*) }{T} + \frac{ \lambda_2 }{T}(1 + \exp(8/\lambda_2))p^2 \log^2(T+2) \Bigr),
\end{align*}
where the expectation $\bE[\cL(q^{(t)})]$ is taken with respect to the history of examples.
\end{theorem}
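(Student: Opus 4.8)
The plan is to mimic the classical dual averaging analysis for convex optimization, adapted to the probability space, exploiting two facts already established: (i) the closed-form Boltzmann minimizer of the entropic-regularized linear functional is strongly convex with respect to the $L_1(\rd\theta)$ norm (the displayed inequality \eqref{eq:st-conv}), and (ii) the negative entropy is $1$-strongly convex w.r.t.\ $\|\cdot\|_{L_1}$ (Lemma~\ref{lem:st-conv}). First I would set up the ``prox-center'' machinery: define the weighted partial sums $G^{(t)}=\sum_{s=1}^t s\,g^{(s)}$ and the regularizer $\psi_t(q)=\tfrac{\lambda_2}{2}(t+2)(t+1)\bE_q[\log q]$, so that $q_*^{(t+1)}$ is the exact minimizer of $\bE_q[G^{(t)}]+\psi_t(q)$ over $\cP_2$. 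The standard telescoping lemma for dual averaging gives, for any comparator $q_*$,
\[
\sum_{t=1}^{T} t\Big(\bE_{q^{(t)}}[g^{(t)}]-\bE_{q_*}[g^{(t)}]\Big)\;\le\;\psi_T(q_*)-\psi_0 + \sum_{t=1}^{T}\Big(\text{per-step error terms}\Big),
\]
where the per-step terms come from two sources: the strong convexity ``stability'' of consecutive minimizers (bounding $\bE_{q_*^{(t+1)}}[g^{(t+1)}]-\bE_{q^{(t+1)}}[g^{(t+1)}]$ using the $L_1$-strong convexity together with boundedness $\|g^{(t)}\|_\infty\le 3$ from {\bf(A1')},{\bf(A2')}), and the approximation error $\KL(q^{(t+1)}\|q_*^{(t+1)})\le 1/t^2$ from {\bf(A3')}, converted to an $L_1$ bound via Pinsker. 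I would then convert $\bE_{q^{(t)}}[g^{(t)}]-\bE_{q_*}[g^{(t)}]$ into a bound on the loss gap: since $g^{(t)}$ is the functional gradient of $q\mapsto \ell(h^{(t)}_x,y_t)+\lambda_1\bE_q[\|\theta\|_2^2]$ evaluated at $q^{(t)}$, convexity of $\ell$ in its first argument gives $\ell(h_{q^{(t)}}(x_t),y_t)-\ell(h_{q_*}(x_t),y_t)\le \bE_{q^{(t)}}[g^{(t)}]-\bE_{q_*}[g^{(t)}] + (\text{entropy terms})$; the inexactness $|h^{(t)}_x-h_{q^{(t)}}(x_t)|\le\epsilon$ together with $|\partial_z\ell|\le 2$ contributes an additive $2\epsilon$ per step (whence the $2\epsilon$ in the bound), and the $1$-Lipschitzness of $\partial_z\ell$ in {\bf(A1')} is what lets us control the discrepancy between the linearization at $h^{(t)}_x$ versus $h_{q^{(t)}}(x_t)$ cleanly.

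Next I would handle the entropy/KL regularization bookkeeping. The weight $\tfrac{\lambda_2}{2}(t+2)(t+1)$ on $\bE_q[\log q]$ in the subproblem is chosen precisely so that after dividing by the normalization $\tfrac{2}{T(T+3)}$ the leading term becomes $O(\lambda_2 e(q_*)/T)$: here $e(q_*)$ absorbs both $\bE_{q_*}[\log q_*]$ and the lower bound on the entropy of the Boltzmann iterates $q_*^{(t)}$ (this lower bound is where the $-\tfrac{4}{\lambda_2}-\tfrac{p}{2}(\exp(4/\lambda_2)+\log(3\pi\lambda_2/\lambda_1))$ correction comes from — it follows from the Gaussian comparison and the bound $\|g^{(t)}\|_\infty\le 3$, so that $q_*^{(t)}$ is a bounded perturbation of a Gaussian and one can explicitly lower-bound its differential entropy). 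The $\tfrac{1}{T^2}(1+\lambda_1\bE_{q_*}[\|\theta\|_2^2])$ term comes from the initial prox term $\psi_0$ plus the $\lambda_1\|\theta\|_2^2$ piece of the comparator; the $\tfrac{\lambda_2}{T}(1+\exp(8/\lambda_2))p^2\log^2(T+2)$ term is the accumulated per-step stability error $\sum_t \tfrac{1}{t}\cdot(\text{const})$, where the $\log^2 T$ arises from $\sum_{t=1}^T 1/t$ appearing squared (or from a more careful second-order expansion of the entropy near $q_*^{(t)}$), and the $\exp(8/\lambda_2)$ factor enters through the LSI constant of $q_*^{(t)}$ (Lemma~\ref{lem:sobolev_perturbation} with $\|B\|_\infty\le 2/\lambda_2$, squared in the relevant estimate) which governs how far $q^{(t)}$ can drift from $q_*^{(t)}$ given the $1/t^2$ KL accuracy. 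Taking expectations over the i.i.d.\ draw of $(x_t,y_t)$ at the end turns the pathwise bound into the stated bound on $\bE[\cL(q^{(t)})]-\cL(q_*)$, since $\bE[g^{(t)} \mid \text{history}]$ is an unbiased estimate of the functional gradient of the full population objective at $q^{(t)}$.

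The main obstacle I anticipate is the \emph{stability estimate for consecutive Boltzmann iterates} — i.e., bounding $\|q_*^{(t+1)}-q_*^{(t)}\|_{L_1}$ or equivalently $\bE_{q_*^{(t)}}[g^{(t+1)}]-\bE_{q_*^{(t+1)}}[g^{(t+1)}]$ — with the correct $1/t$ decay and the correct dependence on $p$ and $\lambda_2$. The subtlety is that the regularization weight changes from step $t$ to $t+1$ (it scales like $t^2$), so $q_*^{(t+1)}$ is not simply a one-step update of $q_*^{(t)}$ under a fixed prox; one must carefully compare the two variational problems. The $L_1$-strong convexity \eqref{eq:st-conv} handles the comparison against a \emph{fixed} regularizer weight, so I would rewrite $q_*^{(t+1)}$ as the minimizer of the $t$-weighted problem with potential $G^{(t)} + \tfrac{t+1}{?}\,g^{(t+1)}$-type corrections and track the error, or alternatively use the explicit Boltzmann form and bound $\log(q_*^{(t+1)}/q_*^{(t)})$ pointwise using $\|g^{(s)}\|_\infty\le 3$ and the difference of the two normalizing weights, then integrate. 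Keeping the $p^2\log^2 T$ (rather than a worse power of $T$ or $p$) will require being careful that the per-step errors are summable-after-weighting, i.e.\ of order $1/t$ and not $1/\sqrt t$; this is exactly the place where matching the dual-averaging weights $s$ to the regularization growth $t^2$ pays off, and where the bulk of the technical work lies.
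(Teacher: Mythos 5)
Your proposal follows essentially the same route as the paper's proof: the dual-averaging potential $V_t(q) = -\bE_q[\sum_{s\le t} s\, g^{(s)}] - \lambda_2 e(q)\sum_{s\le t+1}s$ maximized by the Boltzmann iterate, the $L_1(\rd\theta)$-strong convexity (\ref{eq:st-conv}) at the maximizers, Pinsker plus {\bf(A3')} (via Lemma~\ref{lem:acc_subprob}) to pass between $q^{(t)}$ and $q^{(t)}_*$, convexity of $\ell$ to convert the linearized gap into the loss gap with the inexactness of $h^{(t)}_x$ contributing the $2\epsilon$, and the maximum-entropy bound making $e(q^{(t)}_*)\ge 0$. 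The one clarification worth recording: the consecutive-iterate stability term $t\int(q^{(t)}_*-q^{(t+1)}_*)(\theta)\,g^{(t)}(\theta)\,\rd\theta$ that you flag as the main obstacle is handled in the paper not by a pointwise comparison of the two Boltzmann densities but by truncating $\|\theta\|_2^2$ at radius $R_t=\Theta\bigl(p\,\lambda_2\lambda_1^{-1}\log(1+t)\bigr)$ (Lemma~\ref{lem:tail_expectation}) and then absorbing the resulting $\|q^{(t)}_*-q^{(t+1)}_*\|_{L_1(\rd\theta)}^2$ term via Young's inequality into the strong-convexity term already present from (\ref{eq:st-conv}) --- the squared truncation radius is precisely where the $p^2\log^2(T+2)$ factor originates, rather than from a squared harmonic sum.
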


\paragraph{Notation.} In the proofs, we use the following notations which are consistent with the description of Algorithm \ref{alg:pda2} and \ref{alg:da2}:
\begin{align*}
g^{(t)} 
&= \partial_z\ell( h^{(t)}_x,y_{t}) h(\cdot,x_{t}) + \lambda_1\|\cdot\|_2^2, \\
\overline{g}^{(t)}
&= \frac{2}{\lambda_2(t+2)(t+1)}\sum_{s=1}^t sg^{(s)} \\
&= \frac{2}{\lambda_2(t+2)(t+1)}\sum_{s=1}^t s \partial_z\ell( h_x^{(s)},y_{s}) h(\cdot,x_{s}) + \frac{\lambda_1 t}{\lambda_2(t+2)} \|\cdot|_2^2, \\
q^{(t+1)}_* &\propto \exp\left( - \overline{g}^{(t)}\right) \\
&= \exp\left( - \frac{\sum_{s=1}^t 2s g^{(s)}}{\lambda_2 (t+2)(t+1)}\right).
\end{align*}
When considering the resampling scheme, $h^{(t)}_x$ is set to the approximation $h_{\tilde{\Theta}^{(t)}}(x_t)$, whereas when considering the warm-start scheme, $h^{(t)}_x$ is set to $h_{q^{(t)}}(x_t)$ with the mean field limit $M\rightarrow \infty$ and without tolerance ($\epsilon=0$).

\subsection{Auxiliary Lemmas}
We introduce several auxiliary results used in the proof of Theorem \ref{thm:convergence} (Theorem~\ref{thm:convergence2}) and Corollary \ref{cor:inner-complexity}.  
The following lemma provides a tail bound for Chi-squared variables~\citep{laurent2000adaptive}.
\begin{lemma}[Tail bound for Chi-squared variable]\label{lem:chi_tail}
Let $\theta \sim \cN(0,\sigma^2 I_p)$ be a Gaussian random variable on $\bR^p$.
Then, we get for $\forall c \geq p \sigma^2$,
\[ \bP\left[\|\theta\|_2^2 \geq 2c \right] \leq \exp\left( -\frac{c}{10\sigma^2}\right). \]
\end{lemma}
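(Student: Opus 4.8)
The plan is to reduce the statement to a standard one-dimensional chi-squared concentration bound. Writing $\theta = \sigma\xi$ with $\xi \sim \cN(0,I_p)$, we have $\|\theta\|_2^2 = \sigma^2\|\xi\|_2^2$ and $\|\xi\|_2^2 \sim \chi_p^2$, so setting $m \defeq c/\sigma^2$ (note $m \geq p$ precisely because $c \geq p\sigma^2$), the claim $\bP[\|\theta\|_2^2 \geq 2c] \leq \exp(-c/(10\sigma^2))$ is equivalent to the scalar inequality $\bP[X \geq 2m] \leq \exp(-m/10)$ for $X \sim \chi_p^2$ and $m \geq p$. It therefore suffices to establish this one line.

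First I would invoke the Laurent--Massart deviation inequality \citep{laurent2000adaptive}: for $X \sim \chi_p^2$ and any $t > 0$, $\bP[X \geq p + 2\sqrt{pt} + 2t] \leq e^{-t}$. Applying this with $t = m/10$ and using $p \leq m$ gives $p + 2\sqrt{pt} + 2t \leq m + 2\sqrt{m^2/10} + m/5 = (1 + 2/\sqrt{10} + 1/5)\,m < 2m$, since $1 + 2/\sqrt{10} + 1/5 \approx 1.83 < 2$. Hence $\bP[X \geq 2m] \leq \bP[X \geq p + 2\sqrt{pt} + 2t] \leq e^{-m/10}$, which is exactly the desired bound after substituting back $m = c/\sigma^2$.

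Alternatively, and without appealing to an external deviation bound, one can run a direct Chernoff argument using the moment generating function $\bE[e^{sX}] = (1-2s)^{-p/2}$, valid for $s < 1/2$. Taking $s = 1/4$, Markov's inequality yields $\bP[X \geq 2m] \leq e^{-m/2}\,2^{p/2} \leq e^{-m/2}\,2^{m/2} = e^{-(1-\log 2)m/2} \leq e^{-m/10}$, where the middle step uses $p \leq m$ and the final step uses $(1-\log 2)/2 \approx 0.153 > 1/10$. Either route completes the proof; I would probably present the second since it is self-contained.

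I do not anticipate a real obstacle: this is a routine Gaussian-norm tail estimate, and the constant $10$ is deliberately loose. The only care needed is the bookkeeping of numerical constants (verifying $1 + 2/\sqrt{10} + 1/5 < 2$ in the first approach, or $(1-\log 2)/2 > 1/10$ in the second). The hypothesis $c \geq p\sigma^2$, i.e.\ $m \geq p$, is essential and is used exactly at the point where the cross term $\sqrt{pt}$ (respectively the factor $2^{p/2}$) is absorbed; without it the bound genuinely fails for small $c$.
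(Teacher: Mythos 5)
Your proof is correct, and your first route is exactly what the paper does: the lemma is stated with a bare citation to Laurent--Massart, so deriving the stated bound from their deviation inequality $\bP[X \geq p + 2\sqrt{pt} + 2t] \leq e^{-t}$ with $t = m/10$ and $p \leq m$ is precisely the intended argument. The self-contained Chernoff computation with $s = 1/4$ is a valid bonus, and both constant checks ($1 + 2/\sqrt{10} + 1/5 < 2$ and $(1-\log 2)/2 > 1/10$) are accurate.
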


Based on Lemma \ref{lem:chi_tail}, we get the following bound.
\begin{lemma}\label{lem:tail_expectation}
Let $\theta \sim \cN(0,\sigma^2 I_p)$ be Gaussian random variable on $\Theta = \bR^p$.
Then, we get for $\forall R \geq p \sigma^2$, 
\begin{equation*}
\bE\left[ \|\theta\|_2^2 \bone[\|\theta\|_2^2 > 2R] \right]=
\frac{1}{Z}\int_{\|\theta\|_2^2 > 2R} \|\theta\|_2^2 \exp\left(- \frac{\|\theta\|_2^2}{2\sigma^2} \right)\rd\theta 
\leq 2(R + 10\sigma^2)\exp\left( -\frac{R}{10\sigma^2}\right),
\end{equation*}
where $Z=\int \exp\left(-\frac{\|\theta\|_2^2}{2\sigma^2} \right) \rd \theta$.
\end{lemma}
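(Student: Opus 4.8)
The plan is to reduce the claimed bound on $\bE[\|\theta\|_2^2 \bone[\|\theta\|_2^2 > 2R]]$ to an application of the tail bound in Lemma~\ref{lem:chi_tail} via the layer-cake (Fubini) representation of the expectation of a non-negative random variable. First I would write the quantity of interest as a Stieltjes-type integral: for the non-negative random variable $X = \|\theta\|_2^2$ and the event $\{X > 2R\}$, we have
\[
\bE\left[ X \bone[X > 2R] \right] = 2R\,\bP[X > 2R] + \int_{2R}^\infty \bP[X > u]\,\rd u,
\]
which follows by Fubini from $X\bone[X>2R] = \int_0^\infty \bone[X > \max\{u, 2R\}]\,\rd u$. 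This decomposition isolates a ``boundary'' term and a ``tail integral'' term, each of which can be controlled by Lemma~\ref{lem:chi_tail}.

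Next I would bound each piece. Since $R \geq p\sigma^2$, Lemma~\ref{lem:chi_tail} applied with $c = R$ gives $\bP[X > 2R] \leq \bP[X \geq 2R] \leq \exp(-R/(10\sigma^2))$, so the boundary term is at most $2R\exp(-R/(10\sigma^2))$. For the tail integral, substitute $u = 2c$, so that $\int_{2R}^\infty \bP[X > u]\,\rd u = 2\int_{R}^\infty \bP[X > 2c]\,\rd c \leq 2\int_R^\infty \exp(-c/(10\sigma^2))\,\rd c = 20\sigma^2 \exp(-R/(10\sigma^2))$, where the inequality again uses Lemma~\ref{lem:chi_tail} (valid on the whole range $c \geq R \geq p\sigma^2$). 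Adding the two contributions yields $\bE[X\bone[X>2R]] \leq (2R + 20\sigma^2)\exp(-R/(10\sigma^2)) = 2(R + 10\sigma^2)\exp(-R/(10\sigma^2))$, which is exactly the claim. The stated integral identity with normalization $Z$ is just the definition of this expectation under $\theta \sim \cN(0,\sigma^2 I_p)$, so no separate argument is needed there.

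I do not anticipate a serious obstacle here; the only mild care needed is ensuring the hypothesis $c \geq p\sigma^2$ of Lemma~\ref{lem:chi_tail} holds throughout the integration region, which is guaranteed because the smallest value of $c$ appearing is $c = R$ and we assumed $R \geq p\sigma^2$. One could alternatively avoid the layer-cake formula by dyadic peeling of the region $\{2^j R < \|\theta\|_2^2 \leq 2^{j+1} R\}$ and summing a geometric series, but the Fubini route is cleaner and gives the clean constants directly.
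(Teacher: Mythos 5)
Your proposal is correct and follows essentially the same route as the paper's proof: the layer-cake representation $\bE[X\bone[X>2R]] = 2R\,\bP[X>2R] + \int_{2R}^\infty \bP[X>u]\,\rd u$, followed by applying Lemma~\ref{lem:chi_tail} to both the boundary term and the tail integral (the paper writes the tail bound as $\bP[X>r]\leq \exp(-r/(20\sigma^2))$ rather than your substitution $u=2c$, but the computation is identical). No gaps.
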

\begin{proof}
We set $p(\theta) = \exp(-\|\theta\|_2^2/2\sigma^2)/Z$.
Then, 
\begin{align*}
\int_{\|\theta\|_2^2 > 2R} \|\theta\|_2^2 p(\theta)\rd \theta 
&= \int_\Theta p(\theta)\bone[\|\theta\|_2^2 > 2R]  \int_{0}^\infty \bone[\|\theta\|_2^2 > r] \rd r \rd \theta \\
&= \int_\Theta \int_{0}^\infty p(\theta)\bone \left[\|\theta\|_2^2 > \max\{2R,r\} \right]  \rd r \rd \theta \\
&\leq 2R \int_\Theta p(\theta)\bone \left[\|\theta\|_2^2 > 2R\right] \rd \theta
+ \int_\Theta \int_{2R}^\infty p(\theta)\bone \left[\|\theta\|_2^2 > r \right]  \rd r \rd \theta \\
&= 2R \bP[ \|\theta\|_2^2 > 2R ] + \int_{2R}^\infty \bP[ \|\theta\|_2^2 > r ] \rd r \\
&\leq 2R \exp\left( -\frac{R}{10\sigma^2}\right) 
+ \int_{2R}^\infty \exp\left( -\frac{r}{20\sigma^2}\right)  \rd r \\
&\le 2(R + 10\sigma^2)\exp\left( -\frac{R}{10\sigma^2}\right).
\end{align*}
\end{proof}

\begin{proposition}[Continuity]\label{prop:continuity_entropy}
Let $q_*(\theta) \propto \exp\left(-H(\theta)-\lambda\|\theta\|_2^2\right)$ $(\lambda>0)$ be a density on $\bR^p$ such that $\|H\|_\infty \leq c$.
Then, for $\forall \delta >0$ and a density $\forall q \in \cP_2$, 
\begin{align*}
\left| \int \|\theta\|_2^2 (q-q_*)(\theta)\rd\theta \right|
&\leq \frac{(2+\delta + 1/\delta)\exp(4c)}{\lambda}\KL(q\|q_*) + \frac{\delta(1+\delta)p\exp(2c)}{2\lambda}, \\
\left| \int q(\theta)\log(q(\theta)) \rd \theta -  \int q_*(\theta)\log(q_*(\theta)) \rd \theta  \right| 
&\leq \left( 1 + (2+\delta + 1/\delta) \exp(4c) \right) \KL(q\|q_*) + c\sqrt{2\KL(q\|q_*)} \\
&+ \frac{\delta(1+\delta)p\exp(2c)}{2}.
\end{align*}
\end{proposition}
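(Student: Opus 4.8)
\textbf{Proof proposal for Proposition~\ref{prop:continuity_entropy}.}

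The plan is to control both quantities by the $L_1$-distance $\|q-q_*\|_{L_1}$, which Pinsker's inequality bounds by $\sqrt{2\KL(q\|q_*)}$, together with a careful treatment of the unbounded quadratic weight $\|\theta\|_2^2$. First I would fix a truncation radius $R$ to be chosen and split each integral over the region $\{\|\theta\|_2^2 \le 2R\}$ and its complement. On the bounded region, $\|\theta\|_2^2 \le 2R$, so $\left|\int_{\|\theta\|_2^2\le 2R}\|\theta\|_2^2(q-q_*)\right| \le 2R\|q-q_*\|_{L_1} \le 2R\sqrt{2\KL(q\|q_*)}$. On the tail, I would bound $\int_{\|\theta\|_2^2>2R}\|\theta\|_2^2\,q_*$ using the fact that $q_* \propto \exp(-H-\lambda\|\theta\|_2^2)$ with $\|H\|_\infty\le c$ is a bounded perturbation of the Gaussian $\cN(0,\tfrac{1}{2\lambda}I_p)$: writing $q_*(\theta) = \exp(H(\theta))\,p(\theta)/\bE_p[\exp(H)]$ with $p$ that Gaussian density, we get $q_*(\theta) \le \exp(2c)\,p(\theta)$, so $\int_{\|\theta\|_2^2>2R}\|\theta\|_2^2 q_* \le \exp(2c)\int_{\|\theta\|_2^2>2R}\|\theta\|_2^2 p$, and Lemma~\ref{lem:tail_expectation} with $\sigma^2 = 1/(2\lambda)$ (valid once $R \ge p/(2\lambda)$) makes this exponentially small in $R$. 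The matching tail $\int_{\|\theta\|_2^2>2R}\|\theta\|_2^2\, q$ is handled by writing $q = (q-q_*) + q_*$: the $q_*$ part is just bounded, and for the $(q-q_*)$ part one again peels off $\|\theta\|_2^2 \le 2R$ where it is not — so instead I would estimate $\int \|\theta\|_2^2\,q$ directly by a change-of-measure / Donsker–Varadhan-type inequality, namely $\int \|\theta\|_2^2 q \le \tfrac{1}{\delta}\KL(q\|q_*) + \tfrac{1}{\delta}\log\bE_{q_*}[\exp(\delta\|\theta\|_2^2)]$, and the MGF of $\|\theta\|_2^2$ under $q_*$ is finite and controlled (again via $q_*\le e^{2c}p$) as long as $\delta<\lambda$; optimizing/choosing $R$ proportional to $1/\delta$ times logarithmic factors then yields the stated form with the $(2+\delta+1/\delta)\exp(4c)/\lambda$ coefficient and the $\delta(1+\delta)p\exp(2c)/(2\lambda)$ additive term.

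For the second (entropy) bound I would decompose $\int q\log q - \int q_*\log q_* = \KL(q\|q_*) + \int (q - q_*)\log q_*$, which is exact. The first term is already $\KL(q\|q_*)$, contributing the leading ``$1\cdot\KL$''. For the second term, substitute $\log q_*(\theta) = -H(\theta) - \lambda\|\theta\|_2^2 - \log Z$ where $Z$ is the normalizing constant; the $\log Z$ piece integrates against $q-q_*$ to zero, the $-H$ piece is bounded in absolute value by $c\|q-q_*\|_{L_1}\le c\sqrt{2\KL(q\|q_*)}$ by Pinsker, and the $-\lambda\|\theta\|_2^2$ piece is exactly $\lambda$ times the quadratic-difference quantity already bounded in the first part of the proposition. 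Assembling these three contributions and inserting the first bound (with its $\lambda$ cancelling the $\lambda$ here) gives precisely the claimed inequality, with the $\exp(4c)\KL$ and $\delta(1+\delta)p\exp(2c)/2$ terms carried over and the extra $c\sqrt{2\KL(q\|q_*)}$ coming from the linear term $H$.

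The main obstacle I anticipate is the unboundedness of $\|\theta\|_2^2$ interacting with the fact that $\KL$ controls $q$ only in the (weighted) $L_1$ sense: a naive $\int\|\theta\|_2^2(q-q_*) \le \|\,\|\theta\|_2^2\,\|_\infty\|q-q_*\|_{L_1}$ is useless since the weight is unbounded, so getting the clean coefficients requires the change-of-measure/MGF step together with the bounded-perturbation comparison $q_*\le e^{2c}p$ to a Gaussian, and then bookkeeping the free parameter $\delta$ (and the radius $R$ it implicitly sets) so that all the $\exp(2c)$, $\exp(4c)$, and $\delta$-dependent constants line up exactly as stated. None of the individual estimates is deep, but combining them to land on the precise stated constants — rather than merely $O(\cdot)$ bounds — is the delicate part, and I would expect to choose $R = \Theta(\tfrac{1}{\delta}(1 + c))$ or similar and absorb lower-order terms to make it match.
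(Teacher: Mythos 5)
Your treatment of the second (entropy) inequality is exactly the paper's: the identity $\int q\log q-\int q_*\log q_*=\KL(q\|q_*)+\int(q-q_*)\log q_*$, Pinsker on the $H$ piece, and the first inequality on the $\lambda\|\theta\|_2^2$ piece. That part is fine, but it rests entirely on the first inequality, and there your argument has a genuine gap. The stated bound on $\left|\int\|\theta\|_2^2(q-q_*)\rd\theta\right|$ is \emph{linear} in $\KL(q\|q_*)$ with an additive term $\frac{\delta(1+\delta)p\exp(2c)}{2\lambda}$ that vanishes as $\delta\to 0$, and it holds for every $\delta>0$. Neither of your two mechanisms can produce this. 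The truncation-plus-Pinsker route yields a leading term $2R\|q-q_*\|_{L_1(\rd\theta)}\le 2R\sqrt{2\KL(q\|q_*)}$, i.e.\ a $\sqrt{\KL}$ dependence; converting it to a linear-in-$\KL$ bound by Young's inequality produces an additive term of order $R^2$ (hence $1/\delta^2$), not one that vanishes with $\delta$. The Donsker--Varadhan route $\int\|\theta\|_2^2 q\le\frac{1}{\delta}\KL(q\|q_*)+\frac{1}{\delta}\log\bE_{q_*}[e^{\delta\|\theta\|_2^2}]$ requires $\delta<\lambda$ and leaves an additive remainder containing $\frac{2c}{\delta}$, which blows up rather than vanishes as $\delta\to 0$. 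So the constants cannot be made to ``line up'' by bookkeeping; the functional form is wrong.

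The missing idea is a transport--entropy inequality. The paper takes an optimal $W_2$-coupling $\gamma$ of $q$ and $q_*$ and applies Young's inequality to $\|\theta\|_2^2=\|\theta-\theta'\|_2^2+\|\theta'\|_2^2+2(\theta-\theta')^\top\theta'$ to get
\begin{equation*}
\int\|\theta\|_2^2 q(\theta)\rd\theta\le\left(1+\tfrac{1}{\delta}\right)W_2^2(q,q_*)+(1+\delta)\int\|\theta\|_2^2 q_*(\theta)\rd\theta,
\end{equation*}
together with the symmetrized version, and bounds $\int\|\theta\|_2^2 q_*\le\frac{p\exp(2c)}{2\lambda}$ by the comparison $q_*\le e^{2c}p$ you already identified. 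The crucial step is then that $q_*$, being a bounded perturbation of a $c$-strongly log-concave density, satisfies the log-Sobolev inequality with constant $2\lambda/\exp(4c)$ (Example~\ref{ex:quadratic_sobolev} plus the Holley--Stroock perturbation Lemma~\ref{lem:sobolev_perturbation}), and hence Talagrand's inequality with the same constant by the Otto--Villani theorem (Theorem~\ref{thm:otto-villani}), giving $W_2^2(q,q_*)\le\frac{\exp(4c)}{\lambda}\KL(q\|q_*)$. This is the only place the factor $\exp(4c)$ and the linear $\KL$ dependence can come from, and it is absent from your proposal.
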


\begin{proof}
Let $\gamma$ be an optimal coupling between $q\rd\theta$ and $q_*\rd\theta$.
Using Young's inequality, we have
\begin{align}
\int \|\theta\|_2^2 q(\theta)\rd\theta 
&= \int \|\theta\|_2^2 \rd\gamma(\theta,\theta') \notag\\
&= \int \left( \|\theta - \theta'\|_2^2 + \|\theta'\|_2^2 + 2(\theta-\theta')^\top \theta' \right) \rd\gamma(\theta,\theta') \notag\\
&\leq \int \left( \|\theta - \theta'\|_2^2 + \|\theta'\|_2^2 
+ \frac{1}{\delta}\|\theta-\theta'\|_2^2 + \delta \| \theta' \|_2^2 \right) \rd\gamma(\theta,\theta') \notag\\
&= (1+1/\delta) \int  \|\theta - \theta'\|_2^2 \rd\gamma(\theta,\theta') 
+ (1+\delta) \int  \|\theta'\|_2^2 q_*(\theta')\rd\theta' \notag\\
&= (1+1/\delta) W_2^2(q,q_*) + (1+\delta) \int  \|\theta'\|_2^2 q_*(\theta')\rd\theta'. \label{eq:second-moment_bound}
\end{align}

The last term can be bounded as follows:
\begin{align}
\int  \|\theta\|_2^2 q_*(\theta)\rd\theta 
&= \int  \|\theta\|_2^2 \frac{\exp\left(-H(\theta)-\lambda\|\theta\|_2^2\right)}{ \int \exp\left(-H(\theta)-\lambda\|\theta\|_2^2\right) \rd\theta} \rd\theta \notag\\
&\leq \exp(2c)\int  \|\theta\|_2^2 \frac{\exp\left(-\lambda\|\theta\|_2^2\right)}{ \int \exp\left(-\lambda\|\theta\|_2^2\right) \rd\theta} \rd\theta \notag\\
&=\frac{p\exp(2c)}{2\lambda}, \label{eq:second-moment_bound2}
\end{align}
where the last equality comes from the variance of Gaussian distribution.

From (\ref{eq:second-moment_bound}) and (\ref{eq:second-moment_bound2}),
\begin{align*}
\int \|\theta\|_2^2 (q-q_*)(\theta)\rd\theta
&\leq (1+1/\delta) W_2^2(q,q_*) + \delta\int  \|\theta\|_2^2 q_*(\theta)\rd\theta \\
&\leq (1+1/\delta) W_2^2(q,q_*) + \frac{\delta p\exp(2c)}{2\lambda}.
\end{align*}
From the symmetry of (\ref{eq:second-moment_bound}), and applying (\ref{eq:second-moment_bound}) again with (\ref{eq:second-moment_bound2}),
\begin{align*}
\int \|\theta\|_2^2 (q_*-q)(\theta)\rd\theta
&\leq (1+1/\delta) W_2^2(q,q_*) + \delta\int  \|\theta\|_2^2 q(\theta)\rd\theta \\
&\leq (2+\delta + 1/\delta) W_2^2(q,q_*) + \delta(1+\delta) \int  \|\theta\|_2^2 q_*(\theta)\rd\theta \\
&\leq (2+\delta + 1/\delta) W_2^2(q,q_*) + \frac{\delta(1+\delta)p\exp(2c)}{2\lambda}. 
\end{align*}

From Lemma \ref{lem:sobolev_perturbation} and Example \ref{ex:quadratic_sobolev}, we see 
$q_*$ satisfies the log-Sobolev inequality with a constant $2\lambda/\exp(4c)$.
As a result, $q_*$ satisfies Talagrand's inequality with the same constant from Theorem \ref{thm:otto-villani}.
Hence, by combining the above two inequalities, we have
\begin{align*}
\left| \int \|\theta\|_2^2 (q-q_*)(\theta)\rd\theta \right|
&\leq (2+\delta + 1/\delta) W_2^2(q,q_*) + \frac{\delta(1+\delta)p\exp(2c)}{2\lambda} \\
&\leq \frac{(2+\delta + 1/\delta)\exp(4c)}{\lambda}\KL(q\|q_*) + \frac{\delta(1+\delta)p\exp(2c)}{2\lambda}
\end{align*}

Therefore, we know that 
\begin{align*}
&\left| \int q(\theta)\log(q(\theta))\rd\theta - \int q_*(\theta)\log(q_*(\theta))\rd\theta \right| \\
&\leq \KL(q\|q_*) + \left| \int  (q_*-q)(\theta)\left(H(\theta) + \lambda\|\theta\|_2^2\right) \rd\theta \right|\\
&\leq \KL(q\|q_*) + c\|q-q_*\|_{L_1(\rd\theta)} + (2+\delta + 1/\delta)\exp(4c)\KL(q\|q_*) + \frac{\delta(1+\delta)p\exp(2c)}{2} \\
&\leq \KL(q\|q_*) + c\sqrt{2\KL(q\|q_*)} +  (2+\delta + 1/\delta)\exp(4c)\KL(q\|q_*) + \frac{\delta(1+\delta)p\exp(2c)}{2}.
\end{align*}
where we used Pinsker's theorem for the last inequality.
This finishes the proof.
\end{proof}

\begin{proposition}[Maximum Entropy]\label{prop:maximum_entropy}
Let $q_*(\theta) \propto \exp\left(-H(\theta)-\lambda\|\theta\|_2^2\right)$ $(\lambda>0)$ on $\bR^p$ be a density such that $\|H\|_\infty \leq c$.
Then, 
\begin{align*}
-\bE_{q_*}[ \log(q_*)] \leq 2c + \frac{p}{2}\left(\exp(2c) + \log\left(\frac{\pi}{\lambda}\right)\right).
\end{align*}
\end{proposition}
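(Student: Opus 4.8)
The statement is essentially an exercise in unpacking the definition of $q_*$ and bounding three elementary terms, so the plan is to do exactly that. Write $q_*(\theta) = \frac{1}{Z}\exp\!\left(-H(\theta) - \lambda\|\theta\|_2^2\right)$ with normalizing constant $Z = \int \exp\!\left(-H(\theta) - \lambda\|\theta\|_2^2\right)\rd\theta$. Taking logarithms gives $\log q_*(\theta) = -H(\theta) - \lambda\|\theta\|_2^2 - \log Z$, hence
\[
-\bE_{q_*}[\log(q_*)] = \bE_{q_*}[H(\theta)] + \lambda\, \bE_{q_*}[\|\theta\|_2^2] + \log Z .
\]
The whole proof then amounts to bounding these three summands.

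For the first term, $|\bE_{q_*}[H(\theta)]| \leq \|H\|_\infty \leq c$. For the second term, I would reuse the moment estimate already established in the proof of Proposition~\ref{prop:continuity_entropy} (equation \eqref{eq:second-moment_bound2}), namely $\bE_{q_*}[\|\theta\|_2^2] \leq \frac{p\exp(2c)}{2\lambda}$, which follows from $\exp(-H) \leq \exp(c)$ together with the variance of the Gaussian $\propto \exp(-\lambda\|\theta\|_2^2)$; this yields $\lambda\,\bE_{q_*}[\|\theta\|_2^2] \leq \frac{p\exp(2c)}{2}$. For the third term, bound $Z \leq \exp(c)\int \exp(-\lambda\|\theta\|_2^2)\rd\theta = \exp(c)\,(\pi/\lambda)^{p/2}$, so that $\log Z \leq c + \frac{p}{2}\log(\pi/\lambda)$.

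Summing the three bounds gives
\[
-\bE_{q_*}[\log(q_*)] \leq c + \frac{p\exp(2c)}{2} + c + \frac{p}{2}\log\!\left(\frac{\pi}{\lambda}\right) = 2c + \frac{p}{2}\left(\exp(2c) + \log\!\left(\frac{\pi}{\lambda}\right)\right),
\]
which is the claimed inequality. There is no real obstacle here; the only points requiring minor care are correctly invoking the previously derived second-moment bound \eqref{eq:second-moment_bound2} (rather than re-deriving it) and evaluating the Gaussian normalization $\int_{\bR^p}\exp(-\lambda\|\theta\|_2^2)\rd\theta = (\pi/\lambda)^{p/2}$.
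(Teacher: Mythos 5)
Your proof is correct and follows essentially the same route as the paper: decompose $-\bE_{q_*}[\log q_*]$ into $\bE_{q_*}[H]+\lambda\bE_{q_*}[\|\theta\|_2^2]+\log Z$, bound the first term by $c$, the second via the moment estimate \eqref{eq:second-moment_bound2}, and the third by pulling $\exp(c)$ out of the normalizing integral and evaluating the Gaussian integral. No gaps.
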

\begin{proof}
It follows that 
\begin{align*}
- \bE_{q_*}[ \log(q_*)] 
&= \bE_{q_*}[ H(\theta) + \lambda\|\theta\|_2^2 ] + \log \int \exp( -H(\theta)-\lambda\|\theta\|_2^2 ) \rd\theta \\
&\leq c + \lambda\bE_{q_*}[\|\theta\|_2^2 ] + \log \int \exp( c-\lambda\|\theta\|_2^2 ) \rd\theta \\
&= 2c + \lambda\bE_{q_*}[\|\theta\|_2^2 ] + \log \int \exp( -\lambda\|\theta\|_2^2 ) \rd\theta \\
&\leq 2c + \frac{p\exp(2c)}{2} + \frac{p}{2}\log\left(\frac{\pi}{\lambda}\right),
\end{align*}
where we used (\ref{eq:second-moment_bound2}) and Gaussian integral for the last inequality.
\end{proof}

\begin{proposition}[Boundedness of KL-divergence]\label{prop:boundedness-of-KL}
Let $q_*(\theta) \propto \exp\left(-H_*(\theta)-\lambda_*\|\theta\|_2^2\right)$ $(\lambda_*>0)$ be a density on $\bR^p$ such that $\|H_*\|_\infty \leq c_*$, 
and $q_{\sharp}(\theta) \propto \exp\left(-H_{\sharp}(\theta)-\lambda_{\sharp}\|\theta\|_2^2\right)$ $(\lambda_{\sharp}>0)$ be a density on $\bR^p$ such that $\|H_{\sharp}\|_\infty \leq c_{\sharp}$.
Then, for any density $q$,
\begin{align*}
\KL(q\|q_*) 
&\leq 4c_* + 2c_{\sharp} 
+ \frac{3}{2}\left( 1 + \frac{\lambda_*}{\lambda_{\sharp}} \right) p\exp(2c_{\sharp})
+ \frac{p}{2} \log\left(\frac{\lambda_{\sharp}}{\lambda_*}\right) \\
&+ \left( 1+ 4 \left( 1 + \frac{\lambda_*}{\lambda_{\sharp}} \right) \exp(4c_{\sharp}) \right)\KL(q\|q_{\sharp}) + c_{\sharp}\sqrt{2\KL(q\|q_{\sharp})}.
\end{align*}
\end{proposition}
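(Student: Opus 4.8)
The plan is to express $\KL(q\|q_*)$ exactly and then control each term using the boundedness of the potentials together with the auxiliary estimates already proven. Writing $q_* \propto \exp(-H_* - \lambda_*\|\cdot\|_2^2)$ and $q_\sharp \propto \exp(-H_\sharp - \lambda_\sharp\|\cdot\|_2^2)$, I would start from the identity
\begin{align*}
\KL(q\|q_*) &= \int q(\theta)\log\frac{q(\theta)}{q_*(\theta)}\rd\theta \\
&= \int q(\theta)\log q(\theta)\rd\theta + \int q(\theta)\bigl(H_*(\theta)+\lambda_*\|\theta\|_2^2\bigr)\rd\theta + \log Z_*,
\end{align*}
where $Z_* = \int\exp(-H_*-\lambda_*\|\cdot\|_2^2)$. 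The first term I would relate to $\int q_\sharp\log q_\sharp$ via Proposition~\ref{prop:continuity_entropy} (the entropy-continuity bound), which introduces $\KL(q\|q_\sharp)$, a $\sqrt{\KL(q\|q_\sharp)}$ term, and a dimension term; here the role of the ``reference'' density in that proposition is played by $q_\sharp$, so $c = c_\sharp$ and $\lambda = \lambda_\sharp$ there, and I would pick $\delta = 1$ to get clean constants like $4\exp(4c_\sharp)$ and $\frac{3}{2}p\exp(2c_\sharp)$ (matching the $(2+\delta+1/\delta)=4$ and $\frac{\delta(1+\delta)}{2}=1$ coefficients). The second term $\int q\,(H_* + \lambda_*\|\theta\|_2^2)\rd\theta$ splits as $|H_*|\le c_*$ contributing $c_*$, and $\lambda_*\int q\|\theta\|_2^2$, which I would bound by writing $\int q\|\theta\|_2^2 = \int q_\sharp\|\theta\|_2^2 + \int(q-q_\sharp)\|\theta\|_2^2$, estimating the first piece by Proposition~\ref{prop:continuity_entropy}'s second-moment inequality (eq.~\eqref{eq:second-moment_bound2} applied to $q_\sharp$ gives $\le \frac{p\exp(2c_\sharp)}{2\lambda_\sharp}$) and the second piece by the first inequality of Proposition~\ref{prop:continuity_entropy} (again with $\delta=1$), yielding $\frac{\lambda_*}{\lambda_\sharp}$-weighted dimension terms plus $4\frac{\lambda_*}{\lambda_\sharp}\exp(4c_\sharp)\KL(q\|q_\sharp)$.

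The remaining ingredient is $\log Z_*$. I would bound it crudely: $Z_* \le \exp(c_*)\int\exp(-\lambda_*\|\cdot\|_2^2) = \exp(c_*)(\pi/\lambda_*)^{p/2}$, so $\log Z_* \le c_* + \frac{p}{2}\log(\pi/\lambda_*)$. But the statement's dimension term is $\frac{p}{2}\log(\lambda_\sharp/\lambda_*)$, not $\frac{p}{2}\log(\pi/\lambda_*)$, so I expect the $\frac{p}{2}\log\pi$ and any other stray Gaussian-normalization constants to be absorbed into the $\exp(2c_\sharp)$ dimension terms (since $\log\pi \le \frac{3}{2}\exp(2c_\sharp)$ for all $c_\sharp\ge 0$, as $\frac{3}{2}>\log\pi$), or more cleanly, to combine the $\log(\pi/\lambda_*)$ arising from $\log Z_*$ with a $-\frac{p}{2}\log(\pi/\lambda_\sharp)$-type term coming from $-\bE_{q_\sharp}[\log q_\sharp]$ inside Proposition~\ref{prop:continuity_entropy}, since that proposition's bound is stated in absolute value and its proof actually produces the entropy difference $\int q\log q - \int q_\sharp\log q_\sharp$. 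Thus the right bookkeeping is: keep $\int q_\sharp \log q_\sharp$ explicit rather than bounding it, combine it with $\log Z_*$ and the $\lambda_*\int q_\sharp\|\theta\|_2^2$ contribution so that the $-\bE_{q_\sharp}[\log q_\sharp]$ pieces telescope against the Gaussian normalization of $Z_*$, leaving exactly $\frac{p}{2}\log(\lambda_\sharp/\lambda_*)$ plus $c$-dependent dimension terms. This is the step I expect to be the main obstacle — not because it is deep, but because getting the constants $4c_* + 2c_\sharp$ and $\frac{3}{2}(1+\lambda_*/\lambda_\sharp)p\exp(2c_\sharp)$ exactly right requires carefully tracking which $\exp(2c_\sharp)$ versus $\exp(4c_\sharp)$ factor each dimension/KL term picks up, and ensuring the $\delta=1$ specialization of Proposition~\ref{prop:continuity_entropy} lines up cleanly.

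Concretely, the steps in order: (1) expand $\KL(q\|q_*)$ into entropy-of-$q$ plus $\int q(H_*+\lambda_*\|\cdot\|^2)$ plus $\log Z_*$; (2) apply Proposition~\ref{prop:continuity_entropy} with reference $q_\sharp$ and $\delta=1$ to replace $\int q\log q$ by $\int q_\sharp\log q_\sharp$ up to $\KL(q\|q_\sharp)$, $\sqrt{\KL(q\|q_\sharp)}$, and $p\exp(2c_\sharp)$ terms; (3) apply the second-moment bound \eqref{eq:second-moment_bound2} and the first inequality of Proposition~\ref{prop:continuity_entropy} to handle $\lambda_*\int q\|\cdot\|_2^2$, producing the $\lambda_*/\lambda_\sharp$-weighted terms; (4) bound $|H_*|\le c_*$ and $\log Z_* \le c_* + \frac{p}{2}\log(\pi/\lambda_*)$, then recognize $\int q_\sharp\log q_\sharp = -\bigl(-\bE_{q_\sharp}[\log q_\sharp]\bigr)$ and use (or re-derive inline) that $-\bE_{q_\sharp}[\log q_\sharp] = \bE_{q_\sharp}[H_\sharp+\lambda_\sharp\|\cdot\|^2] + \log Z_\sharp$ with $\log Z_\sharp \ge -c_\sharp + \frac{p}{2}\log(\pi/\lambda_\sharp)$, so that the two $\frac{p}{2}\log\pi$ terms cancel and leave $\frac{p}{2}\log(\lambda_\sharp/\lambda_*)$; (5) collect all $c$-terms into $4c_* + 2c_\sharp$, all $\KL(q\|q_\sharp)$ coefficients into $1 + 4(1+\lambda_*/\lambda_\sharp)\exp(4c_\sharp)$, all dimension terms into $\frac{3}{2}(1+\lambda_*/\lambda_\sharp)p\exp(2c_\sharp)$, and keep $c_\sharp\sqrt{2\KL(q\|q_\sharp)}$ from step (2). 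The inequality then follows.
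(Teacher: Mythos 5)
Your proposal is correct and follows essentially the same route as the paper's proof: both decompose $\KL(q\|q_*)$ relative to the reference density $q_\sharp$, invoke Proposition~\ref{prop:continuity_entropy} with $\delta=1$ for the entropy and second-moment differences, use the bound \eqref{eq:second-moment_bound2} together with Gaussian integrals for the normalization constants, and obtain $\tfrac{p}{2}\log(\lambda_\sharp/\lambda_*)$ from the cancellation of the $\log\pi$ factors. The only difference is bookkeeping — the paper groups the terms as $\KL(q_\sharp\|q_*)+\int(q_\sharp-q)\log q_*+(\text{entropy difference})$ whereas you expand $\log q_*$ directly and handle $\log Z_*$ and $\bE_{q_\sharp}[\log q_\sharp]$ separately — and your accounting in fact yields marginally smaller constants ($2c_*+2c_\sharp$ and $(1+\tfrac{3}{2}\lambda_*/\lambda_\sharp)p\exp(2c_\sharp)$), which still imply the stated bound.
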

\begin{proof}
Applying Proposition \ref{prop:continuity_entropy} with $\delta=1$,
\begin{align*}
\KL(q\|q_*) 
&= \int q(\theta)\log\left( \frac{q(\theta)}{q_*(\theta)}\right) \rd \theta \\
&= \int q_{\sharp}(\theta)\log\left( \frac{q_{\sharp}(\theta)}{q_*(\theta)}\right) \rd \theta 
+ \int (q_{\sharp}(\theta)-q(\theta))\log(q_*(\theta))\rd \theta \\
&+ \int q(\theta)\log( q(\theta) ) \rd \theta - \int q_{\sharp}(\theta)\log( q_{\sharp}(\theta) ) \rd \theta\\
&\leq \int q_{\sharp}(\theta)\log\left( \frac{q_{\sharp}(\theta)}{q_*(\theta)}\right) \rd \theta 
+ \int (q(\theta)-q_{\sharp}(\theta)) (H_*(\theta) + \lambda_* \|\theta\|_2^2) \rd \theta \\
&+ (1+4\exp(4c_{\sharp}))\KL(q\|q_{\sharp}) + c_{\sharp}\sqrt{2\KL(q\|q_{\sharp})} + p\exp(2c_{\sharp}) \\
&\leq \int q_{\sharp}(\theta)\log\left( \frac{q_{\sharp}(\theta)}{q_*(\theta)}\right) \rd \theta 
+ 2c_* + \frac{4\lambda_*\exp(4c_{\sharp})}{\lambda_{\sharp}}\KL(q\|q_{\sharp}) + \frac{p\lambda_* \exp(2c_{\sharp})}{\lambda_{\sharp}} \\
&+ (1+4\exp(4c_{\sharp}))\KL(q\|q_{\sharp}) + c_{\sharp}\sqrt{2\KL(q\|q_{\sharp})} + p\exp(2c_{\sharp}).
\end{align*}
We next bound the first term in the last equation as follows.
\begin{align*}
\int q_{\sharp}(\theta)\log\left( \frac{q_{\sharp}(\theta)}{q_*(\theta)}\right) \rd \theta 
&= \int q_{\sharp}(\theta)\log\left(\frac{ \exp( -H_{\sharp}(\theta)-\lambda_{\sharp}\|\theta \|_2^2 ) }{ \exp( -H_*(\theta)-\lambda_*\|\theta \|_2^2 ) }\right) \rd \theta 
+ \log \frac{ \int \exp( -H_*(\theta)-\lambda_*\|\theta \|_2^2 ) \rd\theta }{ \int \exp( -H_{\sharp}(\theta)-\lambda_{\sharp}\|\theta \|_2^2 ) \rd\theta } \\
&= \int q_{\sharp}(\theta) \left( H_*(\theta) - H_{\sharp}(\theta) + (\lambda_* - \lambda_{\sharp})\|\theta\|_2^2\right) \rd \theta \\
&+ \log \int \exp( -H_*(\theta)-\lambda_*\|\theta \|_2^2 ) \rd\theta 
- \log \int \exp( -H_{\sharp}(\theta)-\lambda_{\sharp}\|\theta \|_2^2 ) \rd\theta \\
&\leq c_* + c_{\sharp} 
+ \frac{1}{2}\left(1 + \frac{\lambda_*}{\lambda_{\sharp}}\right) p\exp(2c_{\sharp})\\
&+ \log \int \exp( c_*-\lambda_*\|\theta \|_2^2 ) \rd\theta 
- \log \int \exp( -c_{\sharp}-\lambda_{\sharp}\|\theta \|_2^2 ) \rd\theta \\
&\leq 2c_* + 2c_{\sharp}
+ \frac{1}{2}\left(1 + \frac{\lambda_*}{\lambda_{\sharp}}\right) p\exp(2c_{\sharp})
+ \frac{p}{2} \log\left(\frac{\lambda_{\sharp}}{\lambda_*}\right),
\end{align*}
where for the first inequality we used a similar inequality as in (\ref{eq:second-moment_bound2}) and for the second inequality we used the Gaussian integral.
Hence, we get
\begin{align*}
\KL(q\|q_*) 
&\leq  4c_* + 2c_{\sharp} 
+ \frac{3}{2}\left( 1 + \frac{\lambda_*}{\lambda_{\sharp}} \right) p\exp(2c_{\sharp})
+ \frac{p}{2} \log\left(\frac{\lambda_{\sharp}}{\lambda_*}\right) \\
&+ \left( 1+ 4 \left( 1 + \frac{\lambda_*}{\lambda_{\sharp}} \right) \exp(4c_{\sharp}) \right)\KL(q\|q_{\sharp}) + c_{\sharp}\sqrt{2\KL(q\|q_{\sharp})}.
\end{align*}
\end{proof}

\begin{lemma}\label{lem:acc_subprob}
Suppose Assumption {\bf (A1')} and {\bf (A2')} hold.
If $\KL(q^{(t)}\|q^{(t)}_*) \leq \frac{1}{t^2}$ for $t\geq 2$, then
\[ t \left| \int  g^{(t)}(\theta) (q^{(t)}(\theta) - q^{(t)}_* (\theta)) \rd \theta \right|
,~\lambda_2 t\left|e(q^{(t)}) - e(q^{(t)}_*) \right| 
= O\left(1+\lambda_2 + p \lambda_2 \exp(8/\lambda_2) \right).\]
\end{lemma}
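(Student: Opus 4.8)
The plan is to bound the two quantities by first controlling $\KL(q^{(t)}_*\|q^{(t-1)}_*)$ or, more directly, by exploiting that both $q^{(t)}$ and $q^{(t)}_*$ are close in KL and that the relevant integrands are either bounded (for the $g^{(t)}$ term) or have controlled growth (for the entropy term $e$). First I would unpack the structure of $q^{(t)}_*$: under \textbf{(A1')}, \textbf{(A2')} we have $q^{(t)}_* \propto \exp(-\overline g^{(t-1)})$ where $\overline g^{(t-1)} = H^{(t-1)}(\theta) + \frac{\lambda_1 (t-1)}{\lambda_2(t+1)}\|\theta\|_2^2$ with $H^{(t-1)}$ a convex combination of terms $\partial_z\ell(\cdot)h(\cdot,x_s)$, hence $\|H^{(t-1)}\|_\infty \le 2$ (since $|\partial_z\ell|\le 2$ and $|h|\le 1$, and the weights $\tfrac{2s}{(t+1)t}$ sum to $1$). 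So $q^{(t)}_*$ is a bounded perturbation (constant $c=2$, so $\exp(4c/\lambda_2)$-type factors $=\exp(8/\lambda_2)$) of a Gaussian with inverse-temperature $\frac{\lambda_1(t-1)}{\lambda_2(t+1)}$, which is $\Theta(\lambda_1/\lambda_2)$ uniformly in $t\ge 2$.

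For the first quantity, note $g^{(t)}(\theta) = \partial_z\ell(h^{(t)}_x,y_t)h(\theta,x_t) + \lambda_1\|\theta\|_2^2$, so $|g^{(t)}(\theta)| \le 2 + \lambda_1\|\theta\|_2^2$. Split the integral $\int g^{(t)}(q^{(t)}-q^{(t)}_*)$ into the bounded part $\partial_z\ell(h^{(t)}_x,y_t)\int h(\theta,x_t)(q^{(t)}-q^{(t)}_*)$, controlled by $2\cdot 1\cdot \|q^{(t)}-q^{(t)}_*\|_{L_1} \le 2\sqrt{2\KL(q^{(t)}\|q^{(t)}_*)} \le 2\sqrt{2}/t$ via Pinsker, and the quadratic part $\lambda_1\int\|\theta\|_2^2(q^{(t)}-q^{(t)}_*)$, which I bound using Proposition~\ref{prop:continuity_entropy} applied with density $q_*=q^{(t)}_*$ (so $c=2$, $\lambda = \frac{\lambda_1(t-1)}{\lambda_2(t+1)} = \Theta(\lambda_1/\lambda_2)$), choosing $\delta=1$: this gives $\lambda_1 \cdot O\!\big(\frac{\exp(8/\lambda_2)}{\lambda_1/\lambda_2}\KL(q^{(t)}\|q^{(t)}_*) + \frac{p\exp(4/\lambda_2)}{\lambda_1/\lambda_2}\big) = O\big(\lambda_2\exp(8/\lambda_2)(\KL + p)\big)$. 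Multiplying by $t$ and using $\KL \le 1/t^2$ so $t\cdot\KL \le 1/t \le 1$ and $t\sqrt{\KL}\le 1$, the first quantity is $O(1 + \lambda_2 p\exp(8/\lambda_2))$, which is within the claimed bound.

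For the second quantity, recall $e(q) = \bE_q[\log q] - \frac{4}{\lambda_2} - \frac{p}{2}(\exp(4/\lambda_2) + \log(3\pi\lambda_2/\lambda_1))$, so $e(q^{(t)}) - e(q^{(t)}_*) = \bE_{q^{(t)}}[\log q^{(t)}] - \bE_{q^{(t)}_*}[\log q^{(t)}_*]$, the difference of negative entropies. This is exactly the second bound in Proposition~\ref{prop:continuity_entropy}, again with $q_*=q^{(t)}_*$, $c=2$, $\delta=1$: it yields $|e(q^{(t)})-e(q^{(t)}_*)| \le O(\exp(8/\lambda_2))\KL(q^{(t)}\|q^{(t)}_*) + 2\sqrt{2\KL(q^{(t)}\|q^{(t)}_*)} + O(p\exp(4/\lambda_2))$. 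Multiplying by $\lambda_2 t$ and using $t\KL\le 1/t$, $t\sqrt{\KL}\le 1$ gives $\lambda_2 t\,|e(q^{(t)})-e(q^{(t)}_*)| = O(\lambda_2 + \lambda_2 p\exp(8/\lambda_2))$, again inside the stated $O(1+\lambda_2+p\lambda_2\exp(8/\lambda_2))$.

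The main obstacle I anticipate is bookkeeping the $\exp(\cdot/\lambda_2)$ constants correctly — in particular verifying that $\|H^{(t-1)}\|_\infty \le 2$ (so that the perturbation constant entering Proposition~\ref{prop:continuity_entropy} is genuinely uniform in $t$, giving $\exp(4c)=\exp(8)$ before the $1/\lambda_2$ rescaling of the potential, i.e. $\exp(8/\lambda_2)$ after), and checking the effective quadratic coefficient $\frac{\lambda_1(t-1)}{\lambda_2(t+1)}$ is bounded below uniformly for $t\ge 2$ (it is $\ge \frac{\lambda_1}{3\lambda_2}$), so that the $1/\lambda$ factors in Proposition~\ref{prop:continuity_entropy} contribute only a harmless $\lambda_2/\lambda_1$ which cancels against the $\lambda_1$ in the quadratic part of $g^{(t)}$. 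Everything else is a routine combination of Pinsker's inequality, Proposition~\ref{prop:continuity_entropy}, and the elementary bounds $t\cdot t^{-2}\le t^{-1}\le 1$ and $t\cdot t^{-1} = 1$.
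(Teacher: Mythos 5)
Your overall strategy is the same as the paper's: split $g^{(t)}$ into its bounded part (handled by Pinsker) and its quadratic part, and invoke Proposition~\ref{prop:continuity_entropy} for both the second-moment difference and the entropy difference, using the uniform bounds $\|H\|_\infty \le 2/\lambda_2$ and $\gamma_t\lambda_1 \ge \lambda_1/(3\lambda_2)$. Those structural observations are all correct. However, there is a genuine gap in your choice of the free parameter $\delta$.

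The issue is the $\KL$-independent additive term in Proposition~\ref{prop:continuity_entropy}, namely $\frac{\delta(1+\delta)p\exp(2c)}{2\lambda}$ (and $\frac{\delta(1+\delta)p\exp(2c)}{2}$ in the entropy bound). This term does not shrink as $\KL(q^{(t)}\|q^{(t)}_*) \to 0$, so after you multiply the whole estimate by $t$ it contributes $t\cdot\delta(1+\delta)\cdot O(p\lambda_2\exp(4/\lambda_2))$ to the first quantity and $\lambda_2 t\cdot\delta(1+\delta)\cdot O(p\exp(4/\lambda_2))$ to the second. With your choice $\delta=1$ these are $O(t\,p\lambda_2\exp(4/\lambda_2))$ and $O(t\,p\lambda_2\exp(4/\lambda_2))$ respectively --- they grow linearly in $t$ and are \emph{not} $O(1+\lambda_2+p\lambda_2\exp(8/\lambda_2))$. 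In your write-up you track the outer factor of $t$ only against the $\KL$- and $\sqrt{\KL}$-dependent terms ($t\cdot\KL\le 1/t$, $t\sqrt{\KL}\le 1$) and silently drop it from the $p$-dependent remainder. The fix is exactly what the paper does: take $\delta = 1/t$. Then $\delta(1+\delta)t = (1+1/t) \le 3/2$, so the additive term stays $O(p\lambda_2\exp(4/\lambda_2))$ uniformly in $t$, while the coefficient $(2+\delta+1/\delta)\approx t$ in front of $\KL$ is harmless because $t\cdot t\cdot\KL \le t^2/t^2 = 1$. With that single change your argument goes through and coincides with the paper's proof.
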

\begin{proof}
Recall the definition of $g^{(t)}, \overline{g}^{(t)}$ and $q^{(t)}_*$ (see notations in subsection \ref{subsec:extension}).
We set $\gamma_{t+1} = \frac{\sum_{s=1}^{t} s}{\lambda_2 \sum_{s=1}^{t+1} s} = \frac{t}{\lambda_2(t+2)}$.
Note that for $t\geq 1$,
\begin{align}
-2 + \lambda_1 \|\theta\|_2^2 &\leq g^{(t)}(\theta) \leq 2 + \lambda_1 \|\theta\|_2^2, \label{eq:misc1}\\
\gamma_{t+1}(-2 + \lambda_1 \|\theta\|_2^2) 
&\leq \overline{g}^{(t)}(\theta)
\leq \gamma_{t+1}(2 + \lambda_1 \|\theta\|_2^2), \label{eq:misc2}\\
\frac{1}{3\lambda_2} &\leq \gamma_{t+1} \leq \frac{1}{\lambda_2}. \label{eq:misc3}
\end{align}

Therefore, we have for $t\geq 2$ from Proposition \ref{prop:continuity_entropy} with $\delta = 1/t < 1$,
\begin{align*}
&t\left| \int g^{(t)}(\theta) (q^{(t)}(\theta) - q^{(t)}_* (\theta)) \rd \theta \right| \\
\leq& 2t\|q^{(t)}-q^{(t)}_*\|_{L_1(\rd\theta)} 
+ \lambda_1 t\left| \int \|\theta\|_2^2 (q^{(t)}(\theta) - q^{(t)}_* (\theta)) \rd \theta \right| \\
\leq& 2t \sqrt{2 \KL(q^{(t)} \| q^{(t)}_*)} \\
&+ \lambda_2(t+2)
\left( (2+\delta + 1/\delta)\exp(8/\lambda_2)\KL(q^{(t)}\|q^{(t)}_*) + \frac{\delta(1+\delta)p\exp(4/\lambda_2)}{2} \right) \\
\leq& 2\sqrt{2} + 3 \lambda_2 \left( 4\exp(8/\lambda_2) + p\exp(4/\lambda_2) \right) \\
=& O\left( 1 + p\lambda_2\exp(8/\lambda_2)\right).
\end{align*}
 
Moreover, we have for $t \geq 2$,
\begin{align*}
&\lambda_2 t \left|e(q^{(t)}) - e(q^{(t)}_*) \right|  \\
\leq& \lambda_2 t \left( \left( 1 + (2+\delta + 1/\delta) \exp(8/\lambda_2) \right) \KL(q^{(t)}\|q^{(t)}_*) + \frac{2}{\lambda_2}\sqrt{2\KL(q^{(t)}\|q^{(t)}_*)} 
+ \frac{\delta(1+\delta)p\exp(4/\lambda_2)}{2} \right) \\
\leq& \lambda_2 t \left( \left( 1 + (3 + t) \exp(8/\lambda_2) \right) \frac{1}{(t-1)^2} + \frac{2\sqrt{2}}{\lambda_2 (t-1)}
+ \frac{p\exp(4/\lambda_2)}{t} \right) \\
=& O\left( 1 + \lambda_2 + p \lambda_2 \exp(8/\lambda_2) \right).
\end{align*}
This finishes the proof.
\end{proof}
 
\subsection{Outer Loop Complexity}
Based on the auxiliary results and the convex optimization theory developed in \citet{nesterov2009primal,xiao2009dual}, we now prove Theorem \ref{thm:convergence2} which is an extension of Theorem \ref{thm:convergence}.
\begin{proof}[Proof of Theorem \ref{thm:convergence2}]
For $t \geq 1$ we define, 
\begin{align*}
V_t(q) = -\bE_{q}\left[ \sum_{s=1}^{t} s g^{(s)}\right] - \lambda_2 e(q) \sum_{s=1}^{t+1} s.
\end{align*}
From the definition, the density $q^{(t+1)}_* \in \cP_2$ calculated in Algorithm \ref{alg:da2} maximizes $V_t(q)$.
We denote $V_t^* = V(q^{(t+1)}_*)$.
Then, for $t \geq 2$, we get
\begin{align}
V_t^* &=  -\bE_{q^{(t+1)}_*}\left[ \sum_{s=1}^{t-1} s g^{(s)}\right] -  \lambda_2 e(q^{(t+1)}_*) \sum_{s=1}^{t} s
- \bE_{q^{(t+1)}_*}\left[ t g^{(t)}\right] -  \lambda_2 (t+1) e(q^{(t+1)}_*) \notag\\
&\leq V_{t-1}^* - \frac{\lambda_2 \sum_{s=1}^t s}{2} \|q^{(t+1)}_* - q^{(t)}_*\|_{L_1(\rd\theta)}^2 
- \bE_{q^{(t)}_*}\left[ t g^{(t)}\right] -  \lambda_2 (t+1) e(q^{(t+1)}_*) \notag\\
&+ t \int (q^{(t)}_* - q^{(t+1)}_*)(\theta) g^{(t)}(\theta)\rd \theta \notag\\
&\leq V_{t-1}^* - \frac{\lambda_2 \sum_{s=1}^t s}{2} \|q^{(t+1)}_* - q^{(t)}_*\|_{L_1(\rd\theta)}^2 
- \bE_{q^{(t)}}\left[ t g^{(t)}\right] -  \lambda_2 (t+1) e(q^{(t+1)}_*) \notag\\
&+ t \left| \int  g^{(t)}(\theta) (q^{(t)}(\theta) - q^{(t)}_* (\theta)) \rd \theta \right|
+ t \int (q^{(t)}_* - q^{(t+1)}_*)(\theta) g^{(t)}(\theta)\rd \theta \notag\\
&\leq V_{t-1}^* - \frac{\lambda_2 \sum_{s=1}^t s}{2} \|q^{(t+1)}_* - q^{(t)}_*\|_{L_1(\rd\theta)}^2 
- \bE_{q^{(t)}}\left[ t g^{(t)}\right] -  \lambda_2 (t+1) e(q^{(t+1)}_*) \notag\\
&+ t \int (q^{(t)}_* - q^{(t+1)}_*)(\theta) g^{(t)}(\theta)\rd \theta + O(1+\lambda_2 + p\lambda_2\exp(8/\lambda_2) ), \label{eq:v_bound}
\end{align}
where for the first inequality we used the optimality of $q^{(t)}_*$ and the strong convexity (\ref{eq:st-conv}) at $q^{(t)}_*$, and for the final inequality we used Lemma \ref{lem:acc_subprob}.

We set $R_t = \left(\frac{3}{2}p+15\right)\frac{\lambda_2}{\lambda_1}\log(1+t)$ and also $\gamma_{t+1} = \frac{\sum_{s=1}^{t} s}{\lambda_2 \sum_{s=1}^{t+1} s} = \frac{t}{\lambda_2(t+2)}$, as done in the proof of Lemma \ref{lem:acc_subprob}.

From Assumptions {\bf (A1')}, {\bf (A2')} and  $q^{(t)}_* = \exp\left( - \frac{\sum_{s=1}^{t-1} s g^{(s)}}{\lambda_2\sum_{s=1}^{t} s}\right)/ \int  \exp\left( - \frac{\sum_{s=1}^{t-1} s g^{(s)}(\theta)}{\lambda_2\sum_{s=1}^{t} s}\right) \rd\theta$ $(t\geq 2)$, 
we have for $t \geq 2$,
\begin{align}
q^{(t)}_*(\theta) 
&\leq  \exp( \gamma_t(2 - \lambda_1 \|\theta\|_2^2)) / \int  \exp( \gamma_t(-2-\lambda_1\|\theta\|_2^2)) \rd\theta \notag\\
&\leq \exp( 4\gamma_t ) \exp( - \gamma_t\lambda_1 \|\theta\|_2^2) / \int  \exp( -\gamma_t\lambda_1\|\theta\|_2^2) \rd\theta \notag\\
&\leq \exp( 4/\lambda_2 ) \exp( - \gamma_t\lambda_1 \|\theta\|_2^2) / \int  \exp( -\gamma_t\lambda_1\|\theta\|_2^2) \rd\theta. \label{eq:q_bound}
\end{align}
Using (\ref{eq:q_bound}) and applying Lemma \ref{lem:tail_expectation} with $\sigma^2 = \frac{1}{2\gamma_t \lambda_1},~\frac{1}{2\gamma_{t+1}\lambda_1}$ and $R=R_t$, we have for $t\geq 2$,
\\
\begin{align*}
&~~~\left|\int (q^{(t)}_* - q^{(t+1)}_*)(\theta) g^{(t)}(\theta)\rd \theta \right| \\ 
&\leq 2\|q^{(t)}_* - q^{(t+1)}_*\|_{L_1(\rd \theta)} +
\lambda_1\int \|\theta\|_2^2 |(q^{(t)}_* - q^{(t+1)}_*)(\theta) | \rd \theta \\
&\leq (2+ 2\lambda_1R_t)\|q^{(t)}_* - q^{(t+1)}_*\|_{L_1(\rd \theta)}
+ \lambda_1\int_{\|\theta\|_2^2 > 2R_t} \|\theta\|_2^2 (q^{(t)}_* + q^{(t+1)}_*)(\theta) \rd \theta \\
&\leq (2+ 2\lambda_1R_t)\|q^{(t)}_* - q^{(t+1)}_*\|_{L_1(\rd \theta)} + \lambda_1 \exp(4/\lambda_2)
\int_{\|\theta\|_2^2 >2R_t} \|\theta\|_2^2  \frac{\exp( - \gamma_t\lambda_1 \|\theta\|_2^2)}{\int  \exp( -\gamma_t\lambda_1\|\theta\|_2^2) \rd\theta} \rd \theta \\
&~~~+ \lambda_1 \exp(4/\lambda_2)
\int_{\|\theta\|_2^2 >2R_t} \|\theta\|_2^2  \frac{\exp( - \gamma_{t+1}\lambda_1 \|\theta\|_2^2)}{\int  \exp( -\gamma_{t+1}\lambda_1\|\theta\|_2^2) \rd\theta} \rd \theta \\
&\leq (2+ 2\lambda_1R_t)\|q^{(t)}_* - q^{(t+1)}_*\|_{L_1(\rd \theta)} + 2\lambda_1 \exp(4/\lambda_2) \left(R_t + \frac{5}{\lambda_1\gamma_t}\right)\exp\left( - \frac{\lambda_1 R_t \gamma_t}{5} \right) \\
&~~~+ 2\lambda_1 \exp(4/\lambda_2)  \left(R_t + \frac{5}{\lambda_1\gamma_t}\right)\exp\left( - \frac{\lambda_1 R_t \gamma_{t+1}}{5} \right) \\
&\leq  (2+ 2\lambda_1R_t)\|q^{(t)}_* - q^{(t+1)}_*\|_{L_1(\rd \theta)} + 4\lambda_1 \exp(4/\lambda_2)\left(R_t + 15\frac{\lambda_2}{\lambda_1}\right)\exp\left( - \frac{R_t\lambda_1 }{15\lambda_2} \right) \\
&\leq  \left(2+ 2\left(\frac{3}{2}p+15\right)\lambda_2\log(1+t) \right)\|q^{(t)}_* - q^{(t+1)}_*\|_{L_1(\rd \theta)} +8 \exp(4/\lambda_2) \left(\frac{3}{2}p+15\right) \frac{\lambda_2\log(1+t)}{(1+t)^{1+\frac{p}{10}}},
\end{align*}
where for the fifth inequality we used (\ref{eq:misc3}) and for the sixth inequality we used $15\lambda_2/\lambda_1 \leq R_t$.

Applying Young's inequality $ab \leq \frac{a^2}{2\delta} + \frac{\delta b^2}{2}$ with $a = \left(2+ 2\left(\frac{3}{2}p+15\right)\lambda_2\log(1+t) \right)$, $b=\|q^{(t)}_* - q^{(t+1)}_*\|_{L_1(\rd \theta)}$, and $\delta=\frac{\lambda_2}{2}(t+1)$, we get

\begin{align}
\left| \int (q^{(t)}_* - q^{(t+1)}_*)(\theta) g^{(t)}(\theta)\rd \theta \right|
&\leq \frac{ \left(2+ 2\left(\frac{3}{2}p+15\right)\lambda_2\log(1+t) \right)^2 }{ \lambda_2(t+1) }
+ \frac{ \lambda_2(t+1)\|q^{(t)}_* - q^{(t+1)}_*\|_{L_1(\rd \theta)}^2}{4} \notag\\
&+ 8 \exp(4/\lambda_2) \left(\frac{3}{2}p+15\right) \frac{\lambda_2\log(1+t)}{(1+t)^{1+\frac{p}{10}}}. \label{eq:v_bound2}
\end{align}

Combining (\ref{eq:v_bound}) and (\ref{eq:v_bound2}), we have for $t\geq 2$,
\begin{align}
V_t^* 
&\leq V_{t-1}^* - \bE_{q^{(t)}}\left[ t g^{(t)}\right] -  \lambda_2 (t+1) e(q^{(t+1)}_*) + O(1+\lambda_2 + p\lambda_2\exp(8/\lambda_2)) \notag\\
&+ \frac{1}{\lambda_2} \left(2+ 2\left(\frac{3}{2}p+15\right)\lambda_2 \log(1+t) \right)^2 
+ 8 \exp(4/\lambda_2) \left(\frac{3}{2}p+15\right) \frac{\lambda_2\log(1+t)}{(1+t)^{\frac{p}{10}}} \notag\\
&= V_{t-1}^* - \bE_{q^{(t)}}\left[ t g^{(t)}\right] -  \lambda_2 (t+1) e(q^{(t+1)}_*) + O(1+\lambda_2 + p\lambda_2\exp(8/\lambda_2)) \notag\\
&+ O\left( \frac{1}{\lambda_2} + p^2\lambda_2 \log^2(1+t) + p \lambda_2 \exp(4/\lambda_2) \right) \notag\\
&= V_{t-1}^* - \bE_{q^{(t)}}\left[ t g^{(t)}\right] -  \lambda_2 (t+1) e(q^{(t+1)}_*) 
+ O\left(p\lambda_2\exp(8/\lambda_2) + p^2\lambda_2 \log^2(1+t) \right) \notag\\
&= V_{t-1}^* - \bE_{q^{(t)}}\left[ t g^{(t)}\right] -  \lambda_2 (t+1) e(q^{(t+1)}_*) 
+ O\left( (1 + \exp(8/\lambda_2))p^2 \lambda_2 \log^2(1+t) \right) \notag \\
&= V_{t-1}^* - \bE_{q^{(t)}}\left[ t g^{(t)}\right] -  \lambda_2 (t+1) e(q^{(t+1)}_*) 
+ \alpha_t, \label{eq:v_recursive}
\end{align}
where we set $\alpha_t = O\left( (1 + \exp(8/\lambda_2))p^2 \lambda_2 \log^2(1+t) \right)$.

From Proposition \ref{prop:maximum_entropy}, (\ref{eq:misc2}), and (\ref{eq:misc3}),
\begin{equation*}
-\bE_{q^{(t)}_*}[\log(q^{(t)}_*)] \leq \frac{4}{\lambda_2} + \frac{p}{2}\left( \exp\left( \frac{4}{\lambda_2} \right) + \log\left(\frac{3\pi\lambda_2}{\lambda_1}\right)\right),
\end{equation*}
meaning $e(q^{(t)}_*) \geq 0$.
Hence, 
\[V_1^* = - \bE_{q^{(2)}_*}[g^{(1)}] - 3\lambda_2 e(q^{(2)}_*) 
\leq 2 - 3 \lambda_2 e(q^{(2)}_*)
\leq 2 - 2 \lambda_2 e(q^{(2)}_*). \]

Summing the inequality (\ref{eq:v_recursive}) over $t\in \{2,\ldots,T+1\}$,
\begin{align}
V_{T+1}^* 
&\leq 2 - 2\lambda_2 e(q^{(2)}_*)
+ \sum_{t=2}^{T+1}\left\{ - \bE_{q^{(t)}}\left[ t g^{(t)}\right] - \lambda_2 (t+1) e(q^{(t+1)}_*) 
+ \alpha_t \right\} \notag\\
&= 2
- \sum_{t=2}^{T+1}t \left\{ \bE_{q^{(t)}}\left[ g^{(t)}\right] + \lambda_2 e(q^{(t)}_*) \right\}
+ \sum_{t=2}^{T+1}\alpha_t
- \lambda_2 (T+2) e(q^{(T+2)}_*) \notag \\
&\leq 2
- \sum_{t=2}^{T+1}t \left\{ \bE_{q^{(t)}}\left[ g^{(t)}\right] + \lambda_2 e(q^{(t)}) \right\}
+ \sum_{t=2}^{T+1}\alpha_t, \label{eq:v_expand}
\end{align}
where we used $ \lambda_2 t \left| e(q^{(t)}) - e(q^{(t)}_*) \right | = \alpha_t$ (Lemma \ref{lem:acc_subprob}), $2\alpha_t = O(\alpha_t)$, and $e(q^{(T+2)}_*) \geq 0$.

On the other hand, for $\forall q_* \in \cP_2$,
\begin{align}
 V_{T+1}^* 
 = \max_{q \in \cP_2}\left\{  -\bE_{q}\left[ \sum_{t=1}^{T+1} t g^{(t)}\right] - \lambda_2 e(q) \sum_{t=1}^{T+2} t \right\} 
\geq  -\bE_{q_*}\left[ \sum_{t=1}^{T+1} t g^{(t)}\right] - \lambda_2 e(q_*) \sum_{t=1}^{T+2} t. \label{eq:v_bound3}
\end{align}

Using {\bf (A1')}, {\bf (A2')}, and {\bf (A3')}, we have for any density function $q$,
\begin{align} \label{eq:h_approx}
 \left| (\partial_z\ell(h_{q^{(t)}}(x_{t}),y_{t})
 - \partial_z\ell(h^{(t)}_{x},y_{t})) \bE_{q}[h(\cdot,x_{t})] \right| 
 \leq \epsilon.
\end{align}
Hence, from (\ref{eq:v_expand}), (\ref{eq:v_bound3}), (\ref{eq:h_approx}), and the convexity of the loss, 
\begin{align*}
\frac{2}{T(T+3)}& \sum_{t=2}^{T+1}t \Bigl\{ \ell(h_{q^{(t)}}(x_{t}),y_{t}) + \lambda_1 \bE_{q^{(t)}}[ \|\theta\|_2^2] + \lambda_2\bE_{q^{(t)}}[\log(q^{(t)})] \\
&\quad\quad -\ell(h_{q_*}(x_{t}),y_{t}) - \lambda_1 \bE_{q_*}[ \|\theta\|_2^2] - \lambda_2\bE_{q_*}[\log(q_*)] \Bigr\}  \\
&\leq \frac{2}{T(T+3)} \sum_{t=2}^{T+1}t \Bigl\{ 
\partial_z\ell(h_{q^{(t)}}(x_{t}),y_{t})\left(\bE_{q^{(t)}}[h(\cdot,x_{t})] - \bE_{q_*}[h(\cdot,x_{t})]\right) \\
&+ \lambda_1 \left(\bE_{q^{(t)}}[\|\theta\|_2^2] - \bE_{q_*}[\|\theta\|_2^2] \right)
+ \lambda_2 \left(\bE_{q^{(t)}}[\log(q^{(t)})] - \bE_{q_*}[\log(q_*)] \right) \Bigr\} \\
&\leq\frac{2}{T(T+3)} \sum_{t=2}^{T+1}t \left\{ 
2\epsilon + \bE_{q^{(t)}}[g^{(t)}] - \bE_{q_*}[g^{(t)}]
+ \lambda_2 \left( e(q^{(t)}) - e(q_*) \right) \right\} \\
&\leq 2\epsilon + \frac{2}{T(T+3)}\left( 2 -V_{T+1}^*+\sum_{t=2}^{T+1}\alpha_t  - \sum_{t=2}^{T+1}t \left(\bE_{q_*}[g^{(t)}]
+ \lambda_2 e(q_*))\right) \right) \\
&\leq 2\epsilon + \frac{2}{T(T+3)}\left( 2
+\bE_{q_*}\left[ g^{(1)}\right] + \lambda_2 (T+3) e(q_*) 
+\sum_{t=2}^{T+1}\alpha_t  \right) \\
&\leq 2\epsilon + \frac{2}{T(T+3)}\left(4+\lambda_1 \bE_{q_*}\left[ \|\theta\|_2^2\right] \right)
+ \frac{2 \lambda_2 e(q_*)  }{T}
+\frac{2}{T}  O\left( (1 + \exp(8/\lambda_2))p^2 \lambda_2 \log^2(T+2) \right).
\end{align*}

Taking the expectation with respect to the history of examples, we have
\begin{align*}
\frac{2}{T(T+3)} &\sum_{t=2}^{T+1}t \left( \bE[\cL(q^{(t)})] - \cL(q_*) \right) \\
&= 
2\epsilon +
O\left( \frac{1}{T^2}\left(1+\lambda_1 \bE_{q_*}\left[ \|\theta\|_2^2\right] \right)
+ \frac{ \lambda_2 }{T}\left( e(q_*) + (1 + \exp(8/\lambda_2))p^2 \log^2(T+2) \right) \right).
\end{align*}
\end{proof}

\subsection{Inner Loop Complexity}
We next prove Corollary \ref{cor:inner-complexity} which gives an estimate of inner loop iteration complexity. 
This result is derived by utilizing the convergence rate of the Langevin algorithm under LSI developed in \cite{vempala2019rapid}. 
We here consider the ideal Algorithm \ref{alg:da} (i.e., warm-start and exact mean field limit ($\epsilon=0$)). 

\begin{proof}[Proof of Corollary \ref{cor:inner-complexity}]
We verify the assumptions required in Theorem \ref{thm:vempala-wibisono}.
We recall that $q^{(t+1)}_*$ takes the form of Boltzmann distribution: for $t \geq 1$,
\begin{align*}
q^{(t+1)}_* &\propto \exp\left( - \frac{\sum_{s=1}^{t} s g^{(s)}}{\lambda_2\sum_{s=1}^{t+1} s}\right) \\
&= \exp\left( - \frac{1}{\lambda_2\sum_{s=1}^{t+1} s}\sum_{s=1}^{t} s\partial_z\ell( h^{(t)}_x,y_{t}) h(\cdot,x_{t})
- \frac{\lambda_1 t}{\lambda_2 (t+2)}\|\theta\|_2^2 \right).
\end{align*}
Note that $\frac{\lambda_1}{\lambda_2} \geq \frac{\lambda_1 t}{\lambda_2 (t+2)} \geq \frac{\lambda_1}{3\lambda_2}$ $(t\geq 1)$ and $\left|\frac{1}{\lambda_2\sum_{s=1}^{t+1} s}\sum_{s=1}^{t} s\partial_z\ell( h^{(t)}_x,y_{t}) h(\cdot,x_{t}) \right| \leq \frac{2t}{\lambda_2(t+2)} \leq \frac{2}{\lambda_2}$.
Therefore, from Example \ref{ex:quadratic_sobolev} and Lemma \ref{lem:sobolev_perturbation}, we know that $q^{(t+1)}_*$ satisfies the log-Sobolev inequality with a constant $\frac{2\lambda_1}{3\lambda_2 \exp(8/\lambda_2)}$; in addition, the gradient of $\log(q^{(t+1)}_*)$ is $\frac{2}{\lambda_2}(1+\lambda_1)$-Lipschitz continuous.
Therefore, from Theorem \ref{thm:vempala-wibisono} we deduce that Langevin algorithm with learning rate $\eta_t \leq  \frac{\lambda_1 \lambda_2 \delta_{t+1}}{96p(1+\lambda_1)^2\exp(8/\lambda_2)}$ yields $q^{t+1}$ satisfying 
$\KL(q^{(t+1)} \| q^{(t+1)}_*) \leq \delta_{t+1}$ within $\frac{3\lambda_2\exp(8/\lambda_2)}{2\lambda_1\eta_t}\log\frac{2\KL(q^{(t)}\| q^{(t+1)}_*)}{\delta_{t+1}}$-iterations.

We next bound $\KL(q^{(t)}\| q^{(t+1)}_*)$.
Apply Proposition \ref{prop:boundedness-of-KL} with $q=q^{(t)}$, $q_* = q^{(t+1)}_*$, and $q_\sharp = q^{(t)}_*$. 
Note that in this setting, constants $c_*, c_\sharp, \lambda_*,$ and $\lambda_\sharp$ satisfy 
\begin{align*}
&c_* \leq \frac{2}{\lambda_2},~~\frac{\lambda_1}{3\lambda_2} \leq \lambda_* \leq \frac{\lambda_1}{\lambda_2}, \\
&c_\sharp \leq \frac{2}{\lambda_2},~~\frac{\lambda_1}{3\lambda_2} \leq \lambda_\sharp \leq \frac{\lambda_1}{\lambda_2}. 
\end{align*}

Then, we get
\begin{align*}
\KL(q^{(t)}\|q^{(t+1)}_*) 
&\leq \frac{12}{\lambda_2}
+ 6 p\exp\left( \frac{4}{\lambda_2} \right)
+ \frac{p}{2} \log3 
+ \left( 1+ 16 \exp\left( \frac{8}{\lambda_2}\right) \right)\KL(q^{(t)}\|q^{(t)}_*) \\
&+ \frac{2}{\lambda_2}\sqrt{2\KL(q^{(t)}\|q^{(t)}_{*})}.
\end{align*}

Hence, we can conclude $\KL(q^{(t)}\|q^{(t+1)}_*)$ are uniformly bounded with respect to $t \in \{1,\ldots,T\}$ as long as $\KL(q^{(t)}\|q^{(t)}_*) \leq \delta_t$ and $q^{(1)}$ is a Gaussian distribution.
\end{proof}

\paragraph{Case of resampling.} We note that for resampling scheme, the similar inner loop complexity of $O\left(\frac{\lambda_2\exp(8/\lambda_2)}{\lambda_1\eta_t}\log\frac{2\KL(q^{(1)}\| q^{(t+1)}_*)}{\delta_{t+1}} \right)$ can be immediately obtained by replacing the initial distribution of Langevin algorithm with $q^{(1)}(\theta)\rd \theta$. Moreover, the uniform boundedness of $\KL(q^{(1)}\| q^{(t+1)}_*)$ with respect to $t$ is also guaranteed by applying Proposition \ref{prop:boundedness-of-KL} with $q=q_\sharp = q^{(1)}$ and $q_* = q_*^{(t+1)}$ as long as $q^{(1)}(\theta)\rd\theta$ is a Gaussian distribution.

\bigskip
\part*{\large ADDITIONAL RESULTS AND DISCUSSIONS}
\section{Discretization Error of Finite Particles}
\label{sec:discretization}
\subsection{Case of Resampling}
As discussed in subsection \ref{subsec:extension}, to establish the finite-particle convergence guarantees of Algorithm \ref{alg:pda2} with resampling up to $O(\epsilon)$-error, we need to show that $h_x^{(t)} = h_{\tilde{\Theta}^{(t)}}(x_t)$ satisfies the condition $| h^{(t)}_{x} - h_{q^{(t)}}(x_{t}) | \leq \epsilon$ in {\bf (A3')}.
Hence, we are interested in characterizing the discretization error that stems from using finitely many particles.

For the resampling scheme, we can easily derive that the required number of particles is $O(\epsilon^{-2}\log(T/\delta))$ with high probability $1-\delta$, because i.i.d. particles are obtained by the Langevin algorithm and Hoeffding's inequality is applicable.

\begin{lemma}[Hoeffding's inequality] \label{lemma:hoeffding_lemma}
Let $Z, Z_1,\ldots,Z_m$ be i.i.d. random variables taking values in $[-a,a]$ for $a>0$.
Then, for any $\rho > 0$, we get
  \begin{equation*}
    \bP\left[ \left|\frac{1}{M}\sum_{r=1}^M Z_r - \bE[Z] \right| > \rho \right] 
    \leq 2\exp\left( - \frac{\rho^2 M}{2a^2}\right).
  \end{equation*} 
\end{lemma}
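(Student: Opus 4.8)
This is the classical Hoeffding bound, and I would prove it by the exponential moment (Chernoff) method together with Hoeffding's lemma on the moment generating function of a bounded centered variable. Write $W_r = Z_r - \bE[Z]$, so that $W_1,\dots,W_M$ are i.i.d., mean zero, and take values in $[c,d]$ with $d-c \le 2a$ (since $Z_r\in[-a,a]$ forces $\bE[Z]\in[-a,a]$). It suffices to control $\bP\big[\tfrac1M\sum_r W_r > \rho\big]$; the two-sided statement then follows by applying the one-sided bound to $-W_r$ and taking a union bound, which produces the factor $2$.

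The first step is Hoeffding's lemma: for any $s\in\bR$ and any mean-zero random variable $W$ supported in $[c,d]$, one has $\bE[e^{sW}] \le \exp\!\big(s^2(d-c)^2/8\big)$. I would prove this by convexity of $x\mapsto e^{sx}$: for $x\in[c,d]$, $e^{sx}\le \tfrac{d-x}{d-c}e^{sc} + \tfrac{x-c}{d-c}e^{sd}$; taking expectations and using $\bE[W]=0$ gives $\bE[e^{sW}]\le e^{\psi(s)}$ where $\psi(s) = \log\!\big(\tfrac{d}{d-c}e^{sc} - \tfrac{c}{d-c}e^{sd}\big)$. A short computation shows $\psi(0)=\psi'(0)=0$ and $\psi''(s)\le (d-c)^2/4$ for all $s$ (it is the variance of a Bernoulli-type law, hence at most one quarter of the range squared), so Taylor's theorem yields $\psi(s)\le s^2(d-c)^2/8$. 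Since $d-c\le 2a$, this gives $\bE[e^{sW_r}]\le e^{s^2 a^2/2}$.

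The second step is the Chernoff argument. For any $s>0$,
\begin{equation*}
\bP\Big[\tfrac1M\textstyle\sum_{r=1}^M W_r > \rho\Big] = \bP\Big[e^{s\sum_r W_r} > e^{sM\rho}\Big] \le e^{-sM\rho}\,\bE\big[e^{s\sum_r W_r}\big] = e^{-sM\rho}\prod_{r=1}^M \bE[e^{sW_r}] \le \exp\!\big(-sM\rho + \tfrac12 s^2 M a^2\big),
\end{equation*}
using independence and the MGF bound. Optimizing the exponent over $s$ with $s = \rho/a^2$ gives the bound $\exp\!\big(-M\rho^2/(2a^2)\big)$. Combining with the symmetric estimate and the union bound completes the proof.

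I do not expect a genuine obstacle here: the only mildly delicate point is the bound $\psi''(s)\le (d-c)^2/4$ in Hoeffding's lemma, which is a one-line optimization ($p(1-p)\le 1/4$ for $p\in[0,1]$). Everything else is routine. If preferred, this lemma can simply be cited as a standard concentration result rather than reproved.
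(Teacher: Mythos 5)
Your proposal is correct: the Chernoff argument combined with Hoeffding's lemma on the moment generating function yields exactly the stated bound (the optimization $s=\rho/a^2$ gives the exponent $-M\rho^2/(2a^2)$, and the union bound supplies the factor $2$). The paper does not prove this lemma at all — it is stated as a standard concentration result and used off the shelf — so your argument is a complete and standard proof of a fact the paper simply cites.
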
 
 
\subsection{Case of Warm-start} 
We next consider the warm-start scheme. 
Note that the convergence of PDA with warm-start is guaranteed by coupling it with its mean-field limit $M\rightarrow \infty$ and applying Theorem \ref{thm:convergence} without tolerance (i.e., $\epsilon=0$).
To analyze the particle complexity, we make an additional assumption regarding the regularity of the loss function and the model.
\begin{assumption}\ 
\begin{description}[topsep=0mm,itemsep=0mm]
\item{{\bf(A5)}} $h(\cdot,x)$ is $1$-Lipschitz continuous\footnote{WLOG the Lipschitz constant is set to 1, since the same analysis works for any fixed constant.} for $\forall x \in \cX$.
\end{description}
\vspace{2mm}
\end{assumption}
\paragraph{Remark.} The above regularity assumption is common in the literature and cover many important problem settings in the optimization of two-layer neural network in the mean field regime.
Indeed, {\bf(A5)} is satisfied for two-layer network in Example \ref{eg:2nn} when the output or input layer is fixed and when the activation function is Lipschitz continuous. 

The following proposition shows the convergence of Algorithm \ref{alg:pda} to Algorithm \ref{alg:da} as $M\rightarrow \infty$.

\begin{proposition}[Finite Particle Approximation]\label{prop:finite-particle}
For training examples $\{x_t\}_{t=1}^T$ and any example $\tilde{x}$, define 
\[\rho_{T,M} = \max_{ \substack{s \in \{1,\ldots,T\} \\ t \in \{1,\ldots,T+1\}}}\left| h_{q^{(t)}}(x_s)  - h_{\tilde{\Theta}^{(t)}}(x_s) \right| \lor \left| h_{q^{(t)}}(\tilde{x})  - h_{\tilde{\Theta}^{(t)}}(\tilde{x}) \right|.\]
Under {\bf (A1')}, {\bf (A2)}, {\bf (A4)}, and {\bf (A5)}, if we run PDA (Algorithm \ref{alg:pda}) on $\tilde{\Theta}$ and the corresponding mean field limit DA (Algorithm \ref{alg:da}) on $q$, then with high probability $\lim_{M\rightarrow\infty} \rho_{T,M}=0.$
Moreover, if we set $\eta_t \leq \frac{\lambda_2}{2\lambda_1}$, $\lambda_1 \geq \frac{3}{2}$, and $T_t \geq \frac{3\lambda_2 \log\left(4 \right)}{(2\lambda_1 - 1) \eta_t}$, then with probability at least $1-\delta$,
\begin{align*}
\rho_{T,M} \leq \left(1 + \frac{4}{2\lambda_1 - 1} \right)\sqrt{ \frac{2}{M}\log\left( \frac{2 (T+1)^2}{\delta} \right)}. 
\end{align*}
\end{proposition}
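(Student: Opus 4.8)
The plan is to couple the finite-particle trajectory of PDA with a system of $M$ \emph{independent} copies of the mean-field DA--Langevin recursion, driven by the \emph{same} Gaussian noise $\zeta_r^{(k)}$ and the same random initialization, so that $\rho_{n,T,M}$ splits into a statistical error (handled by Hoeffding's inequality, Lemma~\ref{lemma:hoeffding_lemma}) and a pathwise coupling error (handled by a contraction estimate inside each inner loop). For each $r$, let $\hat\theta_r^{(t,k)}$ run the inner recursion of Algorithm~\ref{alg:pda} \emph{with the true mean-field values $h_{q^{(s)}}(x_s)$ substituted for $h_{\tilde\Theta^{(s)}}(x_s)$} in the potential---call the resulting potential $\overline g_q^{(t)}$, keeping $\overline g^{(t)}$ for the PDA potential---initialized at $\hat\theta_r^{(1,1)}=\tilde\theta_r^{(1)}$, warm-started across outer steps, and using the same noise. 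Conditionally on the data history, the iterates $\hat\theta_r^{(t)}:=\hat\theta_r^{(t-1,T_{t-1}+1)}$ are i.i.d.\ over $r$ with common law $q^{(t)}$, i.e.\ this auxiliary system is a legitimate realization of Algorithm~\ref{alg:da}. Writing $\hat\Theta^{(t)}=\{\hat\theta_r^{(t)}\}_r$, for all $i,t$,
\[
 \big|h_{q^{(t)}}(x_i)-h_{\tilde\Theta^{(t)}}(x_i)\big|\;\le\;\big|h_{q^{(t)}}(x_i)-h_{\hat\Theta^{(t)}}(x_i)\big|\;+\;\big|h_{\hat\Theta^{(t)}}(x_i)-h_{\tilde\Theta^{(t)}}(x_i)\big|.
\]

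For the first (statistical) term, $h_{\hat\Theta^{(t)}}(x_i)=\tfrac1M\sum_r h(\hat\theta_r^{(t)},x_i)$ is an average of i.i.d.\ variables lying in $[-1,1]$ by {\bf(A2)} with mean $h_{q^{(t)}}(x_i)$; Lemma~\ref{lemma:hoeffding_lemma} with $a=1$ plus a union bound over the $n(T+1)$ pairs $(i,t)$ gives, with probability at least $1-\delta$, $\beta:=\max_{i,t}|h_{\hat\Theta^{(t)}}(x_i)-h_{q^{(t)}}(x_i)|\le\sqrt{\tfrac{2}{M}\log\tfrac{2n(T+1)}{\delta}}$, which tends to $0$ as $M\to\infty$. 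For the second (coupling) term, {\bf(A5)} gives $|h_{\hat\Theta^{(t)}}(x_i)-h_{\tilde\Theta^{(t)}}(x_i)|\le\tfrac1M\sum_r\|\tilde\theta_r^{(t)}-\hat\theta_r^{(t)}\|_2\le E^{(t)}$, where $E^{(t)}:=\max_r\|\tilde\theta_r^{(t)}-\hat\theta_r^{(t)}\|_2$ and $E^{(1)}=0$ by the shared initialization. It therefore suffices to bound $\max_t E^{(t)}$ in terms of $\beta$.

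Now I propagate $E^{(t)}$ through one inner loop. The synchronous coupling yields $\tilde\theta_r^{(k+1)}-\hat\theta_r^{(k+1)}=(\tilde\theta_r^{(k)}-\hat\theta_r^{(k)})-\eta_t(\nabla\overline g^{(t)}(\tilde\theta_r^{(k)})-\nabla\overline g_q^{(t)}(\hat\theta_r^{(k)}))$, which I decompose into a \emph{same-potential} term $-\eta_t(\nabla\overline g^{(t)}(\tilde\theta_r^{(k)})-\nabla\overline g^{(t)}(\hat\theta_r^{(k)}))$ and a \emph{mismatch} term $-\eta_t(\nabla\overline g^{(t)}(\hat\theta_r^{(k)})-\nabla\overline g_q^{(t)}(\hat\theta_r^{(k)}))$. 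For the same-potential term, the Hessian of $\overline g^{(t)}$ is at least $\tfrac{2\lambda_1 t}{\lambda_2(t+2)}I$ (quadratic regularizer) minus a loss contribution whose operator norm is controlled via {\bf(A1')} ($|\partial_z\ell|\le2$) and {\bf(A4)} ($\partial_\theta h$ is $1$-Lipschitz); hence for $\lambda_1\ge\tfrac32$ it is strongly convex, and for $\eta_t\le\tfrac{\lambda_2}{2\lambda_1}$ the gradient step contracts with factor $c_t$ satisfying $1-c_t\ge\tfrac{(2\lambda_1-1)\eta_t}{3\lambda_2}$ (using $\tfrac{t}{t+2}\ge\tfrac13$). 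For the mismatch term, {\bf(A1')} ($\partial_z\ell(\cdot,y)$ is $1$-Lipschitz) and {\bf(A5)} ($\|\partial_\theta h\|_2\le1$) bound it by $\tfrac{2}{\lambda_2(t+2)(t+1)}\sum_{s=1}^{t}s\,|h_{\tilde\Theta^{(s)}}(x_s)-h_{q^{(s)}}(x_s)|\le\tfrac{1}{\lambda_2}\big(\max_{s\le t}E^{(s)}+\beta\big)$, using $|h_{\tilde\Theta^{(s)}}(x_s)-h_{q^{(s)}}(x_s)|\le E^{(s)}+\beta$ and $\sum_{s\le t}s=\tfrac{t(t+1)}{2}$. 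Summing the geometric series over the $T_t$ inner steps, and using that $T_t\ge\tfrac{3\lambda_2\log4}{(2\lambda_1-1)\eta_t}$ forces $c_t^{T_t}\le\tfrac14$, I obtain the recursion
\[
 E^{(t+1)}\;\le\;\tfrac14\,E^{(t)}\;+\;\tfrac{1}{2\lambda_1-1}\Big(\max_{s\le t}E^{(s)}+\beta\Big).
\]

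Finally I solve the recursion: taking $\max_{t\le T+1}$ on both sides, using $E^{(1)}=0$, and noting $\tfrac14+\tfrac1{2\lambda_1-1}<1$ for $\lambda_1\ge\tfrac32$, the scalar inequality rearranges to $\max_t E^{(t)}\le\tfrac{1/(2\lambda_1-1)}{3/4-1/(2\lambda_1-1)}\,\beta\le\tfrac{4}{2\lambda_1-1}\beta$ (the last step since $6\lambda_1-7\ge2\lambda_1-1$ for $\lambda_1\ge\tfrac32$), whence $\rho_{n,T,M}\le\beta+\max_t E^{(t)}\le(1+\tfrac{4}{2\lambda_1-1})\beta$. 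For the unconditional claim $\lim_{M\to\infty}\rho_{n,T,M}=0$ one drops the step-size, $\lambda_1$, and $T_t$ conditions: each inner Langevin step is Lipschitz with constant $1+\eta_t L_t$ for some finite $L_t$, so the same decomposition gives $E^{(t)}\le C\beta$ with $C$ depending on $T,\{\eta_t\},\{T_t\}$ (possibly exponentially in the horizon) but not on $M$, and $\beta\to0$. The main obstacle is the feedback loop between the pathwise error $E^{(t)}$ and the function-value error $|h_{\tilde\Theta^{(t)}}-h_{q^{(t)}}|$, which re-enters the potential of \emph{every} subsequent inner loop: a crude Gr\"onwall estimate over the $\sum_t T_t$ total Langevin steps would be exponential in the horizon, so the crux is to (i) make each inner loop genuinely contract the inherited error by a fixed factor---this is where strong convexity (forced by $\lambda_1\ge\tfrac32$ and $\eta_t\le\lambda_2/(2\lambda_1)$) and a long enough inner loop (the $\log4$) are used---and (ii) bound the freshly injected mismatch by the \emph{running maximum} of past errors rather than by $E^{(t)}$ alone, so the resulting linear recursion has a horizon-independent fixed point.
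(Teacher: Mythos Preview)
Your plan is essentially the paper's proof: the auxiliary i.i.d.\ system you call $\hat\Theta^{(t)}$ is exactly the paper's ``semi particle dual averaging'' $\tilde\Theta'^{(t)}$, the Hoeffding-plus-union-bound step is the same, the synchronous coupling through each inner loop with the running-maximum recursion $E^{(t+1)}\le \frac14 E^{(t)}+\frac{1}{2\lambda_1-1}(\max_{s\le t}E^{(s)}+\beta)$ mirrors the paper's $\rho_{t+1}\le \frac34\bar\rho_t+\frac{\beta}{2\lambda_1-1}$, and your fixed-point argument at the end matches the paper's induction.

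The one place you diverge is in how you obtain the per-step inner-loop contraction. You argue via strong convexity of the \emph{full} potential $\overline g^{(t)}$: Hessian $\succeq \frac{2\lambda_1 t}{\lambda_2(t+2)}I$ minus a loss contribution with operator norm $\le\frac{2t}{\lambda_2(t+2)}$ (from $|\partial_z\ell|\le 2$ and {\bf(A4)}). That yields strong convexity $\mu_t=\frac{2(\lambda_1-1)t}{\lambda_2(t+2)}$, hence $1-c_t\ge\frac{2(\lambda_1-1)\eta_t}{3\lambda_2}$, \emph{not} the $\frac{(2\lambda_1-1)\eta_t}{3\lambda_2}$ you claim; with this correct constant the stated hypothesis $T_t\ge\frac{3\lambda_2\log 4}{(2\lambda_1-1)\eta_t}$ no longer forces $c_t^{T_t}\le\frac14$, and the final bound would carry $2(\lambda_1-1)$ instead of $2\lambda_1-1$. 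The paper avoids this loss by \emph{not} using strong convexity of the whole potential: it peels off the quadratic part exactly, giving the scalar factor $1-\frac{2\lambda_1 t\eta_t}{\lambda_2(t+2)}$ directly under just $\eta_t\le\frac{\lambda_2}{2\lambda_1}$, and then bounds only the nonlinear remainder $\partial_z\ell(\cdot)\partial_\theta h(\cdot)$ by triangle inequality. Swapping your ``same-potential'' contraction for the paper's quadratic-extraction step recovers the stated constants; otherwise your argument is complete and identical in structure.
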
 

\paragraph{Remark.}
Proposition~\ref{prop:finite-particle} together with Corollary~\ref{cor:total-complexity} imply that under appropriate regularization, a prediction on any point with an $\epsilon$-gap from an $\epsilon$-accurate solution of the regularized objective~\eqref{eq:regularization} can be achieved with high probability by running PDA with warm-start (Algorithm~\ref{alg:pda}) in $\mathrm{poly}(\epsilon^{-1})$ steps using $\mathrm{poly}(\epsilon^{-1})$ particles, where we omit dependence on hyperparameters and logarithmic factors.
Note that specific choices of hyper-parameters in Proposition~\ref{prop:finite-particle} are consistent with those in Corollary~\ref{cor:total-complexity}. 
We also remark that under weak regularization (vanishing $\lambda_1$), our current derivation suggests that the required particle size could be exponential in the time horizon, due to the particle correlation in the warm-start scheme.
Finally, we remark that for the empirical risk minimization, the term $\log(2(T+1)^2/\delta)$ could be changed to $\log(2n(T+1)/\delta)$ in the obvious way.

\begin{proof}[Proof of Proposition \ref{prop:finite-particle}]
We analyze an error of finite particle approximation for a fixed history of data $\{x_t\}_{t=1}^T$.
To Algorithm \ref{alg:da} with the corresponding particle dynamics (Algorithm \ref{alg:pda}), we construct an {\it semi particle dual averaging} update, which is an intermediate of these two algorithms.
In particular, the semi particle dual averaging method is defined by replacing $h_{\tilde{\Theta}^{(t)}}$ in Algorithm \ref{alg:pda} with $h_{q^{(t)}}$ for $q^{(t)}$ in Algorithm \ref{alg:da}.
Let $\tilde{\Theta}^{\prime(t)} = \{\tilde{\theta}_r^{\prime(t)}\}_{r=1}^M$ be parameters obtained in outer loop of the semi particle dual averaging.
We first estimate the gap between Algorithm \ref{alg:da} and the semi particle dual averaging. 

Note that there is no interaction among $\tilde{\Theta}^{'(t)}$; in other words these are i.i.d.~particles sampled from $q^{(t)}$, and we can thus apply Hoeffding's inequality (Lemma \ref{lemma:hoeffding_lemma}) to $h_{\tilde{\Theta}^{\prime(t)}}(\tilde{x})$ and $h_{\tilde{\Theta}^{\prime(t)}}(x_s)$ $(s\in \{1,\ldots,T\}, t\in \{1,\ldots,T+1\})$.
Hence, for $\forall \delta > 0$, $\forall s \in \{1,\ldots,T\}$, and $\forall t \in \{1,\ldots,T+1\}$, with the probability at least $1-\delta$
\begin{align}
\left| h_{\tilde{\Theta}^{\prime(t)}}(x_s) - h_{q^{(t)}}(x_s) \right|
&= \left| \frac{1}{M}\sum_{r=1}^Mh_{\tilde{\theta}_r^{\prime(t)}}(x_s) - h_{q^{(t)}}(x_s) \right| 
\leq \sqrt{ \frac{2}{M}\log\left( \frac{2 (T+1)^2}{\delta} \right)}, \label{eq:hoeffding_bound} \\
\left| h_{\tilde{\Theta}^{\prime(t)}}(\tilde{x}) - h_{q^{(t)}}(\tilde{x}) \right| 
&= \left| \frac{1}{M}\sum_{r=1}^Mh_{\tilde{\theta}_r^{\prime(t)}}(\tilde{x}) - h_{q^{(t)}}(\tilde{x}) \right| 
\leq \sqrt{ \frac{2}{M}\log\left( \frac{2 (T+1)^2}{\delta} \right)}.  \label{eq:hoeffding_bound_2}
\end{align}

We next bound the gap between the semi particle dual averaging and Algorithm \ref{alg:pda} sharing a history of Gaussian noises and initial particles.
That is, $\tilde{\theta}_r^{(1)} = \tilde{\theta}_r^{\prime(1)}$.
Let $\Theta^{(k)}=\{\theta_r^{(k)}\}_{r=1}$ and $\Theta^{\prime(k)}=\{\theta_r^{\prime(k)}\}_{r=1}$ denote inner iterations of these methods.

($i$) Here we show the first statement of the proposition.
We set $\rho_1 = 0$ and $\overline{\rho}_1 = 0$.
We define $\rho_{t}$ and $\overline{\rho}_t$ recursively as follows.
\begin{align}\label{eq:gap_on_semipda-and-pda}
\rho_{t+1} 
&\defeq \left( 1 + \frac{2(1 + \lambda_1) t \eta_t}{\lambda_2(t+2)} \right)^{T_t} \overline{\rho}_t \notag \\
&+ \frac{t\eta_t}{\lambda_2 (t+2)} \left( \overline{\rho}_t + \sqrt{ \frac{2}{M}\log\left( \frac{2(T+1)^2}{\delta} \right)} \right)
\sum_{s=0}^{T_t-1}\left( 1 + \frac{2(1 + \lambda_1) t \eta_t}{\lambda_2(t+2)} \right)^s,
\end{align}
and $\overline{\rho}_{t+1} = \max_{s \in \{1,\ldots,t+1\}} \rho_{s}$.
We show that for any event where (\ref{eq:hoeffding_bound}) and (\ref{eq:hoeffding_bound_2}) hold, $\|\tilde{\theta}_r^{(t)} - \tilde{\theta}_r^{\prime(t)}\|_2 \leq \rho_t$ $(\forall t \in \{1,\ldots,T+1\}$, $\forall r \in \{1,\ldots,M\})$ by induction.
Suppose $\| \tilde{\theta}_r^{(s)} - \tilde{\theta}_r^{\prime(s)}\|_2 \leq \rho_{s}$ $(\forall s \in \{1,\ldots,t\}$, $\forall r \in \{1,\ldots,M\})$ holds.
Then, for any $x$ and $s \in \{1,\ldots,t\}$
\begin{align}
\left| h_{\tilde{\Theta}^{(s)}}(x) - h_{\tilde{\Theta}^{\prime (s)}}(x) \right|
&\leq \frac{1}{M} \sum_{r=1}^M \left| h(\tilde{\theta}_r^{(s)},x) - h(\tilde{\theta}_r^{\prime (s)}, x) \right| \notag \\
&\leq \frac{1}{M} \sum_{r=1}^M \left\| \tilde{\theta}_r^{(s)} - \tilde{\theta}_r^{\prime (s)} \right\|_2 \leq \rho_{s}. \label{eq:alg_gap}
\end{align}

Consider the inner loop at $t$-the outer step.
Then, for an event where (\ref{eq:hoeffding_bound}) holds,
\begin{align*}
&\quad~\| \theta_r^{(k+1)} - \theta_r^{\prime(k+1)}\|_2 \\
&\leq 
\biggl\| \theta_r^{(k)} 
- \frac{2\eta_t}{\lambda_2 (t+2)(t+1)}\sum_{s=1}^t s \left( \partial_z\ell( h_{\tilde{\Theta}^{(s)}}(x_{s}),y_{s}) \partial_\theta h(\theta_r^{(k)},x_{s}) + 2\lambda_1 \theta_r^{(k)} \right)\\
&- \theta_r^{\prime(k)} 
+ \frac{2\eta_t}{\lambda_2 (t+2)(t+1)}\sum_{s=1}^t s \left( \partial_z\ell( h_{q^{(s)}}(x_{s}),y_{s}) \partial_\theta h(\theta_r^{\prime(k)},x_{s}) + 2\lambda_1 \theta_r^{\prime(k)} \right) \biggr\|_2 \\
&\leq 
\left( 1 + \frac{2\lambda_1 t \eta_t}{\lambda_2(t+2)} \right) \| \theta_r^{(k)} - \theta_r^{\prime(k)}\|_2 \\
&+ \frac{2\eta_t}{\lambda_2 (t+2)(t+1)}\sum_{s=1}^t s \| \partial_z\ell( h_{\tilde{\Theta}^{(s)}}(x_{s}),y_{s}) \partial_\theta h(\theta_r^{(k)},x_{s}) - \partial_z\ell( h_{q^{(s)}}(x_{s}),y_{s}) \partial_\theta h(\theta_r^{\prime(k)},x_{s}) \|_2 \\
&\leq 
\left( 1 + \frac{2\lambda_1 t \eta_t}{\lambda_2(t+2)} \right) \| \theta_r^{(k)} - \theta_r^{\prime(k)}\|_2 \\
&+ \frac{2\eta_t}{\lambda_2 (t+2)(t+1)}\sum_{s=1}^t s \left\| (\partial_z\ell( h_{\tilde{\Theta}^{(s)}}(x_{s}),y_{s}) - \partial_z\ell( h_{q^{(s)}}(x_{s}),y_{s}) ) \partial_\theta h(\theta_r^{(k)},x_{s}) \right\|_2 \\
&+ \frac{2\eta_t}{\lambda_2 (t+2)(t+1)}\sum_{s=1}^t s \left\|\partial_z\ell( h_{q^{(s)}}(x_{s}),y_{s}) ( \partial_\theta h(\theta_r^{\prime(k)},x_{s}) - \partial_\theta h(\theta_r^{(k)},x_{s}) ) \right\|_2 \\
&\leq 
\left( 1 + \frac{2(1 + \lambda_1) t \eta_t}{\lambda_2(t+2)} \right) \| \theta_r^{(k)} - \theta_r^{\prime(k)}\|_2 
+ \frac{2\eta_t}{\lambda_2 (t+2)(t+1)}\sum_{s=1}^t s \left| h_{\tilde{\Theta}^{(s)}}(x_{s}) - h_{q^{(s)}}(x_{s})
\right| \\
&\leq 
\left( 1 + \frac{2(1 + \lambda_1) t \eta_t}{\lambda_2(t+2)} \right) \| \theta_r^{(k)} - \theta_r^{\prime(k)}\|_2 
+ \frac{2\eta_t}{\lambda_2 (t+2)(t+1)}\sum_{s=1}^t s \left(\rho_s + \sqrt{ \frac{2}{M}\log\left( \frac{2(T+1)^2}{\delta} \right)}\right) \\
&\leq 
\left( 1 + \frac{2(1 + \lambda_1) t \eta_t}{\lambda_2(t+2)} \right) \| \theta_r^{(k)} - \theta_r^{\prime(k)}\|_2 
+ \frac{t\eta_t}{\lambda_2 (t+2)} \left( \overline{\rho}_t + \sqrt{ \frac{2}{M}\log\left( \frac{2(T+1)^2}{\delta} \right)} \right).
\end{align*}
Expanding this inequality, 
\begin{align*}
&\quad \| \tilde{\theta}_r^{(t+1)} - \tilde{\theta}_r^{\prime(t+1)}\|_2 \\
&\leq 
\left( 1 + \frac{2(1 + \lambda_1) t \eta_t}{\lambda_2(t+2)} \right)^{T_t} \overline{\rho}_t
+ \frac{t\eta_t}{\lambda_2 (t+2)} \left( \overline{\rho}_t + \sqrt{ \frac{2}{M}\log\left( \frac{2 (T+1)^2}{\delta} \right)} \right)
\sum_{s=0}^{T_t-1}\left( 1 + \frac{2(1 + \lambda_1) t \eta_t}{\lambda_2(t+2)} \right)^s \\
&= \rho_{t+1}.
\end{align*}
Hence, $\| \tilde{\theta}_r^{(t)} - \tilde{\theta}_r^{\prime(t)}\|_2  \leq \overline{\rho}_{T+1}$ for $\forall t \in \{1,\ldots,T+1\}$.

Noting that $\overline{\rho}_1 = 0$ and 
\begin{align*} 
\rho_{t+1}
&= \left( \left( 1 + \frac{2(1 + \lambda_1) t \eta_t}{\lambda_2(t+2)} \right)^{T_t}  + \frac{t\eta_t}{\lambda_2 (t+2)} \sum_{s=0}^{T_t-1}\left( 1 + \frac{2(1 + \lambda_1) t \eta_t}{\lambda_2(t+2)} \right)^s \right) \overline{\rho}_t \\
&+ \frac{t\eta_t}{\lambda_2 (t+2)} \sqrt{ \frac{2}{M}\log\left( \frac{2 (T+1)^2}{\delta} \right)} 
\sum_{s=0}^{T_t-1}\left( 1 + \frac{2(1 +\lambda_1) t \eta_t}{\lambda_2(t+2)} \right)^s,
\end{align*}
we see $\overline{\rho}_{T+1} \rightarrow 0$ as $M\rightarrow +\infty$.
Then, the proof is finished because for $\forall t \in \{1,\ldots,T+1\}$ and $\forall s \in \{1,\ldots,T\}$ with high probability $1-\delta$,
\begin{align*}
\left| h_{\tilde{\Theta}^{(t)}}(x_s) - h_{q^{(t)}}(x_s) \right| 
&\leq 
\left| h_{\tilde{\Theta}^{(t)}}(x_s) - h_{\tilde{\Theta}^{\prime(t)}}(x_s) \right| 
+ \left| h_{\tilde{\Theta}^{\prime(t)}}(x_s) - h_{q^{(t)}}(x_s) \right| \\
&\leq \overline{\rho}_{T+1} + \sqrt{\frac{2}{M}\log \left( \frac{2(T+1)^2}{\delta} \right)}, \\
\left| h_{\tilde{\Theta}^{(t)}}(\tilde{x}) - h_{q^{(t)}}(\tilde{x}) \right| 
&\leq 
\left| h_{\tilde{\Theta}^{(t)}}(\tilde{x}) - h_{\tilde{\Theta}^{\prime(t)}}(\tilde{x}) \right| 
+ \left| h_{\tilde{\Theta}^{\prime(t)}}(\tilde{x}) - h_{q^{(t)}}(\tilde{x}) \right| \\
&\leq \overline{\rho}_{T+1} + \sqrt{\frac{2}{M}\log \left( \frac{2(T+1)^2}{\delta} \right)}.
\end{align*}

($ii$) We next show the second statement of the proposition.
We change the definition (\ref{eq:gap_on_semipda-and-pda}) of $\rho_{t+1}$ as follows:
\begin{equation*}
\rho_{t+1} \defeq 
\frac{3}{4} \overline{\rho}_t + \frac{1}{2\lambda_1-1} \sqrt{ \frac{2}{M}\log\left( \frac{2 (T+1)^2}{\delta} \right)}.
\end{equation*}

We prove that for any event where (\ref{eq:hoeffding_bound}) and (\ref{eq:hoeffding_bound_2}) hold, $\|\tilde{\theta}_r^{(t)} - \tilde{\theta}_r^{\prime(t)}\|_2 \leq \rho_t$ $(\forall t \in \{1,\ldots,T+1\}$, $\forall r \in \{1,\ldots,M\})$ by induction.
Suppose $\| \tilde{\theta}_r^{(s)} - \tilde{\theta}_r^{\prime(s)}\|_2 \leq \rho_s$ $(\forall s \in \{1,\ldots,t\}$, $\forall r \in \{1,\ldots,M\})$ holds.
Consider the inner loop at $t$-step.
Note that $\eta_t \leq \frac{\lambda_2}{2\lambda_1}$ implies $1-\frac{2\lambda_1 t \eta_t}{\lambda_2 (t+2)} > 0$.
Therefore, by the similar argument as above, we get
\begin{align*}
&\quad\| \theta_r^{(k+1)} - \theta_r^{\prime(k+1)}\|_2  \\
&\leq 
\biggl\| \theta_r^{(k)} 
- \frac{2\eta_t}{\lambda_2 (t+2)(t+1)}\sum_{s=1}^t s \left( \partial_z\ell( h_{\tilde{\Theta}^{(s)}}(x_{s}),y_{s}) \partial_\theta h(\theta_r^{(k)},x_{s}) + 2\lambda_1 \theta_r^{(k)} \right)\\
&- \theta_r^{(k)} 
+ \frac{2\eta_t}{\lambda_2 (t+2)(t+1)}\sum_{s=1}^t s \left( \partial_z\ell( h_{q^{(s)}}(x_{s}),y_{s}) \partial_\theta h(\theta_r^{\prime(k)},x_{s}) + 2\lambda_1 \theta_r^{\prime(k)} \right) \biggr\|_2 \\
&\leq 
\left( 1 - \frac{2\lambda_1 t \eta_t}{\lambda_2(t+2)} \right) \| \theta_r^{(k)} - \theta_r^{\prime(k)}\|_2 \\
&+ \frac{2\eta_t}{\lambda_2 (t+2)(t+1)}\sum_{s=1}^t s \| \partial_z\ell( h_{\tilde{\Theta}^{(s)}}(x_{s}),y_{s}) \partial_\theta h(\theta_r^{(k)},x_{s}) - \partial_z\ell( h_{q^{(s)}}(x_{s}),y_{s}) \partial_\theta h(\theta_r^{\prime(k)},x_{s}) \|_2 \\
&\leq 
\left( 1 + \frac{(1 - 2\lambda_1) t \eta_t}{\lambda_2(t+2)} \right) \| \theta_r^{(k)} - \theta_r^{\prime(k)}\|_2 
+ \frac{t\eta_t}{\lambda_2 (t+2)} \left( \overline{\rho}_t + \sqrt{ \frac{2}{M}\log\left( \frac{2 (T+1)^2}{\delta} \right)} \right).
\end{align*}

Expanding this inequality, 
\begin{align*}
&\quad\| \tilde{\theta}_r^{(t+1)} - \tilde{\theta}_r^{\prime(t+1)}\|_2 \\ 
&\leq 
\left( 1 + \frac{(1 - 2\lambda_1) t \eta_t}{\lambda_2(t+2)} \right)^{T_t} \overline{\rho}_t
+ \frac{t\eta_t}{\lambda_2 (t+2)} \left( \overline{\rho}_t + \sqrt{ \frac{2}{M}\log\left( \frac{2 (T+1)^2}{\delta} \right)} \right)
\sum_{s=0}^{T_t-1}\left( 1 + \frac{(1 - 2\lambda_1) t \eta_t}{\lambda_2(t+2)} \right)^s \\
&\leq 
\left(\left( 1 + \frac{(1 - 2\lambda_1) t \eta_t}{\lambda_2(t+2)} \right)^{T_t} + \frac{1}{2\lambda_1-1}\right) \overline{\rho}_t + \frac{1}{2\lambda_1-1} \sqrt{ \frac{2}{M}\log\left( \frac{2 (T+1)^2}{\delta} \right)}  \\
&\leq 
\left(\left( 1 + \frac{(1 - 2\lambda_1) t \eta_t}{\lambda_2(t+2)} \right)^{T_t} + \frac{1}{2}\right) \overline{\rho}_t + \frac{1}{2\lambda_1-1} \sqrt{ \frac{2}{M}\log\left( \frac{2 (T+1)^2}{\delta} \right)}, 
\end{align*}
where we used $0 < 1 + \frac{(1 - 2\lambda_1) t \eta_t}{\lambda_2(t+2)} < 1$ and $\lambda_1 \geq \frac{3}{2}$.

Noting that $(1-x)^{1/x} \leq \exp(-1)$ for $\forall x \in (0,1]$, we see that
\begin{align*}
\left( 1 - \frac{( 2\lambda_1 - 1) t \eta_t}{\lambda_2(t+2)} \right)^{T_t}    
&\leq \left( 1 - \frac{( 2\lambda_1 - 1) t \eta_t}{\lambda_2(t+2)} \right)^{\frac{3\lambda_2}{(2\lambda_1 - 1) \eta_t} \log\left(4\right)}  \\
&= \left( 1 - \frac{( 2\lambda_1 - 1) t \eta_t}{\lambda_2(t+2)} \right)^{\frac{\lambda_2(t+2)}{(2\lambda_1 - 1) t \eta_t} \frac{3t}{t+2} \log\left(4\right)}\\
&\leq \exp\left( - \frac{3t}{t+2} \log\left(4\right) \right) \\
&\leq \exp\left( - \log\left(4\right) \right) \\
&= \frac{1}{4},
\end{align*}
where we used $T_t \geq \frac{3\lambda_2 \log\left(4 \right)}{(2\lambda_1 - 1) \eta_t}$.
Hence, we know that for $t$,
\begin{equation}
\| \tilde{\theta}_r^{(t+1)} - \tilde{\theta}_r^{\prime(t+1)}\|_2 
\leq 
\frac{3}{4} \overline{\rho}_t + \frac{1}{2\lambda_1-1} \sqrt{ \frac{2}{M}\log\left( \frac{2 (T+1)^2}{\delta} \right)}. \label{eq:rho_ineq}
\end{equation}

This means that $\|\tilde{\theta}_r^{(t+1)} - \tilde{\theta}_r^{\prime(t+1)}\|_2 \leq \rho_{t+1}$ and finishes the induction.

Next, we show 
\begin{equation} 
\overline{\rho}_t \leq \frac{4}{2\lambda_1-1} \sqrt{ \frac{2}{M}\log\left( \frac{2 (T+1)^2}{\delta} \right)}. \label{eq:rho-ineq}
\end{equation}

This inequality obviously holds for $t=1$ because $\overline{\rho}_1 = 0$. 
We suppose it is true for $t \leq T$.
Then, 
\begin{align*}
\rho_{t+1} 
&= \frac{3}{4} \overline{\rho}_t + \frac{1}{2\lambda_1-1} \sqrt{ \frac{2}{M}\log\left( \frac{2 (T+1)^2}{\delta} \right)} \\
&\leq \frac{4}{2\lambda_1-1} \sqrt{ \frac{2}{M}\log\left( \frac{2 (T+1)^2}{\delta} \right)}.
\end{align*}

Hence, the inequality (\ref{eq:rho-ineq}) holds for $\forall t \in \{1,\ldots,T+1\}$, yielding
\[ 
\|\tilde{\theta}_r^{(t+1)} - \tilde{\theta}_r^{\prime(t+1)}\|_2 \leq \frac{4}{2\lambda_1-1} \sqrt{ \frac{2}{M}\log\left( \frac{2 (T+1)^2}{\delta} \right)}.
\]

In summary, it follows that for $\forall t \in \{1,\ldots,T+1\}$ and $\forall s \in \{1,\ldots,T\}$ with high probability $1-\delta$,
\begin{align*}
\left| h_{\tilde{\Theta}^{(t)}}(x_s) - h_{q^{(t)}}(x_s) \right| 
&\leq 
\left| h_{\tilde{\Theta}^{(t)}}(x_s) - h_{\tilde{\Theta}^{\prime(t)}}(x_s) \right| 
+ \left| h_{\tilde{\Theta}^{\prime(t)}}(x_s) - h_{q^{(t)}}(x_s) \right| \\
&\leq 
\left(1 + \frac{4}{2\lambda_1 - 1} \right)
\sqrt{ \frac{2}{M}\log\left( \frac{2 (T+1)^2}{\delta} \right)}, \\
\left| h_{\tilde{\Theta}^{(t)}}(\tilde{x}) - h_{q^{(t)}}(\tilde{x}) \right| 
&\leq 
\left| h_{\tilde{\Theta}^{(t)}}(\tilde{x}) - h_{\tilde{\Theta}^{\prime(t)}}(\tilde{x}) \right| 
+ \left| h_{\tilde{\Theta}^{\prime(t)}}(x_s) - h_{q^{(t)}}(x_s) \right| \\
&\leq 
\left(1 + \frac{4}{2\lambda_1 - 1} \right)
\sqrt{ \frac{2}{M}\log\left( \frac{2 (T+1)^2}{\delta} \right)}, 
\end{align*}
where we used (\ref{eq:alg_gap}).
This completes the proof.
\end{proof}
\bigskip
\section{Generalization Bounds for Empirical Risk Minimization}
\label{app:generalization_bounds}
In this section, we give generalization bounds for the problem (\ref{eq:rerm}) in the context of \textit{empirical risk minimization}, by using techniques developed by \citet{chen2020generalized}.
We consider the smoothed hinge loss and squared loss for binary classification and regression problems, respectively.

\subsection{Auxiliary Results}
For a set $\cF$ of functions from a space $\cZ$ to $\bR$ and a set $S=\{z_i\}_{i=1}^n \subset \cZ$, 
the empirical Rademacher complexity $\rademacher_S(\cF)$ is defined as follows:
\[ \rademacher_S(\cF) = \bE_\sigma\left[ \sup_{f\in\cF}\frac{1}{n}\sum_{i=1}^n\sigma_i f(z_i)\right], \]
where $\sigma =(\sigma_i)_{i=1}^n$ are i.i.d random variables taking $-1$ or $1$ with equal probability.

We introduce the uniform bound using the empirical Rademacher complexity (see \citet{mohri2012foundations}).
\begin{lemma}[Uniform bound] \label{lem:uniform_bound}
  Let $\cF$ be a set of functions from $\cZ$ to $[-C,C]$ $(C \in \bR)$ and 
  $\cD$ be a distribution over $\cZ$.
  Let $S=\{z_i\}_{i=1}^n \subset \cZ$ be a set of size $n$ drawn from $\cD$.
  Then, for any $\delta \in (0,1)$, with probability at least $1-\delta$ over the choice of $S$, we have
  \begin{equation*}
    \sup_{f \in \cF} \left\{ \bE_{Z\sim \cD}[f(Z)] - \frac{1}{n}\sum_{i=1}^nf(z_i) \right\} 
    \leq 2 \rademacher_S(\cF) + 3C\sqrt{\frac{1}{2n}\log\frac{2}{\delta}}.
  \end{equation*}
\end{lemma}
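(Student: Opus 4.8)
The plan is to follow the classical symmetrization argument for Rademacher complexity bounds (as in \citet{mohri2012foundations}), which combines McDiarmid's bounded difference inequality with the introduction of a ghost sample. Throughout, write $\Phi(S) \defeq \sup_{f\in\cF}\left\{ \bE_{Z\sim\cD}[f(Z)] - \frac{1}{n}\sum_{i=1}^n f(z_i)\right\}$, so that the left-hand side of the lemma is exactly $\Phi(S)$.

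First I would establish concentration of $\Phi(S)$ around its mean. Since every $f\in\cF$ takes values in $[-C,C]$, replacing a single sample point $z_i$ by $z_i'$ alters the empirical average $\frac1n\sum_j f(z_j)$ by at most $2C/n$, and the supremum over $f$ inherits the same bounded difference; hence $\Phi$ satisfies McDiarmid's condition with constants $c_i = 2C/n$. McDiarmid's inequality then yields, with probability at least $1-\delta/2$ over $S$, $\Phi(S) \le \bE_S[\Phi(S)] + O(C)\sqrt{\tfrac{1}{2n}\log\tfrac2\delta}$. Second, I would bound $\bE_S[\Phi(S)]$ by the expected Rademacher complexity via symmetrization: introduce an i.i.d.\ ghost sample $S'=\{z_i'\}$ with the same law, rewrite $\bE_{Z\sim\cD}[f(Z)]=\bE_{S'}[\frac1n\sum_i f(z_i')]$, move the supremum inside the expectation by Jensen, and insert Rademacher signs $\sigma_i$ using the exchangeability of each pair $(z_i,z_i')$. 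This gives $\bE_S[\Phi(S)]\le 2\,\bE_S[\rademacher_S(\cF)]$. Third, I would pass from the expected Rademacher complexity to the empirical one: the map $S\mapsto\rademacher_S(\cF)$ again has bounded differences $2C/n$ (using $|\sup_f A(f)-\sup_f B(f)|\le\sup_f|A(f)-B(f)|$ pointwise in $\sigma$), so a second application of McDiarmid gives $\bE_S[\rademacher_S(\cF)]\le \rademacher_S(\cF)+O(C)\sqrt{\tfrac{1}{2n}\log\tfrac2\delta}$ with probability at least $1-\delta/2$.

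Finally, I would take a union bound over the two high-probability events and collect the deviation terms; the accumulated $O(C)\sqrt{\tfrac{1}{2n}\log\tfrac2\delta}$ contributions consolidate into a bound of the stated form $2\rademacher_S(\cF)+3C\sqrt{\tfrac{1}{2n}\log\tfrac2\delta}$. I expect the only genuinely nonroutine step to be the symmetrization: correctly introducing the ghost sample, justifying the exchange of supremum and expectation, inserting the Rademacher variables, and tracking the factor of $2$ it produces in front of $\rademacher_S(\cF)$. The two McDiarmid applications and the final bookkeeping of the bounded-difference constants are standard.
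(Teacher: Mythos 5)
Your approach is the standard symmetrization-plus-McDiarmid argument from \citet{mohri2012foundations}, which is precisely what the paper relies on: the lemma is stated with a citation and no proof is given in the paper, so at the level of strategy there is nothing to diverge from. The three steps you outline --- McDiarmid on $\Phi(S)$, ghost-sample symmetrization producing the factor $2$ in front of the Rademacher complexity, and a second McDiarmid application to pass from $\bE_S[\rademacher_S(\cF)]$ to $\rademacher_S(\cF)$ --- are the right ones and each is routine.

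The one place where your proof does not close as written is the final bookkeeping, which you defer behind $O(C)$. For $\cF$ mapping into $[-C,C]$ the oscillation of each $f$ is $2C$, so both bounded-difference constants are $c_i = 2C/n$ and $\sum_i c_i^2 = 4C^2/n$; McDiarmid at confidence level $\delta/2$ then gives a deviation of $2C\sqrt{\frac{1}{2n}\log\frac{2}{\delta}}$ for $\Phi(S)$, and the same deviation for $\rademacher_S(\cF)$, which enters the final bound with the prefactor $2$. The total is $(2C+4C)\sqrt{\frac{1}{2n}\log\frac{2}{\delta}} = 6C\sqrt{\frac{1}{2n}\log\frac{2}{\delta}}$, not $3C\sqrt{\frac{1}{2n}\log\frac{2}{\delta}}$. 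The constant $3$ in the textbook version is for function classes whose range has length $1$ (values in $[0,1]$); rescaling to range $2C$ multiplies the deviation terms by $2C$, not by $C$. So either prove the lemma with $6C$ (which is all that is needed downstream, where it only inflates the numerical constants in the generalization theorems that invoke it) or restrict $\cF$ to an interval of length $C$; the $3C$ as stated cannot be reached by the argument you describe. Everything else in your proposal is correct.
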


The contraction lemma (see \citet{shalev2014understanding}) is useful in estimating the Rademacher complexity.
\begin{lemma}[Contraction lemma] \label{lem:contraction}
  Let $\phi_i : \bR \rightarrow \bR$ $(i\in \{1,\ldots,n\})$ be $\rho$-Lipschitz functions 
  and $\cF$ be a set of functions from $\cZ$ to $\bR$. 
  Then it follows that for any $\{z_i\}_{i=1}^n \subset \cZ$,
  \begin{equation*}
    \bE_\sigma\left[ \sup_{f\in\cF} \frac{1}{n}\sum_{i=1}^n\sigma_i \phi_i\circ f(z_i)\right]
    \leq \rho \bE_\sigma\left[ \sup_{f \in \cF} \frac{1}{n}\sum_{i=1}^n\sigma_i \circ f(z_i)\right].
  \end{equation*}
\end{lemma}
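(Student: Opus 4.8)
The statement is (a version of) the Ledoux--Talagrand contraction inequality, and the plan is to prove it by the classical ``coordinate peeling'' induction: remove one $\phi_i$ at a time, replacing it by the bare evaluation $f(z_i)$ scaled by $\rho$. Since both sides carry the common factor $1/n$, it suffices to prove the inequality for the unnormalized sums $\sum_{i=1}^n \sigma_i\phi_i(f(z_i))$ and $\rho\sum_{i=1}^n\sigma_i f(z_i)$ and then reinstate the $1/n$. The engine of the argument is the following one-index comparison: for \emph{any} functional $u\colon\cF\to\bR$, any fixed index $j$, and any $\rho$-Lipschitz $\phi_j$,
\[
\bE_{\sigma_j}\!\Big[\sup_{f\in\cF}\big(u(f)+\sigma_j\,\phi_j(f(z_j))\big)\Big]
\;\leq\;
\bE_{\sigma_j}\!\Big[\sup_{f\in\cF}\big(u(f)+\rho\,\sigma_j\, f(z_j)\big)\Big].
\]
Granting this, I would first apply it with $u(f)=\sum_{i=1}^{n-1}\sigma_i\phi_i(f(z_i))$ and take expectation over the remaining signs, then iterate for $j=n-1,n-2,\dots,1$, at each step folding the already-processed (hence linear) coordinates into a fresh $u$; after $n$ applications every $\phi_i$ has been converted, and dividing by $n$ gives the lemma.

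To prove the one-index comparison I would expand $\bE_{\sigma_j}$ over $\sigma_j\in\{\pm1\}$, which reduces the claim to the deterministic inequality
\[
\sup_f\big(u(f)+\phi_j(f(z_j))\big)+\sup_f\big(u(f)-\phi_j(f(z_j))\big)
\;\leq\;
\sup_f\big(u(f)+\rho f(z_j)\big)+\sup_f\big(u(f)-\rho f(z_j)\big).
\]
For $\varepsilon>0$ pick $f_1,f_2\in\cF$ attaining the two suprema on the left to within $\varepsilon$. Both sides are symmetric under $\phi_j\mapsto-\phi_j$, so I may assume $f_1(z_j)\geq f_2(z_j)$, and then the Lipschitz bound gives $\phi_j(f_1(z_j))-\phi_j(f_2(z_j))\leq\rho\big(f_1(z_j)-f_2(z_j)\big)$. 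Hence the left-hand side is at most
\[
u(f_1)+\phi_j(f_1(z_j))+u(f_2)-\phi_j(f_2(z_j))+2\varepsilon
\;\leq\;
\big(u(f_1)+\rho f_1(z_j)\big)+\big(u(f_2)-\rho f_2(z_j)\big)+2\varepsilon,
\]
which is at most the right-hand side plus $2\varepsilon$; sending $\varepsilon\downarrow0$ finishes the step.

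The induction bookkeeping and the final normalization by $n$ are routine, so I would not belabor them. The only point requiring genuine care --- and the main obstacle to state cleanly --- is the ``without loss of generality $f_1(z_j)\geq f_2(z_j)$'' reduction: one must observe that both the left- and right-hand sides of the deterministic inequality are invariant under $\phi_j\mapsto-\phi_j$ (the hypothesis that $\phi_j$ is $\rho$-Lipschitz is preserved), so neither ordering case is lost. A secondary technicality is that the suprema need not be attained, which is exactly why $\varepsilon$-near-maximizers are used; in the present application $\cF$ maps into a bounded interval so all quantities are finite and this causes no difficulty.
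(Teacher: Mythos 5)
Your proof is correct. The paper does not supply its own argument for this lemma---it is quoted from \citet{shalev2014understanding}---and your proof (coordinate-by-coordinate peeling, reduction of the $\sigma_j$-expectation to a two-term deterministic inequality, $\varepsilon$-near-maximizers combined with the Lipschitz bound) is precisely the standard proof from that reference; the symmetry observation justifying the assumption $f_1(z_j)\geq f_2(z_j)$ is valid, since replacing $\phi_j$ by $-\phi_j$ merely swaps the two suprema on the left-hand side while leaving the right-hand side untouched.
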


Let $p_0(\theta) \mathrm{d}\theta$ be a distribution in proportion to $\exp\left( -\frac{\lambda_1}{\lambda_2} \|\theta\|_2^2\right)\mathrm{d}\theta$.
We define a family of mean field neural networks as follows: for $R>0$,
\[ \cF_{\KL}(R) = \left\{ h_q : \cX \rightarrow \bR \mid q \in \cP_2,~ \KL(q \| p_0 ) \leq R  \right\}. \]
The Rademacher complexity of this function class is obtained by \citet{chen2020generalized}.
\begin{lemma}[\citet{chen2020generalized}] \label{lem:kl-rademacher}
Suppose $|h_\theta(x) | \leq 1$ holds for $\forall \theta \in \Omega$ and $\forall x \in \cX$. 
We have for any constant $R \leq \frac{1}{2}$ and set $S \subset \cX$ of size $n$,
\[ \rademacher_S( \cF_{\KL}(R)) \leq 2\sqrt{\frac{R}{n}}. \]
\end{lemma}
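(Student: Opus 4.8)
The plan is to dualize the KL constraint via the Donsker--Varadhan variational representation of relative entropy (equivalently, the exponential change-of-measure inequality), which is the standard device for controlling a supremum over a KL-ball. First I would note that since $h_q(x_i)=\bE_{\theta\sim q}[h_\theta(x_i)]$, for any fixed realization of the sign vector $\sigma$ the quantity inside the Rademacher expectation is linear in $q$: writing $g_\sigma(\theta)\defeq\frac1n\sum_{i=1}^n\sigma_i h_\theta(x_i)$ we have $\frac1n\sum_{i=1}^n\sigma_i h_q(x_i)=\bE_{\theta\sim q}[g_\sigma(\theta)]$. The change-of-measure inequality then gives, for every $\beta>0$ and every $q$ with $\KL(q\|p_0)\le R$,
\[
\bE_{\theta\sim q}[g_\sigma(\theta)] \;\le\; \frac{1}{\beta}\KL(q\|p_0)+\frac{1}{\beta}\log\bE_{\theta\sim p_0}\big[e^{\beta g_\sigma(\theta)}\big] \;\le\; \frac{R}{\beta}+\frac{1}{\beta}\log\bE_{\theta\sim p_0}\big[e^{\beta g_\sigma(\theta)}\big].
\]
The decisive point is that the last term no longer depends on $q$, so I may take the supremum over $\cF_{\KL}(R)$ at no cost, then take $\bE_\sigma$ and move it inside the logarithm by Jensen's inequality, obtaining $\rademacher_S(\cF_{\KL}(R))\le \frac{R}{\beta}+\frac1\beta\log\bE_{\theta\sim p_0}\bE_\sigma\big[e^{\beta g_\sigma(\theta)}\big]$.

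Next I would evaluate the inner moment generating function. By independence of the signs, $\bE_\sigma[e^{\beta g_\sigma(\theta)}]=\prod_{i=1}^n\cosh\!\big(\tfrac{\beta}{n}h_\theta(x_i)\big)$, and using $\cosh t\le e^{t^2/2}$ together with $|h_\theta(x_i)|\le 1$ this is at most $\exp\!\big(\tfrac{\beta^2}{2n^2}\sum_{i=1}^n h_\theta(x_i)^2\big)\le \exp\!\big(\tfrac{\beta^2}{2n}\big)$, uniformly in $\theta$. Substituting yields $\rademacher_S(\cF_{\KL}(R))\le \frac{R}{\beta}+\frac{\beta}{2n}$ for all $\beta>0$; optimizing with $\beta=\sqrt{2nR}$ gives the bound $\sqrt{2R/n}\le 2\sqrt{R/n}$. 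The hypothesis $R\le\tfrac12$ plays no role in this estimate; it is carried over from \citet{chen2020generalized} for consistency with the downstream generalization analysis, where the lemma is only ever invoked for small radii.

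The only genuinely non-routine step is the passage through the change-of-measure inequality: one must recognize that the supremum over the infinite-dimensional, non-compact set $\{q\in\cP_2:\KL(q\|p_0)\le R\}$ should be handled by the Gibbs variational principle, which collapses it into a single $q$-free log-moment-generating-function term plus the linear penalty $R/\beta$. Everything after that---evaluating the $\cosh$ product over the i.i.d.\ signs, Jensen's inequality, and tuning the free parameter $\beta$---is mechanical.
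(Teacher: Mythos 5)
Your proof is correct. The paper does not actually prove this lemma—it is imported verbatim from \citet{chen2020generalized}—but your argument (linearity of $q\mapsto h_q$, the Donsker--Varadhan/Gibbs dualization of the KL ball, Jensen plus the $\cosh t\le e^{t^2/2}$ bound on the sign MGF, and tuning $\beta=\sqrt{2nR}$) is the standard route and in fact yields the sharper constant $\sqrt{2R/n}\le 2\sqrt{R/n}$; your observation that the hypothesis $R\le\tfrac12$ is not used in this estimate is also accurate.
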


\subsection{Generalization Bound on the Binary Classification Problems}
We here give a generalization bound for the binary classification problems.
Hence, we suppose $\cY=\{-1,1\}$ and consider the problem (\ref{eq:rerm}) with the smoothed hinge loss defined below. 
\begin{equation*}
\ell(z,y) = \left\{
\begin{array}{ll}
0 &~\textrm{if}~zy \geq 1/2, \\
(1-2zy)^2 &~\textrm{if}~0\leq zy < 1/2, \\
1-4zy &~\textrm{else}.
\end{array}
\right.
\end{equation*}
We also define the $0$-$1$ loss as $\ell_{01}(z,y) = \mathbbm{1}[zy < 0]$.
\begin{theorem}\label{thm:gen-bound-classification}
Let $\cD$ be a distribution over $\cX \times \cY$. 
Suppose there exists a true distribution $q^{\circ} \in \cP_2$ satisfying $h_{q^{\circ}}(x)y \geq 1/2$ for $\forall (x,y) \in \mathrm{supp}(\cD)$ and $\KL(q^{\circ} \| p_0) \leq 1/2$.
Let $S=\{(x_i,y_i)\}_{i=1}^n$ be training examples independently sampled from $\cD$.
Suppose $|h_\theta(x) | \leq 1$ holds for $\forall (\theta,x) \in \Omega\times \cX$.
Then, for the minimizer $q_*\in \cP_2$ of the problem (\ref{eq:rerm}), it follows that with probability at least $1-\delta$ over the choice of $S$, 
\[ \bE_{(X,Y)\sim \cD}[ \ell_{01}(h_{q_*}(X),Y)] 
\leq \lambda_2 \KL(q^{\circ} \| p_0) + 16 \sqrt{\frac{\KL(q^{\circ} \| p_0)}{n}} + 15\sqrt{\frac{1}{2n} \log\frac{2}{\delta}}. \]
\end{theorem}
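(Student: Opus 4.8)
The plan is to bound the expected $0$--$1$ loss by the smoothed-hinge surrogate, then use the optimality of $q_*$ to control simultaneously the empirical surrogate risk of $q_*$ and the complexity $\KL(q_*\|p_0)$, and finally apply a uniform convergence bound over a KL-ball of mean-field networks. First I would check, by inspecting the three pieces of $\ell$, that $\ell_{01}(z,y)\le\ell(z,y)$ for all $z\in[-1,1]$, $y\in\{-1,1\}$, so that it suffices to bound $\bE_{(X,Y)\sim\cD}[\ell(h_{q_*}(X),Y)]$; on the same domain I would record that $\ell(\cdot,y)$ is $4$-Lipschitz (each piece has derivative of magnitude at most $4$) and takes values in $[0,5]$, which are the two constants producing the $16$ and the $15$ in the statement.

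The crux is the use of optimality. With $p_0\propto\exp(-\tfrac{\lambda_1}{\lambda_2}\|\theta\|_2^2)$ one has $R_{\lambda_1,\lambda_2}(q)=\lambda_2\KL(q\|p_0)+\mathrm{const}$, so taking $\cD$ to be the empirical distribution of $S$, the objective in \eqref{eq:rerm} is the empirical surrogate risk plus $\lambda_2\KL(\cdot\|p_0)$, up to an additive constant independent of $q$. Since $h_{q^{\circ}}(x)y\ge 1/2$ on $\mathrm{supp}(\cD)$, the empirical surrogate risk of $q^{\circ}$ vanishes, and from $\cL(q_*)\le\cL(q^{\circ})$ together with nonnegativity of the empirical risk of $q_*$ I would extract both
\[ \KL(q_*\|p_0)\ \le\ \KL(q^{\circ}\|p_0)\ \le\ \tfrac12 \qquad\text{and}\qquad \frac1n\sum_{i=1}^n\ell(h_{q_*}(x_i),y_i)\ \le\ \lambda_2\,\KL(q^{\circ}\|p_0). \]

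Setting $R_0\defeq\KL(q^{\circ}\|p_0)\le\tfrac12$, the first inequality gives $h_{q_*}\in\cF_{\KL}(R_0)$, a data-independent class. I would then apply Lemma~\ref{lem:uniform_bound} to the loss class $\{(x,y)\mapsto\ell(h_q(x),y):h_q\in\cF_{\KL}(R_0)\}$ with $C=5$; the contraction lemma (Lemma~\ref{lem:contraction}) with Lipschitz constant $4$, followed by Lemma~\ref{lem:kl-rademacher} (applicable since $R_0\le\tfrac12$), bounds its empirical Rademacher complexity by $4\cdot 2\sqrt{R_0/n}$. Combining with the empirical-risk estimate and $\ell_{01}\le\ell$ yields, with probability at least $1-\delta$,
\[ \bE_{(X,Y)\sim\cD}[\ell_{01}(h_{q_*}(X),Y)]\ \le\ \lambda_2 R_0+16\sqrt{\tfrac{R_0}{n}}+15\sqrt{\tfrac{1}{2n}\log\tfrac{2}{\delta}}, \]
which is exactly the claim.

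The routine parts are the case analysis and the Lipschitz/boundedness bookkeeping for the smoothed hinge loss. The one step needing care is the second one: recognizing that the quadratic-plus-entropy regularizer equals $\lambda_2\KL(\cdot\|p_0)$ up to a constant, and then squeezing out of the single inequality $\cL(q_*)\le\cL(q^{\circ})$ \emph{both} the empirical-risk bound and the complexity bound $\KL(q_*\|p_0)\le\KL(q^{\circ}\|p_0)$ --- the latter being precisely what lets us place $h_{q_*}$ in $\cF_{\KL}(R_0)$ with the sharp radius $R_0$, so that Lemma~\ref{lem:kl-rademacher} delivers the $\sqrt{\KL(q^{\circ}\|p_0)/n}$ rate rather than a cruder $\sqrt{1/n}$.
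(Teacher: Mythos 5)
Your proposal is correct and follows essentially the same route as the paper's proof: bound $\ell_{01}\le\ell$, extract both $\KL(q_*\|p_0)\le\KL(q^{\circ}\|p_0)$ and the empirical-risk bound from $\cL(q_*)\le\cL(q^{\circ})$ using $\ell(h_{q^{\circ}}(x_i),y_i)=0$, then combine Lemma~\ref{lem:contraction} (Lipschitz constant $4$), Lemma~\ref{lem:kl-rademacher}, and Lemma~\ref{lem:uniform_bound} with $C=5$. Your explicit remark that the regularizer equals $\lambda_2\KL(\cdot\|p_0)$ only up to an additive constant independent of $q$ (which cancels in the comparison) is a slightly more careful rendering of the same step the paper performs.
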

\begin{proof}
We first estimate a radius $R$ to satisfy $q_* \in \cF_{\KL}(R)$.
Note that the regularization term of objective $\cL(q)$ is $\lambda_2 \KL(q \| p_0)$ and that 
$\ell(h_{q^{\circ}}(x_i),y_i)=0$ from the assumption on $q^{\circ}$ and the definition of the smoothed hinge loss.
Since $\cL(q_*) \leq \cL(q^{\circ})$, we get
\begin{align}
\KL(q_* \| p_0) 
&\leq \frac{1}{\lambda_2}\cL(q^{\circ}) 
= \KL(q^{\circ}\| p_0), \label{eq:kl-bound}\\
\frac{1}{n}\sum_{i=1}^n \ell( h_{q_*}(x_i),y_i) 
&\leq \cL(q^{\circ})
= \lambda_2 \KL(q^{\circ} \| p_0). \label{eq:loss-bound}
\end{align}
Especially, setting $R= \KL(q^{\circ}\| p_0)$, we see $q_* \in \cF_{\KL}(R)$.

We next define the set of composite functions of loss and mean field neural networks as follows:
\begin{equation}
\cF(R) = \{ (x,y) \in \cX \times \cY \longmapsto \ell( h(x), y) \mid~h \in \cF_{\KL}(R) \}. \label{eq:loss-with-nn}    
\end{equation} 

Since $\ell(z,y)$ is $4$-Lipschitz continuous with respect to $z$, we can estimate the Rademacher complexity $\rademacher_S(\cF(R))$ by using 
Lemma \ref{lem:contraction} with $\phi_i(\cdot) = \ell(\cdot,y_i)$ as follows:
\begin{align}
\rademacher_S(\cF(R))
&= \bE_\sigma\left[ \sup_{ h \in\cF_{\KL}(R)}\frac{1}{n}\sum_{i=1}^n\sigma_i \ell(h(x_i),y_i) \right] \notag\\
&\leq 4\bE_\sigma\left[ \sup_{ h \in\cF_{\KL}(R)}\frac{1}{n}\sum_{i=1}^n\sigma_i h(x_i) \right] \notag\\
&= 4\rademacher_{\{x_i\}_{i=1}^n}(\cF_{\KL}(R)) \notag\\
&\leq 8 \sqrt{\frac{R}{n}}, \label{eq:rademacher-eval}
\end{align}
where we used Lemma \ref{lem:kl-rademacher} for the last inequality.

From the boundedness assumption on $h_q$, we have $0 \leq \ell (h_q(x),y) \leq 5$ for $\forall q \in \cP_2$.
Applying Lemma \ref{lem:uniform_bound} with $\cF = \cF(R)$, we have with probability at least $1-\delta$,
\begin{align*}
\bE_{(X,Y)\sim \cD}[ \ell_{01}(h_{q_*}(X),Y)] 
&\leq \bE_{(X,Y)\sim \cD}[ \ell(h_{q_*}(X),Y)] \\
&\leq \frac{1}{n} \sum_{i=1}^n \ell( h_{q_*}(x_i),y_i)
+ 2\rademacher_S(\cF(R)) + 15\sqrt{\frac{1}{2n} \log\frac{2}{\delta}}\\
&\leq 
\lambda_2 \KL(q^{\circ} \| p_0) + 16 \sqrt{\frac{R}{n}} + 15\sqrt{\frac{1}{2n} \log\frac{2}{\delta}} \\
&= \lambda_2 \KL(q^{\circ} \| p_0) + 16 \sqrt{\frac{\KL(q^{\circ} \| p_0)}{n}} + 15\sqrt{\frac{1}{2n} \log\frac{2}{\delta}},
\end{align*}
where we used $\ell_{01}(z,y) \leq \ell(z,y)$, (\ref{eq:loss-bound}) and (\ref{eq:rademacher-eval}).
\end{proof}

This theorem results in the following corollary:
\begin{corollary}
Suppose the same assumptions in Theorem \ref{thm:gen-bound-classification} hold.
Moreover, we set $\lambda_1 = \lambda/\sqrt{n}$ $(\lambda > 0)$ and $\lambda_2 = 1/\sqrt{n}$.
Then, the following bound holds with the probability at least $1-\delta$ over the choice of training examples,
\[ \bE_{(X,Y)\sim \cD}[ \ell_{01}(h_{q_*}(X),Y)] 
\leq \frac{\KL(q^{\circ}\| p_0^\prime)}{\sqrt{n}} + 16 \sqrt{\frac{\KL(q^{\circ} \| p_0^\prime)}{n}} + 15\sqrt{\frac{1}{2n} \log\frac{2}{\delta}}, \]
where $p_0^\prime$ is the Gaussian distribution in proportion to $\exp(-\lambda \|\cdot\|_2^2)$.
\end{corollary}

\subsection{Generalization Bound on the Regression Problem}
We here give a generalization bound for the regression problems.
We consider the squared loss $\ell(z,y)=0.5(z-y)^2$ and the bounded label $\cY \subset [-1,1]$.

\begin{theorem}\label{thm:gen-bound-regression}
Let $\cD$ be a distribution over $\cX \times \cY$. 
Suppose there exists a true distribution $q^{\circ} \in \cP_2$ satisfying $y = h_{q^{\circ}}(x)$ for $\forall (x,y) \in \mathrm{supp}(\cD)$ and $\KL(q^{\circ} \| p_0) \leq 1/2$.
Let $S=\{(x_i,y_i)\}_{i=1}^n$ be training examples independently sampled from $\cD$.
Suppose $|h_\theta(x) | \leq 1$ holds for $\forall (\theta,x) \in \Omega\times \cX$.
Then, for the minimizer $q_*\in \cP_2$ of the problem (\ref{eq:rerm}), it follows that with probability at least $1-\delta$ over the choice of $S$, 
\[ \bE_{(X,Y)\sim \cD}[ \ell(h_{q_*}(X),Y)] 
\leq \lambda_2 \KL(q^{\circ} \| p_0) + 8 \sqrt{\frac{\KL(q^{\circ} \| p_0)}{n}} + 6\sqrt{\frac{1}{2n} \log\frac{2}{\delta}}. \]
\end{theorem}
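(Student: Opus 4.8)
The plan is to follow the same template as the proof of Theorem~\ref{thm:gen-bound-classification}, replacing the smoothed hinge loss by the squared loss and tracking the relevant constants. First I would observe that, since the regularizer $R_{\lambda_1,\lambda_2}(q)$ equals $\lambda_2\KL(q\|p_0)$ and the true distribution $q^{\circ}$ satisfies $y=h_{q^{\circ}}(x)$ on $\mathrm{supp}(\cD)$, we have $\ell(h_{q^{\circ}}(x_i),y_i)=0$ for all $i$, hence $\cL(q^{\circ})=\lambda_2\KL(q^{\circ}\|p_0)$. By optimality, $\cL(q_*)\leq\cL(q^{\circ})$, which gives the two bounds
\[
\KL(q_*\|p_0)\leq\tfrac{1}{\lambda_2}\cL(q^{\circ})=\KL(q^{\circ}\|p_0),\qquad
\tfrac1n\sum_{i=1}^n\ell(h_{q_*}(x_i),y_i)\leq\cL(q_*)\leq\lambda_2\KL(q^{\circ}\|p_0).
\]
Setting $R=\KL(q^{\circ}\|p_0)\leq\tfrac12$, the first inequality shows $q_*\in\cF_{\KL}(R)$, so it suffices to control the uniform deviation over $\cF_{\KL}(R)$.

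Next I would define the composite class $\cF(R)=\{(x,y)\mapsto\ell(h(x),y): h\in\cF_{\KL}(R)\}$ as in~\eqref{eq:loss-with-nn} and estimate its Rademacher complexity. The key observation specific to the squared loss is that on the relevant range: since $|h_q(x)|\leq1$ and $\cY\subset[-1,1]$, the map $z\mapsto\tfrac12(z-y)^2$ has derivative $z-y$ bounded in absolute value by $2$, hence is $2$-Lipschitz in $z$ on $[-1,1]$; and moreover $0\leq\ell(h_q(x),y)\leq\tfrac12\cdot 2^2=2$. Applying the contraction lemma (Lemma~\ref{lem:contraction}) with $\phi_i(\cdot)=\ell(\cdot,y_i)$ and then Lemma~\ref{lem:kl-rademacher} gives
\[
\rademacher_S(\cF(R))\leq 2\,\rademacher_{\{x_i\}_{i=1}^n}(\cF_{\KL}(R))\leq 4\sqrt{\tfrac{R}{n}}.
\]

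Finally, I would invoke the uniform bound (Lemma~\ref{lem:uniform_bound}) with $\cF=\cF(R)$ and $C=2$: with probability at least $1-\delta$,
\[
\bE_{(X,Y)\sim\cD}[\ell(h_{q_*}(X),Y)]
\leq\tfrac1n\sum_{i=1}^n\ell(h_{q_*}(x_i),y_i)+2\rademacher_S(\cF(R))+3\cdot 2\sqrt{\tfrac{1}{2n}\log\tfrac{2}{\delta}},
\]
and substituting the three bounds above yields exactly
\[
\bE_{(X,Y)\sim\cD}[\ell(h_{q_*}(X),Y)]\leq\lambda_2\KL(q^{\circ}\|p_0)+8\sqrt{\tfrac{\KL(q^{\circ}\|p_0)}{n}}+6\sqrt{\tfrac{1}{2n}\log\tfrac{2}{\delta}}.
\]
There is no serious obstacle here; the proof is essentially a transcription of the classification argument. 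The only points requiring care are (i) correctly reading off the Lipschitz constant ($2$, using boundedness of both $h_q$ and the labels) and the uniform bound on the squared loss ($2$), which feed into the constants $8$ and $6$ in the statement, and (ii) checking that $R\leq\tfrac12$ so that Lemma~\ref{lem:kl-rademacher} is applicable — this is guaranteed by the assumption $\KL(q^{\circ}\|p_0)\leq\tfrac12$.
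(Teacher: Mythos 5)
Your proposal is correct and follows essentially the same argument as the paper's own proof: the same reduction to $q_*\in\cF_{\KL}(R)$ via optimality, the same contraction step with Lipschitz constant $2$ giving $\rademacher_S(\cF(R))\leq 4\sqrt{R/n}$, and the same application of the uniform bound with $C=2$, yielding the constants $8$ and $6$. No gaps.
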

\begin{proof}
The proof is very similar to that of Theorem \ref{thm:gen-bound-classification}.
Note that $\ell(h_{q^{\circ}}(x_i),y_i)=0$ from the assumption on $q^{\circ}$ and that inequalities (\ref{eq:kl-bound}) and (\ref{eq:loss-bound}) hold in this case too.
Hence, setting $R= \KL(q^{\circ}\| p_0)$, we see $q_* \in \cF_{\KL}(R)$.

Since $\ell(z,y)$ is $2$-Lipschitz continuous with respect to $z \in [-1,1]$ for any $y \in \cY \subset [-1,1]$, 
we can estimate the Rademacher complexity $\rademacher_S(\cF(R))$ of $\cF(R)$ (defined in (\ref{eq:loss-with-nn})) in the same way as Theorem \ref{thm:gen-bound-classification}:
\begin{align}
\rademacher_S(\cF(R)) \leq 4 \sqrt{\frac{R}{n}}. \label{eq:rademacher-eval2}
\end{align}

From the boundedness assumption on $h_q$ and $\cY$, we have $0 \leq \ell (h_q(x),y) \leq 2$ for $\forall q \in \cP_2$.
Hence, applying Lemma \ref{lem:uniform_bound} with $\cF = \cF(R)$, we have with probability at least $1-\delta$,
\begin{align*}
\bE_{(X,Y)\sim \cD}[ \ell(h_{q_*}(X),Y)] 
&\leq \frac{1}{n} \sum_{i=1}^n \ell( h_{q_*}(x_i),y_i)
+ 2\rademacher_S(\cF(R)) + 6\sqrt{\frac{1}{2n} \log\frac{2}{\delta}}\\
&\leq \lambda_2 \KL(q^{\circ} \| p_0) + 8 \sqrt{\frac{\KL(q^{\circ} \| p_0)}{n}} + 6\sqrt{\frac{1}{2n} \log\frac{2}{\delta}},
\end{align*}
where we used (\ref{eq:loss-bound}) and (\ref{eq:rademacher-eval2}).
\end{proof}

This theorem results in the following corollary:
\begin{corollary}
Suppose the same assumptions in Theorem \ref{thm:gen-bound-regression} hold.
Moreover, we set $\lambda_1 = \lambda/\sqrt{n}$ $(\lambda > 0)$ and $\lambda_2 = 1/\sqrt{n}$.
Then, the following bound holds with the probability at least $1-\delta$ over the choice of training examples,
\[ \bE_{(X,Y)\sim \cD}[ \ell(h_{q_*}(X),Y)] 
\leq \frac{\KL(q^{\circ}\| p_0^\prime)}{\sqrt{n}} + 8 \sqrt{\frac{\KL(q^{\circ} \| p_0^\prime)}{n}} + 6\sqrt{\frac{1}{2n} \log\frac{2}{\delta}}, \]
where $p_0^\prime$ is the Gaussian distribution in proportion to $\exp(-\lambda \|\cdot\|_2^2)$.
\end{corollary}

\bigskip
\section{Additional Discussions}

\subsection{Efficient Implementation of PDA}
\label{sec:implementation}
Note that similar to SGD, Algorithm \ref{alg:pda} only requires gradient queries (and additional Gaussian noise); in particular, a weighted average $\overline{g}^{(t)}$ of functions $g^{(t)}$ is updated and its derivative with respect to parameters is calculated.
In the case of empirical risk minimization, this procedure can be implemented as follows.
We use $\{w_i\}_{i=1}^n$ (initialized as zeros) to store the weighted sums of $\partial_z\ell(h_{\tilde{\Theta}^{(t)}}(x_{i_t}), y_{i_t})$.
At step $t$ in the outer loop, $w_{i_t}$ is updated as 
\[ w_{i_t} \leftarrow w_{i_t} + t \partial_z\ell(h_{\tilde{\Theta}^{(t)}}(x_{i_t}), y_{i_t}). \]
The average $\nabla_{\theta_r}\overline{g}^{(t)}(\Theta^{(k)})$ can then be computed as 
\[ \frac{2}{\lambda_2 (t+2)(t+1)}\sum_{i=1}^n w_i \partial_\theta h(\theta_r^{(k)},x_{i}) + \frac{2\lambda_1 t}{\lambda_2(t+2)} \theta_r^{(k)}, \]
where we use $\{\theta_r^{(k)}\}_{k=1}^M$ to denote parameters $\Theta^{(k)}$ at step $k$ of the inner loop.
This formulation makes Algorithm \ref{alg:pda} straightforward to implement.

In addition, the PDA algorithm can also be implemented with mini-batch update, in which a set of data indices $I_t = \{i_{t,1},\ldots,i_{t,b}\} \subset \{1,2,\ldots,n\}$ is selected per outer loop step instead of one single index $i_t$.
Due to the reduced variance, mini-batch update can stabilize the algorithm and lead to faster convergence. 
Our theoretical results in the sequel trivially extends to the mini-batch setting.

\subsection{Extension to Multi-class Classification}
\label{sec:multiclass-classification}
We give a natural extension of PDA method to multi-class classification settings.
Let $\cC$ denote the finite set of all class labels and $|\cC|$ denote its cardinality.
For multi-class classification problems, we define a component $h(\theta,x)$ of an ensemble as follows.
Let $a_r \in \bR^{|\cC|}$ and $b_r \in \bR^{d}$ ($r \in \{1,\ldots, M\}$) be parameters for output and input layers, respectively, and set $\theta_r = (a_r, b_r)$ and $\Theta = \{\theta_r\}_{r=1}^M$.
Then, we define $h_{\theta_r}(x) = h(\theta,x) = \sigma_2( a_r \sigma_1 (b_r^\top x) )$\footnote{Here, $a_r \sigma_1 (b_r^\top x)$ is a scalar $\sigma_1 (b_r^\top x)$ times a vector $a_r$.} which is a neural network with one hidden neuron, and denote 
\[ h_\Theta(x) = \frac{1}{M}\sum_{r=1}^M h_{\theta_r}(x). \]
Note that $h_\Theta(x)$ is a natural two-layer neural network with multiple outputs.
Suppose that each parameter $\theta_r$ follows $q(\theta)\rd\theta$. Then the mean field limit can be defined as 
\[ h_q(\cdot) = \bE_{\theta \sim q}[h_\theta(\cdot)]: \bR^d \to \bR^{|\cC|}. \]

Let $\ell(z,y)$ ($z=\{z_y\}_{y \in \cC} \in \bR^{|\cC|}, y \in \cC$) be the loss for multi-class classification problems. 
A typical choice is the cross-entropy loss with the soft-max activation, that is 
\[ \ell(z,y) = -\log \frac{\exp( z_y ) }{ \sum_{y' \in \cC}\exp( z_{y'})} = -z_y + \log  \sum_{y' \in \cC}\exp( z_{y'}). \]

In this case, the functional derivative of $\ell( h_q(x), y)$ with respect to $q$ is 
\[ - h_y(\theta,x) + \frac{\sum_{y' \in \cC} \exp( h_{q,y'}(x) ) h_{y'}(\theta,x)}{ \sum_{y' \in \cC} \exp( h_{q,y'}(x) ) }\]
where we supposed the outputs of $h_\theta$ and $h_q$ are also indexed by $\cC$.
Hence, the counterpart of $g^{(t)}$ in Algorithm \ref{alg:da} in this setting is 
\begin{equation*}
     g^{(t)} = - h_{y_t}(\cdot,x_t) 
     + \frac{\sum_{y' \in \cC} \exp( h_{q^{(t)},y'}(x_t) ) h_{y'}(\cdot,x_t)}{ \sum_{y' \in \cC} \exp( h_{q^{(t)},y'}(x_t) ) } 
     + \lambda_1\|\cdot\|_2^2. 
\end{equation*}
Using this function, the DA method for multi-class classification problems can be obtained in the same manner as Algorithm \ref{alg:da}.
Moreover, its discretization can be also immediately derived by replacing the function $\overline{g}^{(t)}$ used in Algorithm \ref{alg:pda} with 
\begin{equation*}
    \overline{g}^{(t)}
    = \frac{2}{\lambda_2(t+2)(t+1)}\sum_{s=1}^t s \left(
    - h_{y_s}(\cdot,x_s) 
    + \frac{\sum_{y' \in \cC} \exp( h_{\tilde{\Theta}^{(s)},y'}(x_s) ) h_{y'}(\cdot,x_s)}{ \sum_{y' \in \cC} \exp( h_{\tilde{\Theta}^{(s)},y'}(x_s) ) } 
    + \lambda_1\|\cdot\|_2^2 \right).
\end{equation*}

In the case of empirical risk minimization, we can adopt an efficient implementation as done in Section \ref{sec:implementation}.
We use $\{w_{i,y}\}_{i\in \{1,\ldots,n\}, y \in \cC}$ (initialized as zeros) to store the coefficients of $h_{y}(\cdot, x_i)$. 
At step $t$ in the outer loop, $w_{i_t,y}$ ($y \in \cC$) are updated as 
\begin{equation*}
    w_{i_t, y} 
    \leftarrow 
    \begin{cases}
        w_{i_t, y} + t \left( -1 + \frac{\exp( h_{\tilde{\Theta}^{(t)},y}(x_{i_t}) ) }{ \sum_{y' \in \cC} \exp( h_{\tilde{\Theta}^{(t)},y'}(x_{i_t}) ) }  \right) & y=y_{i_t}, \\
        w_{i_t, y} + t \frac{\exp( h_{\tilde{\Theta}^{(t)},y}(x_{i_t}) )}{ \sum_{y' \in \cC} \exp( h_{\tilde{\Theta}^{(t)},y'}(x_{i_t}) ) } & y \neq y_{i_t}.
    \end{cases}
\end{equation*}

Then, $\nabla_{\theta_r}\overline{g}^{(t)}(\Theta^{(k)})$ can be computed as 
\[ \frac{2}{\lambda_2 (t+2)(t+1)}\sum_{i=1}^n \sum_{y\in \cC} w_{i,y} \partial_\theta h_y(\theta_r^{(k)},x_{i}) + \frac{2\lambda_1 t}{\lambda_2(t+2)} \theta_r^{(k)}, \]
where we use $\{\theta_r^{(k)}\}_{k=1}^M$ to denote parameters $\Theta^{(k)}$ at step $k$ of the inner loop.

Finally, we remark that while we here utilize a simple network $h_\theta(x)$ to recover a normal two-layer neural network, it is also possible to use deep narrow networks or narrow convolutional neural networks as a component $h_\theta(x)$; in other words $h_\Theta$ can represent an ensemble of various types of small network. While such extensions are not covered by our current theoretical analysis, they may achieve better practical performance.

\subsection{Correspondence with Finite-dimensional Dual Averaging Method}
\label{sec:original-da-method}
We explain the correspondence between the finite-dimensional dual averaging method developed by \cite{nesterov2005smooth,nesterov2009primal,xiao2009dual} and our proposed method (Algorithm \ref{alg:da}); our goal here is to provide an intuitive understanding of the derivation of Algorithm \ref{alg:da} in the context of the classical dual averaging method.  

First, we introduce the (regularized) dual averaging method \citep{nesterov2009primal,xiao2009dual} in a more general form for solving the regularized optimization problem on the finite-dimensional space. Let $w \in \bR^m$ be a parameter, $l(w,z)$ be a convex loss in $w$, where $z$ is a random variable which represents an example, and $\Psi(w)$ is a regularization function. 
Then, the problem solved by the dual averaging method is given as 
\begin{equation*}
    \min_{w \in \bR^m}\left\{ \bE_z[ l(w,z) ] + \Psi(w) \right\}.
\end{equation*}
Let $\{w^{(s)}\}_{s=1}^t$ and $\{f^{(s)}\}_{s=1}^{t} = \{\partial_w l(w^{(s)},z_s) \}_{s=1}^t$ be histories of iterates and stochastic gradients. The subproblems to produce the next iterate in the dual averaging method is designed by using the strongly convex function $d(w)$ and positive hyperparameters $\{\alpha_s\}_{s=1}^\infty$ and $\{\beta_s\}_{s=2}^\infty$.
Specifically, the next iterate $w^{(t+1)}$ is defined as the minimizer of the following problem in which the loss function is linearized and weighted sum of which is taken over the history:
\begin{equation}\label{eq:subprob-original-da}
    \min_{w \in \bR^m}\left\{ \sum_{s=1}^{t} \alpha_s f^{(s)\top}w + \sum_{s=1}^{t} \alpha_s \Psi(w) + \beta_{t+1} d(w) \right\}.
\end{equation}

Next, we consider our problem setting of optimizing the probability distribution and reformulate the subproblem (\ref{eq:subprob}) solved in Algorithm \ref{alg:da} as follows:
\begin{equation}
\min_{q \in \cP_2} \left\{ \bE_q\Big[ \sum_{s=1}^t sg^{(s)} \Big] + \sum_{s=1}^t s \lambda_2\bE_q[\log(q)] 
+ (t+1)\lambda_2\bE_q[\log(q)]  \right\}, \label{eq:subprob2}    
\end{equation} 
By comparing (\ref{eq:subprob-original-da}) and (\ref{eq:subprob2}), we arrive at the following correspondence: $\alpha_s = \beta_s = s,~f^{(s)} \sim g^{(s)},~d(w)=\Psi(w) \sim \lambda_2\bE_q[\log(q)]$.
We note that in our problem setting the expectation by $q$ can be seen as an inner product with the integrand and $\lambda_2\bE_q[\log(q)]$ is also set to $d(w)$ because \textit{the negative entropy acts as a strongly convex function} (Lemma \ref{lem:st-conv}).
\bigskip 
\section{Additional Experiments}
\label{app:additional_experiment}

\subsection{Comparison of Generalization Error} 

\begin{wrapfigure}{R}{0.34\textwidth}  
\centering 
\includegraphics[width=0.34\textwidth]{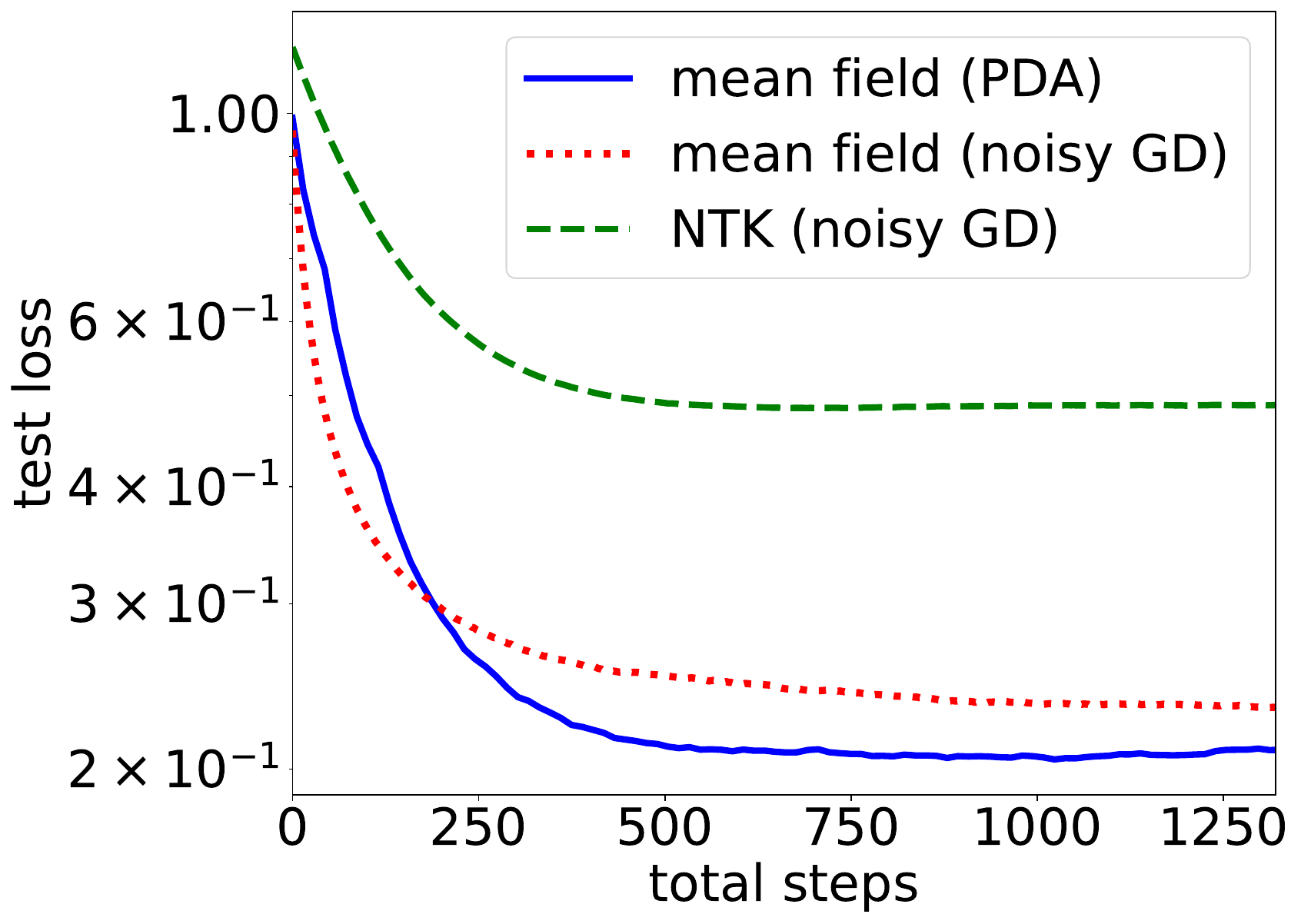}
\vspace{-5.1mm} 
\caption{\small Test error of mean field neural networks ($\alpha=1$) trained with noisy GD (red) and PDA (blue), and network in the kernel regime ($\alpha=1/2$) optimized by GD (green).}  
\label{fig:generalization} 
\vspace{-7.5mm} 
\end{wrapfigure} 

We provide additional experimental results on the generalization performance of PDA. 
We consider empirical risk minimization for a regression problem (squared loss): the input $x_i\sim\mathcal{N}(0,I_p)$, and $f_*$ is a single index model: $f_*(x) = \mathrm{sign}(\langle w_*,\vx\rangle)$.
W set $n=1000$, $p=50$, $M=200$, and implement both noisy gradient descent \citep{mei2018mean} using full-batch gradient and our proposed Algorithm~\ref{alg:pda} (PDA) using mini-batch update with batch size 50.

Figure~\ref{fig:generalization} we compare the generalization performance of different training methods: noisy GD and PDA in the mean field regime, and also noisy GD in the kernel regime. We fix the $\ell_2$ and entropy regularization to be the same across all settings: $\lambda_1 = 10^{-2}$, $\lambda_2 = 5\times 10^{-4}$. We set the \textit{total} number of iterations (outer + inner loop steps) in PDA to be the same as GD, and tuned the learning rate for optimal generalization. Observe that
\begin{itemize}[leftmargin=*,topsep=0.75mm,itemsep=0.5mm]
    \item Model with the NTK scaling (green) generalizes worse than the mean field models (red and blue). This is consistent with observations in \citet{chizat2018note}.
\end{itemize} 
\vspace{-1.5mm} 

\begin{itemize}[leftmargin=*,topsep=0.75mm,itemsep=0.5mm]
    \item For the mean field scaling, PDA (under early stopping) leads to slightly lower test error than noisy GD. We intend to further investigate this difference in the generalization performance. (see Appendix \ref{app:generalization_bounds} for generalization bounds of the PDA solution)
\end{itemize}

\subsection{PDA Beyond $\ell_2$ Regularization}
Note that our current formulation~\eqref{eq:regularization} considers $\ell_2$ regularization, which allows us to establish polynomial runtime guarantee for the inner loop via the Log-Sobolev inequality.
As remarked in Section~\ref{sec:convergence-analysis}, our global convergence analysis can easily be extended to H\"older-smooth gradient via the convergence rate of Langevin algorithm given in \citet{erdogdu2020convergence}. 
Although we do not provide details for this extension in the current work (due to the use of \citet{vempala2019rapid}), we empirically demonstrate one of its applications in handling $\ell_p$ regularized objectives for $p>1$ in the following form,
\begin{align}
R^p_{\lambda_1,\lambda_2}(q) \defeq \lambda_1 \bE_q[\|\theta\|_p^p] + \lambda_2 \bE_q[\log(q)].
\label{eq:regularizer-general}
\end{align}
\citet{erdogdu2020convergence} cannot directly cover the non-smooth $\ell_1$ regularization, but we can still obtain relatively sparse solution by setting $p$ close to 1. 
Intuitively speaking, when the underlying task exhibits certain low-dimensional or sparse structure, we expect a sparsity-promoting regularization to achieve better generalization performance. 

Figure \ref{fig:sparsity}(a) demonstrates the advantage of $L_p$-norm regularization for $p<2$ in empirical risk minimization, when the target function exhibits sparse structure. We set $n=1000, p=50$; the teacher is a multiple-index model ($m=2$) with binary activation, and parameters of each neuron are $1$-sparse. We optimize the student model with PDA (warm-start), where we set $\lambda_1=10^{-2}$, $\lambda_2=10^{-4}$, and vary the norm penalty $p$ from 1.01 to 2. 
Note that smaller $p$ results in favorable generalization due to the induced sparsity. 
On the other hand, we expect the benefit of sparse regularization to diminish when the target function is not sparse. This intuition is confirmed in \ref{fig:sparsity}(b), where we control the target sparsity by randomly selecting $r$ parameters to be non-zero, and we define $s=r/d$ to be the sparsity level. 
Observe that the benefit of sparsity-inducing regularization (smaller $p$) is more prominent under small $s$ (brighter color), which indicates a sparse target function. 

\begin{figure}[htb!]
\centering
\begin{minipage}[t]{0.4\linewidth}
\centering
{\includegraphics[width=1\textwidth]{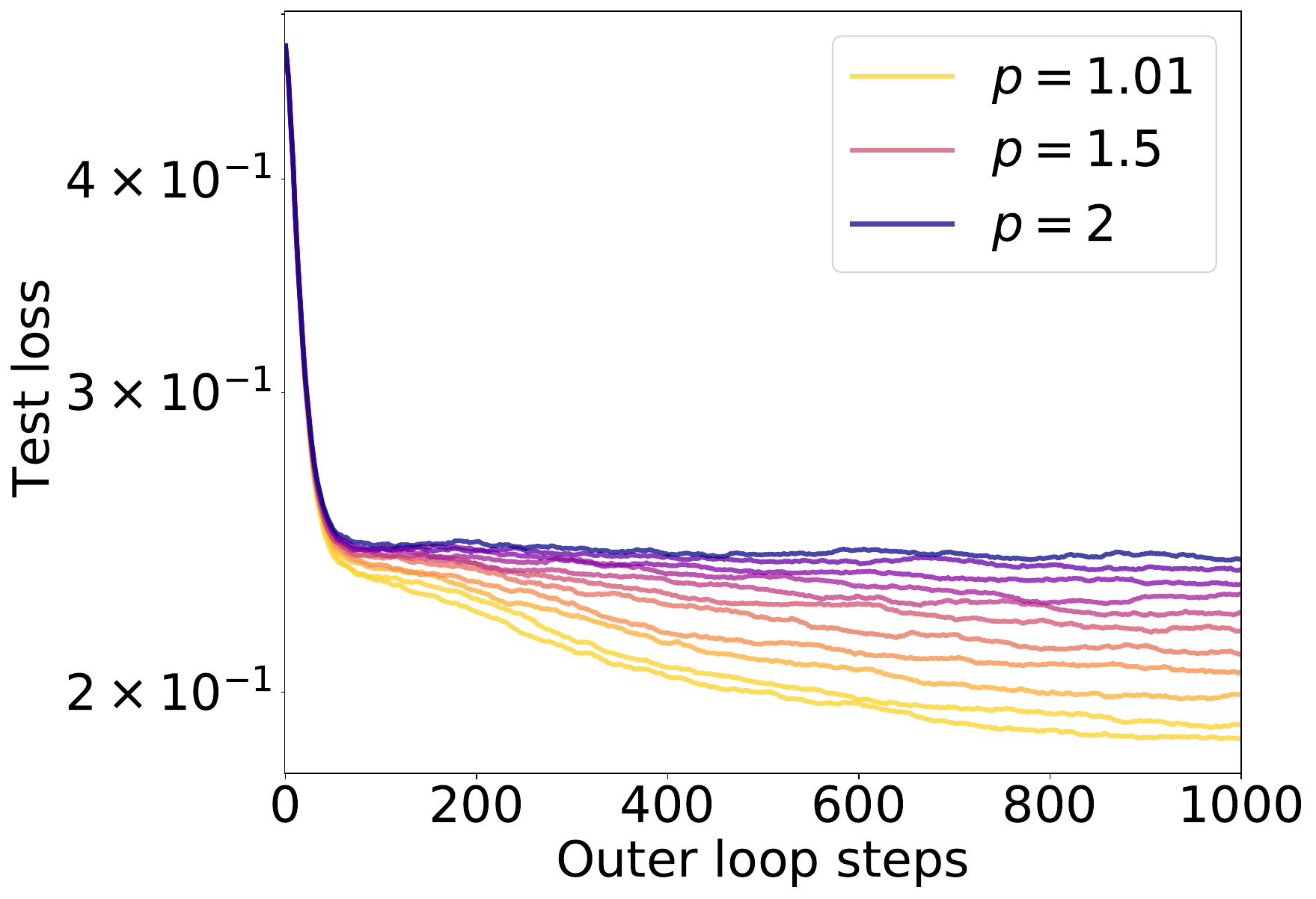}} \\ \vspace{-0.mm}
\small (a) Impact of $L_p$ regularization.
\end{minipage}
\begin{minipage}[t]{0.4\linewidth}
\centering
{\includegraphics[width=0.93\textwidth]{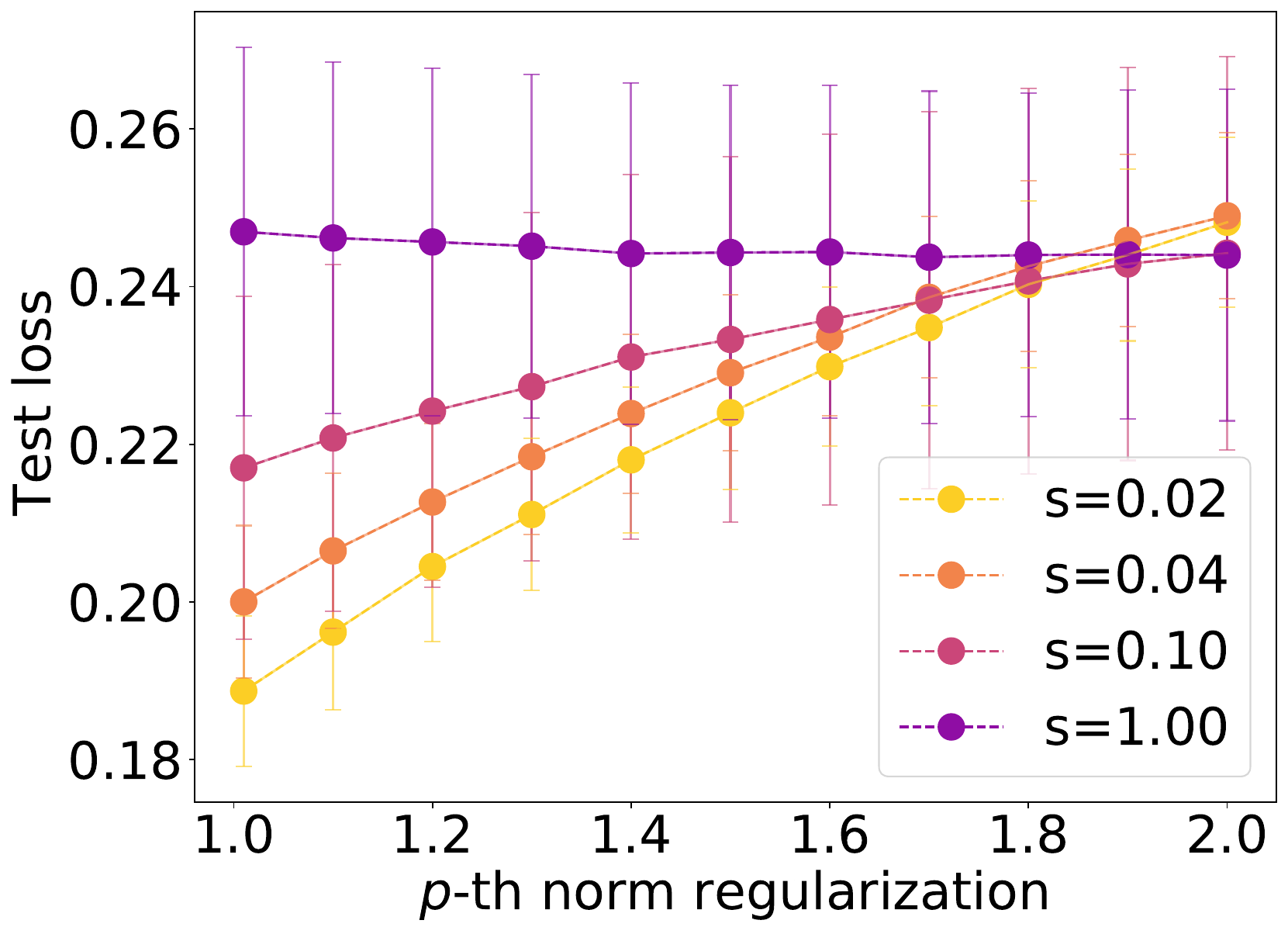}} \\ \vspace{-0.mm}
\small (b) Generalization under sparse teacher. 
\end{minipage} 
\caption{\small PDA with general $\ell_p$ regularizer (objective~\eqref{eq:regularizer-general}). (a) Generalization error vs.~training time in learning a 1-sparse target function. (b) generalization error vs.~sparsity of the target function $s$.} 
\label{fig:sparsity} 
\end{figure}

\subsection{On the Role of Entropy Regularization}

Our objective \eqref{eq:rerm} includes an entropy regularization with magnitude $\lambda_2$. In this section we illustrate the impact of this regularization term. In Figure~\ref{fig:entropy}(a) we consider a synthetic 1D dataset ($n=15$) and plot the output of a two-layer tanh network with 200 neurons trained by SGD and PDA to minimize the \textit{squared loss} till convergence. We use the same $\ell_2$ regularization ($\lambda_1=10^{-3}$) for both algorithms, and for PDA we set the entropic term $\lambda_2=10^{-4}$.  
Observe that SGD with weak regularization (red) almost interpolates the noisy training data, whereas PDA with entropy regularization finds low-complexity solution that is smoother (blue). 

We therefore expect entropy regularization to be beneficial when the labels are noisy and the underlying target function (teacher) is ``simple''. We verify this intuition in Figure~\ref{fig:entropy}(b). We set $n=500$, $d=50$ and $M=500$, and the teacher model is a linear function on the input features. We employ SGD or PDA to optimize the squared error. For both algorithms we use the same $\ell_2$ regularization $\lambda_1=10^{-2}$, 
but PDA includes a small entropy term $\lambda_2=5\times10^{-4}$. We plot the generalization error of the converged model under varying amount of label noise. Note that as the labels becomes more corrupted, PDA (blue) results in lower test error due to the entropy regularization\footnote{Note that entropy regularization is not the only way to reduce overfitting -- such capacity control can also be achieved by proper early stopping or other types of explicit regularization.}. On the other hand, model under the kernel scaling (green) generalizes poorly compared to the mean field models.
Furthermore, Figure~\ref{fig:entropy}(c) demonstrates that entropy regularization can be beneficial under low noise (or even noiseless) cases as well. We construct the teacher model to be a multiple-index model with binary activation. Note that in this setting PDA achieves lower stationary risk across all noise level, and the advantage amplifies as labels are further corrupted.
 
\begin{figure}[htb!]
\centering
\begin{minipage}[t]{0.325\linewidth}
\centering
{\includegraphics[width=0.92\textwidth]{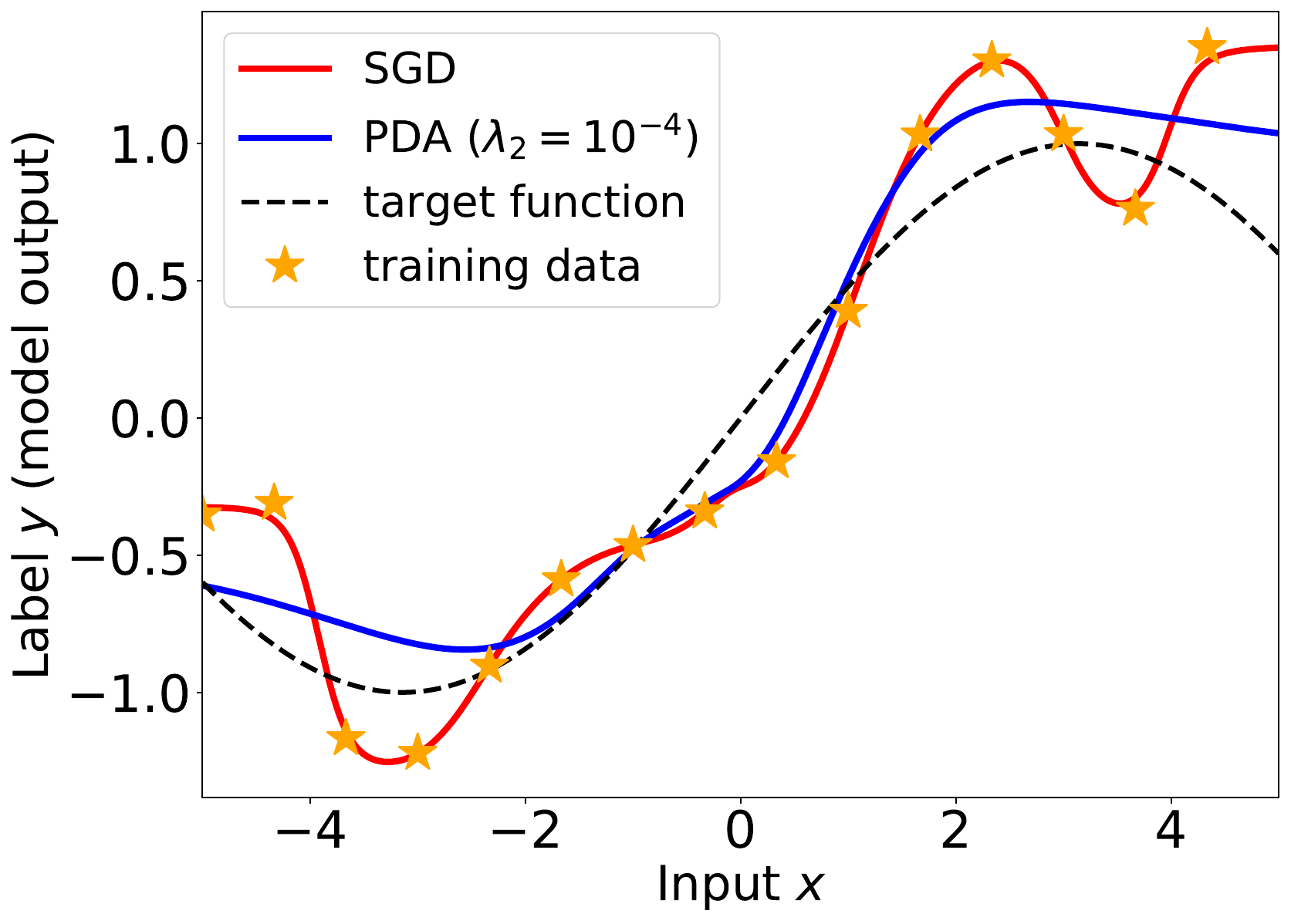}} \\ \vspace{-0.5mm}
\small (a) Impact of entropy regularization \\ (one-dimensional).
\end{minipage}
\begin{minipage}[t]{0.325\linewidth}
\centering
{\includegraphics[width=1.\textwidth]{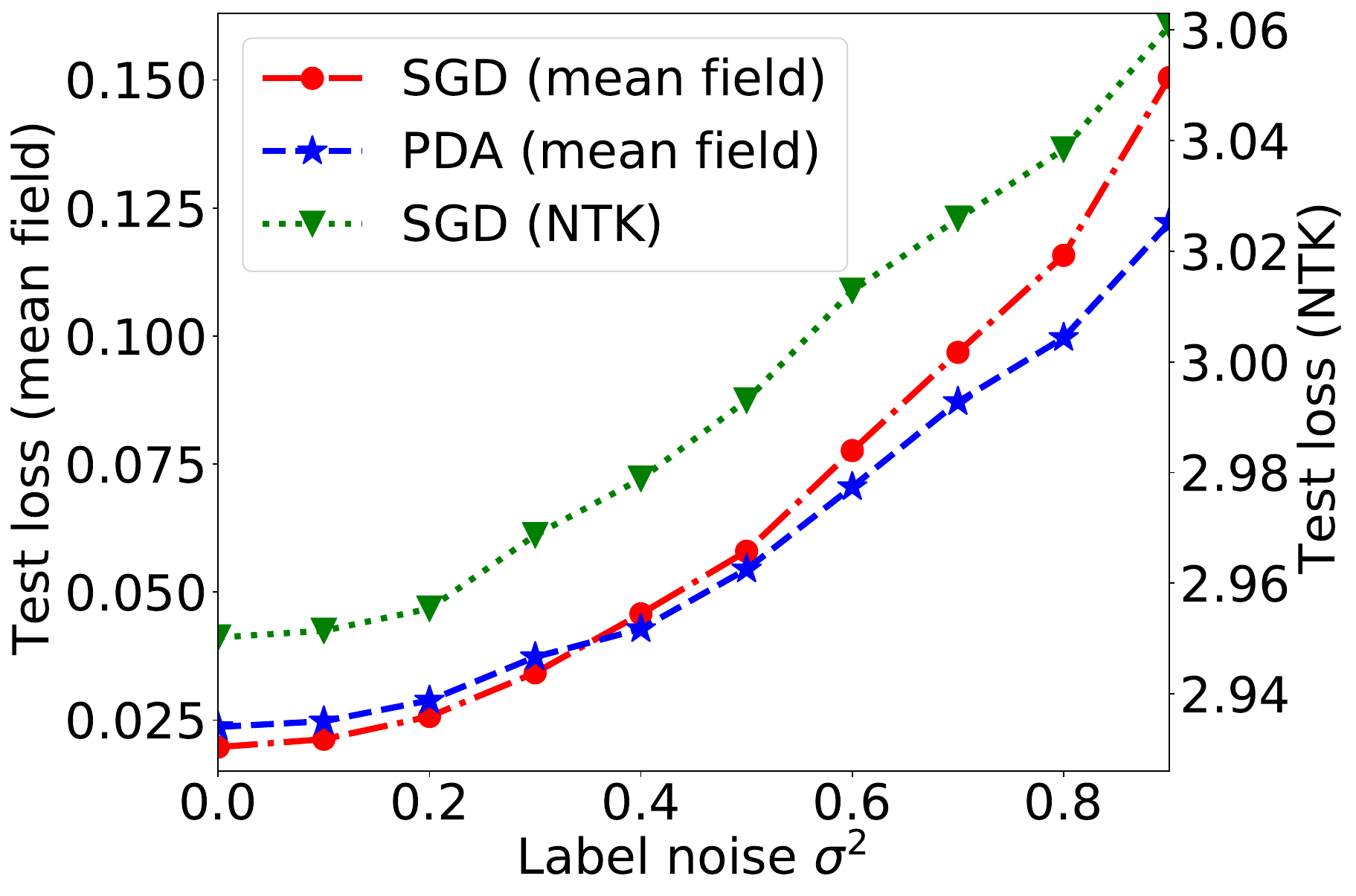}} \\ \vspace{-0.5mm}
\small (b) Stationary risk vs.~label noise \\
(linear teacher). 
\end{minipage} 
\begin{minipage}[t]{0.325\linewidth}
\centering
{\includegraphics[width=1.\textwidth]{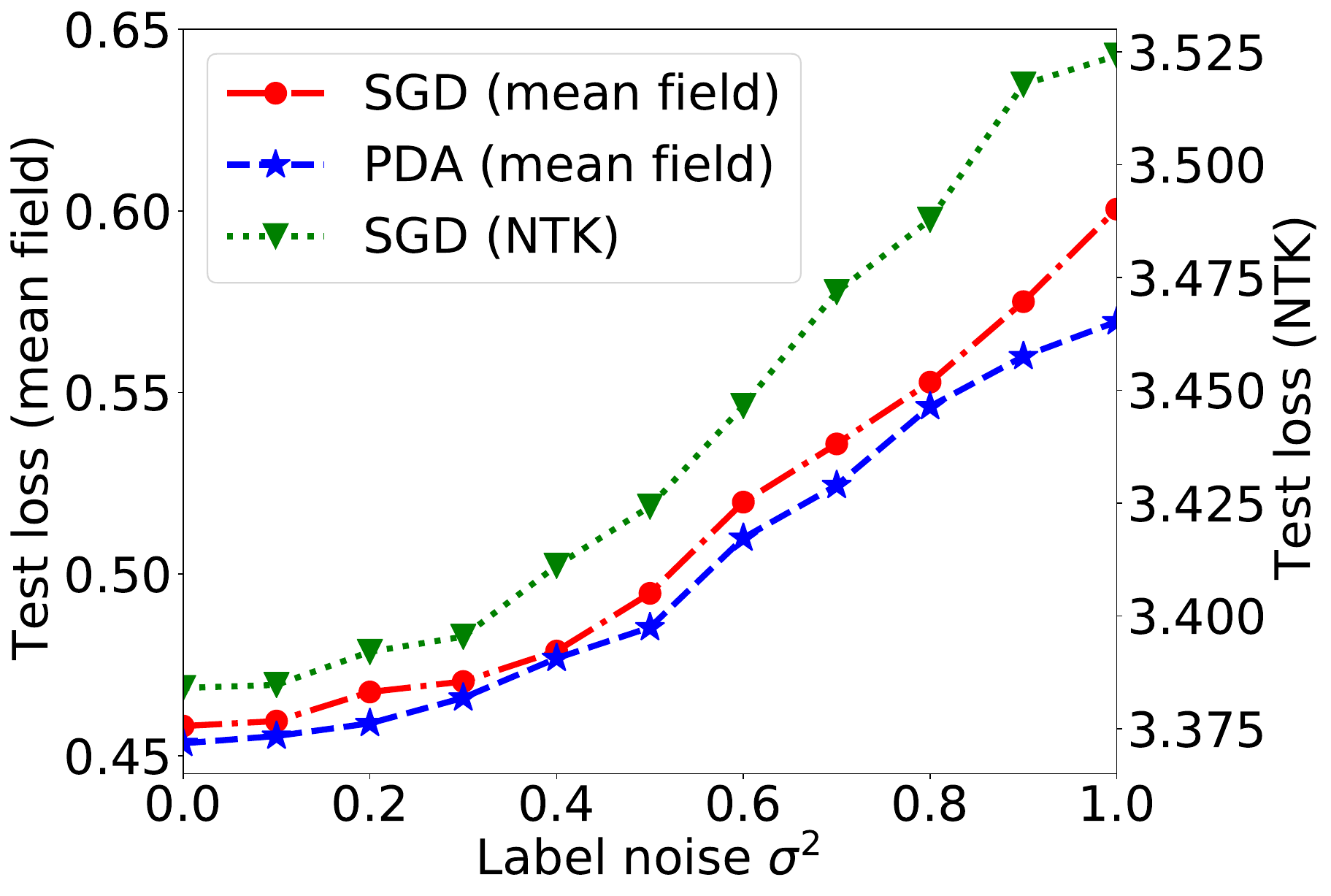}} \\ \vspace{-0.5mm}
\small (c) Stationary risk vs.~label noise \\
(multiple index teacher). 
\end{minipage} 
\vspace{-0.5mm} 
 \caption{\small (a) 1D illustration of the impact of entropy regularization in two-layer tanh network: PDA (blue) finds a smoother solution that does not interpolate the training data due to entropy regularization. 
 (b)(c) Test error of two-layer tanh network trained till convergence. PDA (blue) becomes advantageous compared to SGD (red) when labels become noisy, and the NTK model (green, note that the y-axis is on different scale) generalizes considerably worse than the mean field models.}
\label{fig:entropy} 
\end{figure}

\subsection{Adaptivity of Mean Field Neural Networks}
Recall that one motivation to study the mean field regime (instead of the kernel regime) is the presence of \textit{feature learning}. We illustrate this behavior in a simple student-teacher setup, where the target function is a single-index model with tanh activation. We set $n=500,d=50$, and optimize a two-layer tanh network ($M=1000$), either in the mean field regime using PDA, or in the kernel regime using SGD. For both methods we choose $\lambda_1=10^{-3}$, and for PDA we choose $\lambda_2=10^{-4}$. 

In Figure~\ref{fig:alignment} we plot the the evolution of the cosine similarity between the target vector $w^*$ and the top-5 singular vectors (PC1-5) of the weight matrix during training. In Figure~\ref{fig:alignment}(a) we observe that the mean field model trained with PDA ``adapts'' to the low-dimensional structure of the target function; in particular, the leading singular vector (bright yellow) aligns with the target direction. In contrast, we do not observe such alignment on the network in the kernel regime (Figure~\ref{fig:alignment}(b)), because the parameters do not travel away from the initialization. 
This comparison demonstrates the benefit of the mean field parameterization. 
 
\begin{figure}[htb!]
\centering
\begin{minipage}[t]{0.4\linewidth}
\centering
{\includegraphics[width=0.9\textwidth]{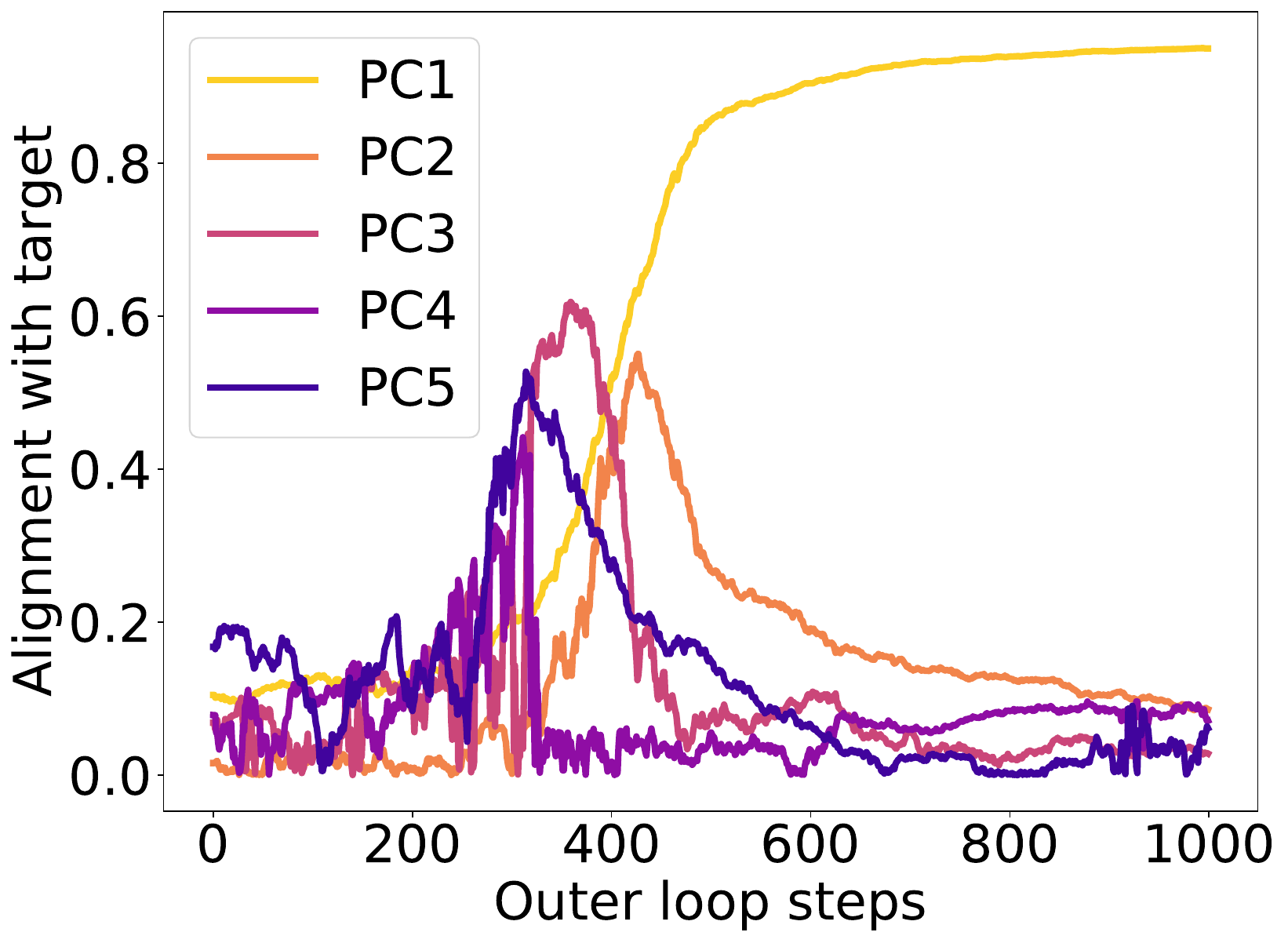}} \\ \vspace{-0.mm}
\small (a) Parameter Alignment (PDA).
\end{minipage}
\begin{minipage}[t]{0.4\linewidth}
\centering
{\includegraphics[width=0.945\textwidth]{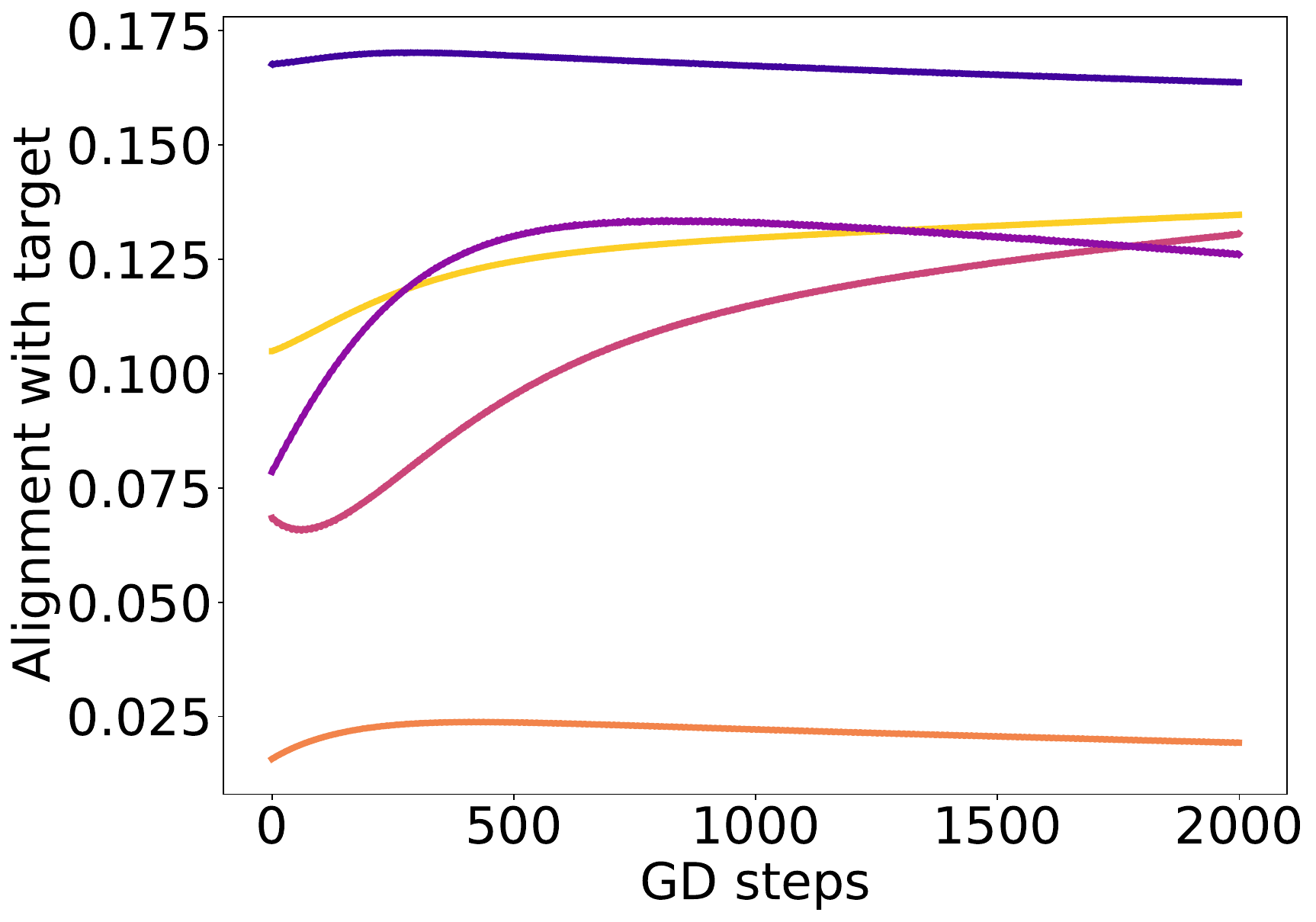}} \\ \vspace{-0.mm}
\small (b) Parameter Alignment (NTK). 
\end{minipage} 
\caption{\small Cosine similarity between the target vector $w^*$ and the top-5 singular vectors (PC1-5) of the weight matrix during training. The learned parameters ``align'' with the target function under the mean field parameterization (a), but not the NTK parameterization (b). } 
\label{fig:alignment} 
\vspace{2.5mm}
\end{figure} 

\bigskip

\section{Additional Related Work}
\paragraph{Particle inference algorithms.}
Bayesian inference is another example distribution optimization, in which the objective is to minimize an entropic regularized linear functional.
In addition to the Langevin algorithm, several interacting particle methods have been developed for this purpose, such as particle mirror descent (PMD) \citep{dai2016provable}, Stein variational gradient descent (SVGD) \citep{liu2016stein}, and ensemble Kalman sampler \citep{garbuno2020interacting}, and the corresponding mean field limits have been analyzed in \citet{lu2019scaling,ding2019ensemble}.
We remark that naive gradient-based method on the probability space often involves computing the probability of particles for the entropy term (e.g., kernel density estimation in PMD), which presents significant difficulty in constructing particle inference algorithms.
In contrast, our proposed algorithm avoids this computational challenge due to its algorithmic structure. 

\paragraph{Optimization of probability distributions.}
Parallel to our work, several recent papers also proposed optimization methods over space of probability measures by adapting finite-dimensional convex optimization theory. 
\cite{ying2020mirror}, \cite{kent2021frank} and \cite{chizat2021convergence} extend the Mirror descent method, Frank-Wolfe method, and (accelerated) Bregman proximal gradient method to the optimization of probability measures, respectively.
In addition, \cite{hsieh2019finding} developed an {entropic mirror descent} algorithm for generative adversarial networks, and \cite{chu2019probability} analyzed {probability functional descent} in the context of variational inference and reinforcement learning.  

\paragraph{The kernel regime and beyond.}
The neural tangent kernel model \citep{jacot2018neural} describes the learning dynamics of neural network under appropriate scaling.
Such description builds upon the linearization of the learning dynamics around its initialization, and (quantitative) global convergence guarantees of gradient-based methods for neural networks can be shown for regression problems \citep{du2018gradient,allen2019convergence,zou2020gradient,nitanda2020optimal} as well as classification problems \citep{cao2019generalization,nitanda2019gradient,ji2019polylogarithmic}.

However, due to the linearization, the NTK model cannot explain the presence of ``feature learning'' in neural networks (i.e.~parameters are able to travel and adapt to the structure of the learning problem).
In fact, various works have shown that deep learning is more powerful than kernel methods in terms of approximation and estimation error \citep{suzuki2018adaptivity,ghorbani2019linearized,suzuki2019deep,schmidt2020nonparametric,ghorbani2020neural,imaizumi2020advantage}, and in certain settings, neural networks optimized with gradient-based methods can outperform the NTK model (or more generally any kernel methods) in terms of generalization error or excess risk \citep{allen2019can,ghorbani2019limitations,yehudai2019power,bai2019beyond,allen2020backward,li2020learning,suzuki2020generalization,daniely2020learning}.

\end{document}